\documentclass[11pt,a4paper]{article}
\usepackage[margin=0.75in]{geometry}
\usepackage[utf8]{inputenc}
\usepackage{cmap}
\usepackage[T1]{fontenc}
\usepackage{lmodern}
\usepackage{booktabs, multirow, pifont}
\usepackage[square, numbers]{natbib}     
\usepackage[colorlinks=true]{hyperref}
\usepackage{amssymb, amsthm, amsmath, amsfonts}
\usepackage{authblk, dsfont, mathrsfs, color, bm, enumitem, mathtools}
\usepackage{subfigure, graphicx, transparent}
\usepackage{dirtytalk}
\graphicspath{{figures/}}
\usepackage[dvipsnames]{xcolor}
\usepackage{comment}
\hypersetup{colorlinks, linkcolor={red!80!black}, citecolor={blue!80!black}, urlcolor={ForestGreen}}
\usepackage{mdframed}

\theoremstyle{definition}
\newtheorem{theorem}{Theorem}[section]
\newtheorem{definition}[theorem]{Definition}
\newtheorem{assumption}[theorem]{Assumption}
\newtheorem{proposition}[theorem]{Proposition}
\newtheorem{corollary}[theorem]{Corollary}
\newtheorem{lemma}[theorem]{Lemma}
\newtheorem{remark}[theorem]{Remark}

\DeclareUnicodeCharacter{211D}{\ensuremath{\mathbb R}}

\DeclarePairedDelimiterX{\inner}[2]{\langle}{\rangle}{#1, #2}

\newcommand{\R}{\mathbb{R}}
\newcommand{\cX}{\mathcal{X}}
\newcommand{\cF}{\mathcal{F}}
\newcommand{\E}{\mathbb{E}}
\newcommand{\Prob}{\mathbb{P}}
\newcommand{\norm}[1]{\left\| #1 \right\|}
\newcommand{\diam}{\operatorname{diam}}
\newcommand{\proj}{\Pi}

\title{Beyond Optimal Rates in Stochastic Optimization: Trajectory-Adaptive Stopping Rules}

\author[1]{Liviu Aolaritei}
\author[3]{Lucas L{\'e}vy}
\author[3]{Francis Bach}
\author[1,2,3]{Michael I. Jordan}

\affil[1]{Department of Electrical Engineering and Computer Sciences, UC Berkeley, USA}
\affil[2]{Department of Statistics, UC Berkeley, USA}
\affil[3]{Inria, {\'E}cole Normale Sup{\'e}rieure, PSL Research University,
France}

\date{}

\begin{document}
\maketitle

\begin{abstract}
Stochastic gradient descent (SGD) is typically analyzed at a deterministic horizon chosen before the algorithm is run, even though practical stopping decisions are made adaptively by inspecting the evolving trajectory. This mismatch creates a fundamental certification problem: fixed-time guarantees do not generally remain valid at data-dependent stopping times, while deterministic horizons derived from worst-case bounds can be highly conservative. We address this problem for strongly convex stochastic optimization by constructing fully observable, trajectory-adaptive upper confidence sequences for the squared distance of the last iterate to the optimizer and the suboptimality of a weighted average. These bounds hold simultaneously over time, attain the optimal $1/t$ decay rate up to iterated-logarithmic factors in the worst case, and adapt to the realized stochastic gradients, allowing SGD to stop as soon as a prescribed accuracy is certified without sacrificing statistical validity. Our approach treats the evolving SGD trajectory as a sequential experiment whose observations provide evidence about the unknown optimization error. To formalize this perspective, we develop new recursive confidence-sequence techniques and a general time-uniform empirical Bernstein inequality for adapted processes with time-varying conditional means and predictable ranges that may grow without bound. We further extend these confidence-sequence constructions to minibatch SGD, with the empirical Bernstein bounds exploiting the realized second-moment structure within each minibatch. Numerical experiments show that the resulting stopping rules can require several orders of magnitude fewer iterations than natural deterministic horizons.
\end{abstract}


\section{Introduction}
\label{sec:intro}

Stochastic optimization is a foundational computational paradigm across machine learning, operations research, statistics, and scientific computing. When objectives are expectations under unknown distributions or finite sums over prohibitively large datasets, stochastic gradient methods make optimization possible using only samples or noisy first-order information \cite{robbins1951stochastic,nemirovski2009robust}. Their scalability has made them the principal computational workhorses of modern machine learning \cite{bottou2018optimization}. Yet a basic operational question remains unresolved: \emph{when should a stochastic optimization algorithm stop?}

Existing theoretical analyses of stochastic gradient descent (SGD) are predominantly \emph{ex ante}. Before the algorithm is run, one specifies a stepsize schedule and a deterministic horizon $T$, and then studies its performance at that horizon. This perspective has produced a deep understanding of stochastic first-order methods: it has identified optimal rates, clarified the roles of averaging and strong convexity, and provided principled guidance for selecting stepsizes \cite{polyak1992acceleration,moulines2011non,rakhlin2012making,lacoste2012simpler}. Its guarantees, however, are attached to horizons chosen independently of the realized run. In other words, they answer the forward-looking question of what can be guaranteed if SGD is run until a predetermined time $T$, but not the online question of whether the trajectory observed so far provides enough evidence that the algorithm should stop.

In practice, however, this online question is unavoidable. Practitioners alter stepsizes when progress slows and stop a run when the observed trajectory appears to have stabilized. Yet the quantities that would directly certify optimization progress are usually unknown: the objective is often a population loss defined by an expectation under an unknown distribution, and its optimizer and optimal value are unknown. Consequently, neither the true suboptimality nor the distance to the optimizer is directly observable during the run. Practitioners therefore rely on observable proxies, such as empirical losses computed from validation samples \cite{prechelt1998early}, and use these proxies together with the evolving stochastic trajectory to decide whether to tune, continue, or stop the algorithm. These proxies are themselves random and dependent on the particular samples used to construct them. They may be informative, but favorable behavior of a proxy is not itself a certificate for suboptimality or distance to the optimizer.

Existing fixed-horizon guarantees do not certify this adaptive mode of operation: a high-probability statement proved at a deterministic time $T$ does not, in general, remain valid at a data-dependent stopping time selected after repeatedly inspecting the iterates, stochastic gradients, or validation losses \cite{feller1940statistical,anscombe1954fixed,robbins1952some}. The stochastic-oracle assumptions may remain unchanged, but the stopping time is now coupled to the random fluctuations revealed along the observed run. Among the many iterations examined, an unusually favorable fluctuation may be precisely what triggers termination. Therefore, a guarantee calibrated for one predetermined time does not automatically account for this selection over time.

One can preserve validity by fixing the stopping horizon in advance. Given a fixed-time high-probability bound and a target accuracy $\varepsilon$, one may solve for a time $T_\varepsilon$ at which the bound falls below $\varepsilon$, and then run SGD for exactly that many iterations. This produces a valid certified stopping rule, but typically a very conservative one. Because $T_\varepsilon$ is obtained by inverting a worst-case guarantee before the trajectory is observed, it reflects the most adverse behavior permitted by the assumptions rather than the behavior of the realized run. In practice, SGD can perform substantially better than these guarantees predict, sometimes by several orders of magnitude, so the resulting deterministic horizon may require many more iterations than would be sufficient on the realized run. This limitation is not resolved by choosing a stepsize schedule that attains the optimal worst-case rate: such optimality describes performance over a broad problem class, whereas stopping asks whether the particular run has already reached a prescribed accuracy. Existing theory is therefore indispensable for ex ante algorithm design, but it is not itself an online stopping mechanism.

This leaves a clear gap between rigorous but potentially conservative fixed-horizon guarantees and adaptive early-stopping procedures that generally lack a certificate. Bridging this gap requires more than another fixed-time convergence bound. It requires a theory in which the evolving trajectory is itself allowed to determine, with statistical validity, when sufficient progress has been made. This suggests a different perspective on stochastic optimization: a running SGD trajectory can be viewed as a \emph{sequential experiment}. Its iterates and stochastic gradients do not merely update the decision variable; they also provide evidence about the unknown optimization error. The stopping problem then becomes one of determining whether the observed trajectory has produced enough evidence to certify that a prescribed accuracy has been reached. This leads to the central question of the paper:

\begin{center}
\emph{Can SGD certify, from its own realized trajectory, when it should stop?}
\end{center}

The statistical theory underlying this question has deep roots in anytime-valid sequential inference, which studies how to construct inferential guarantees that remain valid when observations are examined sequentially and the decision to continue or stop depends on what has already been observed \cite{ville1939,wald1945sequential,robbins1952some,ramdas2023game}. In the present setting, the relevant objects are confidence sequences: observable, trajectory-dependent upper bounds that, with high probability, bound a target quantity, such as suboptimality or distance to the optimizer, simultaneously at every iteration \cite{darling1967confidence,lai1976confidence,howard2021time}. SGD may therefore be monitored continuously and stopped at the first time the bound falls below a prescribed tolerance, while the certificate remains valid at the selected stopping time. Constructing sharp confidence sequences for SGD requires optimization-specific tools because the unknown errors evolve recursively with the iterates and are driven by the same stochastic gradients from which the certificates must be constructed.

A first step toward meeting this challenge was taken by \citet{aolaritei2025stopping}, who developed anytime-valid confidence sequences and certified stopping rules for SGD under general convexity and smooth nonconvexity. Specifically, that work constructed fully observable certificates for suboptimality in the convex setting and for stationarity in the smooth nonconvex setting. These certificates allow the algorithm to be monitored continuously and stopped at a data-dependent time while retaining a valid guarantee.

In the present paper, we substantially expand this line of work by developing a considerably sharper theory for strongly convex stochastic optimization that exploits the contractive structure of the dynamics. Retaining trajectory adaptivity and time-uniform validity in this setting requires new constructions, including a recursive confidence-sequence argument and an empirical Bernstein theory for adapted processes with time-varying conditional means and growing predictable ranges. The resulting fully observable certificates bound the squared distance of the last iterate to the optimizer and the suboptimality of a weighted-average iterate. They recover the canonical $1/t$ rate, up to the generally unavoidable $\log\log t$ factors associated with time-uniform validity \cite{darling1967iterated,howard2021time}, and adapt to the stochastic behavior observed along the realized trajectory. In our numerical experiments, the corresponding stopping rules terminate orders of magnitude earlier than natural deterministic horizons obtained by inverting conventional fixed-time guarantees.


\subsection{Problem formulation}
\label{subsec:problem:form}

We consider stochastic optimization problems of the form
\[
    \min_{x \in \cX} f(x),
\]
where $\cX \subseteq \R^d$ is a closed convex set and $f : \cX \to \R$ is differentiable.\footnote{The analysis extends directly to the nondifferentiable setting by replacing gradients and stochastic gradients with subgradients and stochastic subgradients, respectively.} Throughout, we assume that the objective satisfies the following condition.

\begin{assumption}[Strong convexity]
\label{assump:strong-convexity}
The function $f$ is $\mu$-strongly convex on $\cX$, meaning that
\[
    f(y)
    \ge
    f(x)
    +
    \inner{\nabla f(x)}{y - x}
    +
    \frac{\mu}{2} \norm{y - x}^2,
    \qquad
    \forall x, y \in \cX.
\]
\end{assumption}

Under Assumption~\ref{assump:strong-convexity}, the minimizer is unique; we denote it by $x^\star$. We study projected stochastic gradient descent (SGD), defined recursively by
\begin{equation}
\label{eq:proj-sgd-update}
    x_{t+1}
    =
    \proj_{\cX}\bigl(x_t - \eta_t g_t\bigr),
    \qquad t \ge 1,
\end{equation}
and initialized from $x_1 \in \cX$ almost surely. Here, $\proj_{\cX}$ denotes the Euclidean projection onto $\cX$, $\{g_t\}_{t \ge 1}$ are stochastic gradients, and $\{\eta_t\}_{t \ge 1}$ is a stepsize sequence. We work with the natural filtration
\[
    \cF_0 \coloneqq \sigma(x_1),
    \qquad
    \cF_t \coloneqq \sigma(x_1, g_1, \ldots, x_t, g_t),
    \qquad t \ge 1.
\]
Thus, $\cF_t$ contains the complete SGD trajectory observed through iteration $t$. We assume that each stepsize is selected using only information available before the corresponding stochastic gradient is observed.

\begin{assumption}[Predictable stepsizes]
\label{assump:stepsize}
For every $t \ge 1$, the stepsize $\eta_t$ is $\cF_{t-1}$-measurable.
\end{assumption}

Assumption~\ref{assump:stepsize} permits both deterministic and data-dependent stepsize rules, provided they depend only on the past trajectory. It includes constant and diminishing schedules, as well as adaptive choices based on previously observed iterates and stochastic gradients (e.g., AdaGrad-type schedules and related adaptive methods \cite{duchi2011adaptive,ward2020adagrad}). The stochastic gradients satisfy the following standard unbiasedness and tail conditions.

\begin{assumption}[Stochastic gradients]
\label{assump:stochastic-gradients}
For every $t \ge 1$, the stochastic gradient $g_t$ satisfies:
\begin{itemize}
    \item[(i)] \emph{Unbiasedness:}
    \[
        \E[g_t \mid \cF_{t-1}]
        =
        \nabla f(x_t).
    \]

    \item[(ii)] \emph{Conditional sub-Gaussian noise:}
    defining $\xi_t \coloneqq g_t - \nabla f(x_t)$, there exists a constant $\sigma^2 > 0$ such that, for every $\lambda \in \R$ and every $u \in \R^d$,
    \begin{equation}
    \label{eq:cond-subg-noise}
        \E\left[
            \exp \bigl( \lambda \inner{u}{\xi_t} \bigr)
            \,\middle|\,
            \cF_{t-1}
        \right]
        \le
        \exp\left(
            \frac{\lambda^2 \sigma^2 \norm{u}^2}{2}
        \right)
        \qquad\text{almost surely.}
    \end{equation}
\end{itemize}
\end{assumption}

Assumption~\ref{assump:stochastic-gradients}(i) is the standard conditional unbiasedness requirement in stochastic approximation \cite{nemirovski2009robust}. Assumption~\ref{assump:stochastic-gradients}(ii) imposes a conditional sub-Gaussian bound on the gradient noise while allowing its distribution to vary along the trajectory. This condition is satisfied by Gaussian and many light-tailed noise models. It also holds whenever $\norm{g_t} \le G$ almost surely, in which case one may take $\sigma^2 = G^2$ by the conditional Hoeffding lemma \cite{hoeffding1963probability}.

To ensure that the resulting contraction factors are positive, we impose an upper bound on the stepsizes after an initial period.

\begin{assumption}[Upper bound on stepsizes]
\label{assump:stepsize-bound}
There exists a deterministic integer $t_0 \ge 1$ such that
\[
    0 < \eta_t < \frac{1}{2 \mu}
    \qquad\text{almost surely for every }t \ge t_0.
\]
\end{assumption}

Assumption~\ref{assump:stepsize-bound} is satisfied by the standard diminishing stepsize schedules used for strongly convex stochastic optimization. It is imposed only from the deterministic time $t_0$ onward, allowing arbitrary predictable stepsizes during the initial phase of the algorithm. The index $t_0$ serves as the starting point of our time-uniform analysis.

Our goal is to provide \emph{anytime-valid, trajectory-adaptive guarantees} for the progress of SGD. Given the stochastic gradients, predictable stepsizes, and observed iterates, we seek upper bounds on suitable performance measures that remain valid simultaneously over the entire run. We formalize this objective through the following definition.

\begin{definition}[Anytime-valid upper confidence sequence]
\label{def:upper-confidence-sequence}
Let $\{Q_t\}_{t \ge t_0}$ be a real-valued stochastic process. For any $\alpha \in (0,1)$, a sequence of random variables $\{U_t(\alpha)\}_{t \ge t_0}$ is an \emph{anytime-valid upper confidence sequence} for $\{Q_t\}_{t \ge t_0}$ at level $1 - \alpha$ if
\[
    \Prob\left(
        \forall t \ge t_0 :\
        Q_t \le U_t(\alpha)
    \right)
    \ge
    1 - \alpha.
\]
\end{definition}

When $Q_t$ represents a performance measure of the current optimization trajectory, such as the squared distance to the optimizer or a notion of suboptimality, Definition~\ref{def:upper-confidence-sequence} provides a sequence of upper bounds that are valid simultaneously over time. Consequently, the bounds may be monitored continuously and evaluated at a time selected from the observed trajectory without invalidating the coverage guarantee. 

To evaluate our confidence sequences, we assume access to the problem constants that enter their construction. In particular, the strong-convexity parameter $\mu$ is assumed known, and $\sigma^2$ is a known valid upper bound on the conditional sub-Gaussian proxy in Assumption~\ref{assump:stochastic-gradients}(ii). In the bounded-gradient setting considered later, we assume that a known constant $G$ satisfies $\norm{g_t}\le G$ almost surely, in which case one may take $\sigma^2=G^2$ as explained earlier. The fully observable constructions additionally require an observable upper bound $R_0$ on $\norm{x_{t_0}-x^\star}^2$; under bounded gradients, the choice $R_0=G^2/\mu^2$ is always valid. By contrast, $x^\star$, $f(x^\star)$, and the performance measures considered below are not assumed known. Importantly, $G$ and $R_0$ need only be valid upper bounds rather than tightly calibrated values: the numerical experiments in Section~\ref{sec:experiments-misspecification} show that, for the empirical-Bernstein confidence sequences, the effect of conservative choices of these quantities is largely transient.

We now introduce the two performance measures considered in this paper and the stopping rules induced by their corresponding upper confidence sequences.

\bigskip
\noindent\textbf{Squared distance to the optimizer.}
Our first performance measure is
\begin{equation}
\label{eq:def-Zt}
    Z_t \coloneqq \norm{x_t - x^\star}^2.
\end{equation}
This quantity measures the accuracy of the current iterate and therefore defines a natural last-iterate stopping criterion: ideally, the algorithm would stop as soon as $Z_t \le \varepsilon$. Since $x^\star$ is unknown, however, this criterion is not directly observable. An anytime-valid upper confidence sequence provides a natural trajectory-adaptive surrogate.

At time $t$, after observing $g_t$, the algorithm computes $x_{t+1}$, so $Z_{t+1}$ is $\cF_t$-measurable. We therefore seek a computable, $\cF_t$-measurable upper bound for $Z_{t+1}$. Suppose that $\bigl\{U^{\mathrm{dist}}_{t+1}(\alpha)\bigr\}_{t \ge t_0}$ is an anytime-valid upper confidence sequence for $\{Z_{t+1}\}_{t \ge t_0}$ with this property. Then $U^{\mathrm{dist}}_{t+1}(\alpha)$ certifies the accuracy of the newly computed iterate $x_{t+1}$, leading to the stopping rule
\begin{equation}
\label{eq:distance-stopping-time}
    \tau^{\mathrm{dist}}_\varepsilon(\alpha)
    \coloneqq
    \inf\left\{
        t \ge t_0 :\
        U^{\mathrm{dist}}_{t+1}(\alpha)
        \le
        \varepsilon
    \right\},
\end{equation}
with the convention $\inf \varnothing \coloneqq \infty$. Consequently, $\tau^{\mathrm{dist}}_\varepsilon(\alpha)$ is a stopping time with respect to $\{\cF_t\}_{t \ge t_0}$. When finite, it returns $x_{\tau^{\mathrm{dist}}_\varepsilon(\alpha)+1}$, the first iterate certified to lie within squared distance $\varepsilon$ of $x^\star$.

\bigskip
\noindent\textbf{Weighted average suboptimality.}
Our second performance measure is the linearly weighted average suboptimality
\begin{equation}
\label{eq:weighted-suboptimality}
    \bar F_t
    \coloneqq
    \frac{1}{S_t}
    \sum_{s = t_0}^t
    s \bigl( f(x_s) - f(x^\star) \bigr),
    \qquad
    S_t
    \coloneqq
    \sum_{s = t_0}^t s,
    \qquad
    t \ge t_0.
\end{equation}
The corresponding weighted average iterate is
\begin{equation}
\label{eq:weighted-average-iterate}
    \bar x_t
    \coloneqq
    \frac{1}{S_t}
    \sum_{s = t_0}^t s\,x_s.
\end{equation}
By convexity of $f$,
\[
    f(\bar x_t) - f(x^\star)
    \le
    \bar F_t.
\]
Thus, an upper bound on $\bar F_t$ certifies the suboptimality of the observable iterate $\bar x_t$. The linearly increasing weights emphasize later iterates and were introduced in \cite{lacoste2012simpler}, together with the stepsize $\eta_t = 2/(\mu(t+1))$, to recover the optimal convergence rate for strongly convex stochastic optimization; we adopt the same stepsize for this performance metric.

Suppose that $\bigl\{U^{\mathrm{sub}}_t(\alpha)\bigr\}_{t \ge t_0}$ is an anytime-valid upper confidence sequence for $\{\bar F_t\}_{t \ge t_0}$ and that $U^{\mathrm{sub}}_t(\alpha)$ is $\cF_t$-measurable for every $t \ge t_0$. Using the same target accuracy $\varepsilon > 0$, define
\begin{equation}
\label{eq:suboptimality-stopping-time}
    \tau^{\mathrm{sub}}_\varepsilon(\alpha)
    \coloneqq
    \inf\left\{
        t \ge t_0 :\
        U^{\mathrm{sub}}_t(\alpha)
        \le
        \varepsilon
    \right\},
\end{equation}
with the convention $\inf \varnothing \coloneqq \infty$. When finite, this rule returns $\bar x_{\tau^{\mathrm{sub}}_\varepsilon(\alpha)}$, the first weighted average iterate certified to be $\varepsilon$-suboptimal.

The validity of the two stopping rules follows directly from the simultaneous-coverage property.

\begin{proposition}[Certified stopping rules]
\label{prop:certified-stopping-rules}
Fix $\alpha \in (0,1)$ and $\varepsilon > 0$.
\begin{itemize}
    \item[(i)] If $\{U^{\mathrm{dist}}_{t+1}(\alpha)\}_{t \ge t_0}$ is an anytime-valid upper confidence sequence for $\{Z_{t+1}\}_{t \ge t_0}$, then
    \[
        \Prob\left(
            \{\tau^{\mathrm{dist}}_\varepsilon(\alpha) < \infty\}
            \cap
            \left\{
                \norm{
                    x_{\tau^{\mathrm{dist}}_\varepsilon(\alpha)+1}
                    - x^\star
                }^2
                >
                \varepsilon
            \right\}
        \right)
        \le
        \alpha.
    \]

    \item[(ii)] If $\{U^{\mathrm{sub}}_t(\alpha)\}_{t \ge t_0}$ is an anytime-valid upper confidence sequence for $\{\bar F_t\}_{t \ge t_0}$, then
    \[
        \Prob\left(
            \{\tau^{\mathrm{sub}}_\varepsilon(\alpha) < \infty\}
            \cap
            \left\{
                f\bigl(
                    \bar x_{\tau^{\mathrm{sub}}_\varepsilon(\alpha)}
                \bigr)
                -
                f(x^\star)
                >
                \varepsilon
            \right\}
        \right)
        \le
        \alpha.
    \]
\end{itemize}
Thus, for each stopping rule separately, with probability at least $1 - \alpha$, whenever the stopping time is finite, the returned iterate satisfies the prescribed accuracy requirement.
\end{proposition}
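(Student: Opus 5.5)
The plan is to show, for each part, that the bad event is contained in the complement of the simultaneous-coverage event from Definition~\ref{def:upper-confidence-sequence}. The bound $\alpha$ then follows by monotonicity of probability. No optional-stopping or martingale argument is needed: all the probabilistic content sits in the time-uniform coverage assumption, and the stopping time only selects a random index inside an event that already controls every index at once.

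For (i), write $\tau = \tau^{\mathrm{dist}}_\varepsilon(\alpha)$ and define the good event
\[
E \coloneqq \bigl\{\forall t \ge t_0 :\ Z_{t+1} \le U^{\mathrm{dist}}_{t+1}(\alpha)\bigr\},
\]
which satisfies $\Prob(E) \ge 1-\alpha$ by assumption. Fix an outcome $\omega \in E \cap \{\tau < \infty\}$. By the definition of $\tau$ as an infimum over integers $t \ge t_0$ with $\inf\varnothing = \infty$, finiteness implies that the infimum is attained. Hence $\tau(\omega) \ge t_0$ and $U^{\mathrm{dist}}_{\tau(\omega)+1}(\alpha)(\omega) \le \varepsilon$. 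Because the inequality on $E$ holds for every deterministic $t \ge t_0$, it holds in particular at $t = \tau(\omega)$. This gives
\[
\norm{x_{\tau+1}-x^\star}^2 = Z_{\tau+1} \le U^{\mathrm{dist}}_{\tau+1}(\alpha) \le \varepsilon .
\]
Thus $\{\tau<\infty\}\cap\{\norm{x_{\tau+1}-x^\star}^2>\varepsilon\} \subseteq E^c$, whose probability is at most $\alpha$.

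Part (ii) is identical. Use the coverage event $E' \coloneqq \{\forall t\ge t_0:\ \bar F_t \le U^{\mathrm{sub}}_t(\alpha)\}$ and the attained infimum defining $\tau^{\mathrm{sub}}_\varepsilon(\alpha)$. Add one further step: the pointwise inequality $f(\bar x_t)-f(x^\star)\le \bar F_t$, which follows from Jensen's inequality and convexity of $f$ since $\bar x_t$ is a convex combination of $x_{t_0},\dots,x_t$ with weights $s/S_t$. This inequality holds for every $t$ and every outcome, so it applies at the random index $t=\tau^{\mathrm{sub}}_\varepsilon(\alpha)(\omega)$. It also requires $\bar x_t\in\cX$, which holds because $\cX$ is convex and all iterates lie in $\cX$ by projection.

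The only point needing care, and the main obstacle, is measure-theoretic. We must justify evaluating the inequality at a random index. This is done pathwise, as above, rather than by conditioning. We must also check that the events involved are measurable. The stopping time $\tau$ is $\cF$-measurable because each $U_{t+1}$ is $\cF_t$-measurable. The event $\{\tau<\infty\}\cap\{\cdots\}$ is the countable union over $t\ge t_0$ of $\{\tau=t\}\cap\{Z_{t+1}>\varepsilon\}$, which is measurable.
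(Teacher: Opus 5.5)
Your proposal is correct and matches the paper's argument. The paper omits the proof as immediate from the simultaneous-coverage property in Definition~\ref{def:upper-confidence-sequence}, deferring to the stopping-time argument of \cite[Theorem~3.1]{aolaritei2025stopping}; your pathwise inclusion of the bad event in the complement of the coverage event, plus $f(\bar x_t)-f(x^\star)\le \bar F_t$ for part (ii), is exactly that argument written out.
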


The result follows immediately from Definition~\ref{def:upper-confidence-sequence}. We therefore omit the proof and refer to \cite[Theorem~3.1]{aolaritei2025stopping}, where the same stopping-time argument is carried out. The aim of the current paper is to construct the upper confidence sequences $U^{\mathrm{dist}}_{t+1}(\alpha)$ and $U^{\mathrm{sub}}_t(\alpha)$ explicitly from the observed iterates, stochastic gradients, and stepsizes $\bigl\{(x_s, g_s, \eta_s)\bigr\}_{s = t_0}^t$.


\subsection{Contributions}
\label{subsec:contributions}

Our main contributions can be summarized as follows:
\begin{itemize}
    \item[(i)] \textbf{Confidence sequences under sub-Gaussian noise.}
    Through a time-uniform Hoeffding argument, we construct fully observable, trajectory-adaptive confidence sequences for the two performance measures introduced above. These confidence sequences can be computed online from the observed stochastic gradients and stepsizes. In the worst case, they match the optimal $1/t$ rate for strongly convex stochastic optimization, up to the iterated-logarithmic cost of time-uniform validity, while remaining sensitive to the observed trajectory and therefore potentially smaller than the corresponding worst-case bounds.
    \begin{enumerate}[label=\arabic*., leftmargin=*]
        \item \textbf{Squared distance to the optimizer.}
        We construct an anytime-valid upper confidence sequence $\bigl\{\widehat U^{\mathrm{dist}}_{t+1}(\alpha)\bigr\}_{t \ge t_0}$ whose stochastic term depends explicitly on the conditional sub-Gaussian variance proxy $\sigma^2$. If $\norm{g_t} \le G$ almost surely and $\eta_t = 1/(\mu t)$, then its worst-case decay satisfies
        \begin{equation}
        \label{eq:contrib-rate-distance}
            \widehat U^{\mathrm{dist}}_{t+1}(\alpha)
            =
            O\left(
                \frac{G^2}{\mu^2}
                \frac{(\log(1/\alpha) + \log\log(e t))}{t}
            \right).
        \end{equation}
    
        \item \textbf{Weighted average suboptimality.\footnote{The same approach can be adapted to other averaging schemes, such as uniform averaging. We use linearly increasing weights because they yield the optimal $O(1/t)$ rate, whereas uniform averaging incurs an additional logarithmic factor and achieves an $O(\log t/t)$ rate \cite{lacoste2012simpler}.}}
        Building on the observable distance confidence sequence, we construct an anytime-valid upper confidence sequence $\bigl\{\widehat U^{\mathrm{sub}}_t(\alpha)\bigr\}_{t \ge t_0}$ for weighted average suboptimality. If $\norm{g_t} \le G$ almost surely and $\eta_t = 2/(\mu(t+1))$, then its worst-case decay satisfies
        \begin{equation}
        \label{eq:contrib-rate-suboptimality}
            \widehat U^{\mathrm{sub}}_t(\alpha)
            =
            O\left(
                \frac{G^2}{\mu}
                \frac{(\log(1/\alpha) + \log\log(e t))}{t}
            \right).
        \end{equation}
    \end{enumerate}
    
    \item[(ii)] \textbf{Refined confidence sequences under bounded stochastic gradients.}
    Through a novel time-uniform empirical Bernstein argument (see item~(iv)), we refine both confidence sequences under the assumption that $\norm{g_t}\le G$ almost surely. The resulting bounds replace the fixed worst-case variance proxy $\sigma^2 = G^2$ in the Hoeffding construction by the realized squared stochastic-gradient magnitudes $\norm{g_t}^2$, together with an additional correction that decays at the faster $t^{-3/2}$ rate, up to iterated-logarithmic factors. The refined confidence sequences are fully observable and preserve the worst-case rates in equations~\eqref{eq:contrib-rate-distance} and~\eqref{eq:contrib-rate-suboptimality}. At the same time, they adapts more sharply to the observed trajectory: in our numerical experiments, the resulting confidence sequences are generally one order of magnitude smaller than their Hoeffding-based counterparts. This refinement constitutes the central contribution of the paper.

    \begin{figure}[t]
    \centering
    \includegraphics[width=\textwidth]{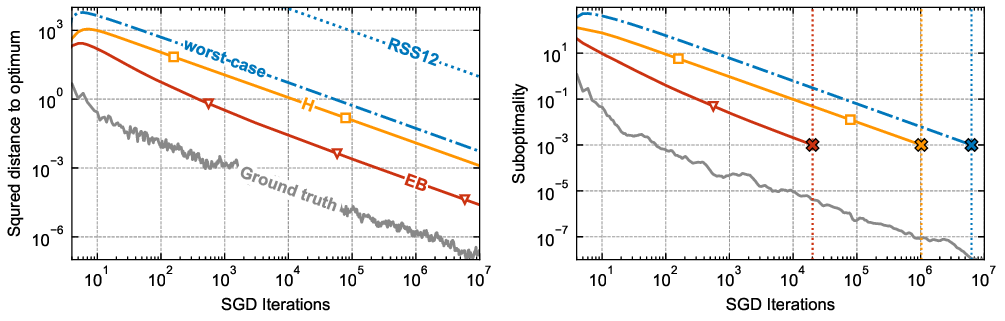}
    \caption{
    Trajectory-adaptive certification of SGD on an SVM problem with minibatch size $b=256$ and confidence level $99\%$.
    \textbf{H} and \textbf{EB} denote our Hoeffding- and empirical-Bernstein-based confidence sequences, respectively;
    \textbf{worst-case} is the trajectory-independent version of our Hoeffding construction obtained by replacing the realized squared stochastic-gradient norms $\norm{g_t}^2$ by their uniform upper bound $G^2$, while \textbf{RSS12} denotes the finite-horizon high-probability bound of \citet{rakhlin2012making}.\protect\footnotemark \ 
    Left: squared distance to the optimizer. Right: suboptimality of the weighted average.
    The gray curves show the corresponding ground truth.
    In the right panel, crosses and vertical lines mark the first iteration at which each bound falls below the target $\varepsilon = 10^{-3}$, and hence the time at which that bound first certifies the prescribed accuracy.
    Further details are provided in Section~\ref{sec:experiments}.
    }
    \label{fig:figure-introduction}
    \end{figure}
    \footnotetext{The bound of \citet{rakhlin2012making} is simultaneous over all $t \le T$, but requires the horizon $T$ to be fixed in advance. More recent fully time-uniform guarantees, such as \citet{pham2025time}, remove the need to prescribe $T$ but are numerically more conservative in this comparison.}

    \item[(iii)] \textbf{Minibatch extensions.}
    We extend both constructions to minibatch SGD with $b$ conditionally independent stochastic gradients per iteration. For the Hoeffding confidence sequences, minibatching reduces the conditional sub-Gaussian variance proxy from $\sigma^2$ to $\sigma^2/b$. For the empirical Bernstein confidence sequences, we develop a minibatch analogue that pools the observed quadratic variation of the individual stochastic gradients and adapts to the largest eigenvalue of the realized second-moment matrix of each minibatch. The minibatch empirical Bernstein boundary has a leading term that improves at the usual $1/\sqrt b$ rate, together with a lower-order correction that improves at the faster $1/b$ rate. In our numerical experiments, the improvement relative to the corresponding worst-case bounds grows with the minibatch size, reaching between two and three orders of magnitude for the larger minibatch sizes considered. The gap relative to existing ex ante guarantees with explicit constants can be substantially larger still; see Figure~\ref{fig:figure-introduction}.

    \item[(iv)] \textbf{A time-uniform empirical Bernstein inequality with growing predictable range.}
    To support the refinement in item~(ii), we establish a general time-uniform empirical Bernstein inequality for adapted processes with time-varying conditional means and predictable increment ranges that may grow without bound. This feature is essential for our SGD application: in both the squared-distance and weighted-average suboptimality analyses, the relevant predictable ranges can scale as $\sqrt t$ under the worst-case convergence rate. Existing empirical Bernstein confidence-sequence constructions, such as \citet[Theorem~4]{howard2021time}, allow the conditional means to vary but assume a fixed range scale, and therefore do not directly cover this setting. When the predictable range evolves over time, the admissible exponential parameter evolves with it. Starting from the pointwise exponential inequality of \citet[Lemma~4.1]{fan2015exponential}, we overcome this difficulty through predictable truncation and simultaneous stitching over dyadic scales of the observed quadratic process and the running predictable range. The resulting explicit boundary adapts jointly to both quantities and is nondecreasing in each of them.
\end{itemize}


\subsection{Related work}

We first discuss the statistical foundations of our approach and their historical connection to stochastic approximation, before turning to the relevant background in strongly convex stochastic optimization.

\medskip
\noindent\textbf{Anytime-valid inference and stochastic approximation.}
The statistical framework underlying our analysis is anytime-valid sequential inference, which develops measures of evidence and uncertainty that remain valid under continuous monitoring and data-dependent stopping. Its motivating pathology is classical: a procedure may be statistically valid at every predetermined sample size and yet lose validity when observations are repeatedly inspected and sampling stops after a favorable fluctuation. An early illustration appears in Feller's 1940 discussion of extra-sensory perception, where repeated sampling could turn chance fluctuations into apparently significant evidence \cite{feller1940statistical}. Anscombe later famously described this practice as ``sampling to a foregone conclusion'' \cite{anscombe1954fixed}, while Robbins placed the same phenomenon within the broader theory of sequential experimentation \cite{robbins1952some}. The martingale foundations go back to Ville \cite{ville1939}, while Wald developed the classical theory of sequential hypothesis testing \cite{wald1945sequential}. Subsequent work by Darling, Robbins, Siegmund, Lai, and others developed confidence sequences that are valid at any and all times, boundary-crossing methods, and tests of power one \cite{darling1967confidence,darling1968some,robbins1970statistical,robbins1970boundary,lai1976confidence}. After remaining a comparatively small niche for decades, these ideas reemerged rapidly around 2017 under the modern framework of \emph{safe anytime-valid inference} \cite{ramdas2023game}. During 2018--2020, several independent lines of work with closely aligned motivations and techniques developed the modern theory of \emph{e-values}, and the term itself was jointly adopted in its present measure-theoretic sense in 2020; see \cite{vovk2021evalues,shafer2021testing,grunwald2024safe,ramdas2025hypothesis} and the references therein. The terminology is recent, but the underlying objects trace back to likelihood ratios, nonnegative martingales, and Ville's betting interpretation.

A central technical ingredient in this framework is the construction of time-uniform boundaries that control a stochastic process uniformly over time. Such boundaries are commonly obtained from exponential supermartingales through \emph{mixtures} or \emph{stitching}. The method of mixtures integrates over the exponential tuning parameter; in the literature on self-normalized processes, this technique is also known as pseudo-maximization and provides a way to adapt to a random intrinsic time \cite{de2004self,de2007pseudo,de2009self}. Stitching instead allocates tuning parameters and error probabilities across successive intrinsic-time epochs and combines the resulting bounds through a union bound. Epoch-based arguments of this kind go back at least to \citet{darling1967iterated}, while the approach was developed systematically for modern confidence sequences by \citet{howard2021time}. Our empirical Bernstein analysis follows the stitching route, but must also accommodate a predictable range that evolves with the SGD trajectory; we therefore stitch simultaneously over the observed quadratic process and the running range scale.

This history also includes a close connection to stochastic approximation. Robbins and Monro introduced stochastic approximation \cite{robbins1951stochastic}, while the Robbins--Siegmund almost-supermartingale theorem later became a standard tool for establishing the convergence of stochastic approximation algorithms \cite{robbins1971convergence}. Lai likewise contributed to adaptive stochastic approximation jointly with Robbins \cite{lai1979adaptive}. The two traditions therefore share important intellectual origins, but their modern theories developed largely separately. \citet{aolaritei2025stopping} brought these perspectives together by constructing trajectory-adaptive confidence sequences and certified stopping rules for convex and smooth nonconvex SGD. The present paper continues this line of work in the strongly convex setting, where contractivity permits substantially sharper certificates but also requires new recursive and self-normalized sequential arguments.

\medskip
\noindent\textbf{Strongly convex stochastic optimization.}
Classical work by Polyak and Juditsky established the asymptotic optimality of trajectory averaging under suitable regularity conditions \cite{polyak1992acceleration}. Building on this asymptotic theory, \citet{moulines2011non} gave nonasymptotic analyses of standard and averaged SGD for smooth convex objectives, showing that inverse-time stepsizes attain the optimal rate under strong convexity, while slower decay combined with averaging is more robust to stepsize calibration and remains effective in the absence of strong convexity. Subsequent work obtained optimal $O(1/T)$ rates through specialized algorithms and averaging schemes. \citet{hazan2011beyond} proposed an epoch-based method for strongly convex stochastic optimization without smoothness assumptions, while \citet{ghadimi2012optimal,ghadimi2013optimal} developed accelerated and multistage schemes for strongly convex composite problems, together with optimal or nearly optimal complexity bounds and large-deviation guarantees. For standard SGD, \citet{rakhlin2012making} showed that smoothness yields the optimal $O(1/T)$ rate, while suffix averaging recovers the same rate without smoothness. They also obtained a finite-horizon high-probability squared-distance bound valid simultaneously for all $t\le T$, with an additional iterated-logarithmic factor. Further weighted-averaging schemes attaining the optimal rate were developed by \citet{lacoste2012simpler}, using linearly increasing weights for projected stochastic subgradient descent, and by \citet{nedic2014stochastic} for stochastic mirror descent. Extensions include adaptation to unknown local strong convexity in logistic regression \cite{bach2014adaptivity} and trajectory averaging on Riemannian manifolds for geodesically strongly convex objectives \cite{tripuraneni2018averaging}.

The gap between averaged and last-iterate performance motivated a complementary line of work. For nonsmooth strongly convex objectives, \citet{shamir2013stochastic} established an $O(\log T/T)$ last-iterate rate in expectation and proposed an online averaging scheme attaining the optimal $O(1/T)$ rate. \citet{harvey2019tight} proved the corresponding high-probability last-iterate bound, established its tightness through a matching deterministic example, and obtained an optimal $O(1/T)$ high-probability guarantee for suffix averaging.  For the linearly weighted average of \citet{lacoste2012simpler}, \citet{harvey2019simple} established the optimal $O(\log(1/\delta)/T)$ high-probability rate. \citet{jain2021making} instead designed horizon-dependent stepsize sequences that transfer optimal guarantees for averaged SGD to the last iterate. More recently, \citet{liu2024revisiting} developed a unified last-iterate analysis in expectation and with high probability across general domains, composite objectives, non-Euclidean geometries, and both convex and strongly convex settings.

Recent work has derived high-probability convergence rates for strongly convex SGD that hold uniformly over time. \citet{pham2025time} obtained such a rate for the last-iterate squared distance, matching the order in~\eqref{eq:contrib-rate-distance}, while \citet{chen2026revisiting} established the corresponding order in~\eqref{eq:contrib-rate-suboptimality} for the last-iterate objective gap under smoothness. Although time-uniform, these guarantees remain ex ante and worst-case: their envelopes depend only on the iteration count and global problem parameters, rather than adapting to the realized trajectory. For the accuracy-based stopping problem considered here, time uniformity alone therefore offers no advantage over a finite-horizon guarantee: thresholding either bound yields a deterministic worst-case iteration budget, while the time-uniform bound may be more conservative because of its additional iterated-logarithmic or constant-factor cost, as illustrated by the comparison between \citet{pham2025time} and \citet{rakhlin2012making}. The confidence sequences developed here instead adapt to the stochastic fluctuations observed along the realized trajectory, while providing fully observable certificates that can be evaluated throughout the run and used for online stopping, with the same decay orders in the worst case. This distinction is quantitatively substantial in our experiments: in Figure~\ref{fig:figure-introduction}, our trajectory-adaptive distance certificate is roughly five orders of magnitude smaller than the finite-horizon bound of \citet{rakhlin2012making}, with an even larger gap relative to the time-uniform bound of \citet{pham2025time}.


\subsection{Organization and notation}

\noindent\textbf{Organization.}
Section~\ref{sec:subgaussian-confidence-sequences} develops our confidence sequences under conditional sub-Gaussian noise using a time-uniform Hoeffding argument. Section~\ref{sec:refined-confidence-sequences} refines these constructions under bounded stochastic gradients through a time-uniform empirical Bernstein approach. Section~\ref{sec:minibatching} extends both approaches to minibatch SGD. Finally, Section~\ref{sec:experiments} numerically evaluates the resulting confidence sequences and their certified stopping rules. All proofs are deferred to Appendix~\ref{sec:proofs}.

\medskip
\noindent\textbf{Notation.}
For $x, y \in \R^d$, $\norm{x}$ denotes the Euclidean norm and $\inner{x}{y}$ the standard inner product. For a closed convex set $\cX \subseteq \R^d$, $\proj_{\cX}(z)$ denotes the Euclidean projection of $z$ onto $\cX$. For a symmetric matrix $M$, $\lambda_{\max}(M)$ denotes its largest eigenvalue. Given a filtration $\{\cF_t\}_{t \ge 0}$, a process $\{X_t\}$ is adapted if $X_t$ is $\cF_t$-measurable and predictable if $X_t$ is $\cF_{t-1}$-measurable. Conditional expectations are written $\E[\,\cdot \mid \cF_{t-1}]$, and we adopt the convention $\inf\varnothing = \infty$. Unless a base is indicated explicitly, all logarithms are natural. For two nonnegative sequences ${a_t}$ and ${b_t}$, we write $a_t = O(b_t)$ if there exists a constant $C < \infty$ such that $a_t \le C b_t$ throughout the stated range; we write $a_t\lesssim b_t$ synonymously with $a_t = O(b_t)$, and $a_t \asymp b_t$ if $a_t \lesssim b_t$ and $b_t \lesssim a_t$. Unless stated otherwise, implicit constants may depend on fixed problem parameters but are independent of $t$, $\alpha$, and the minibatch size $b$ whenever these quantities vary in the corresponding statement. All probabilities and expectations are taken with respect to the underlying algorithmic randomness.


\section{Confidence Sequences under Sub-Gaussian Noise}
\label{sec:subgaussian-confidence-sequences}

We construct anytime-valid upper confidence sequences for the two performance measures of interest: the last-iterate squared distance $Z_{t+1}$ and the suboptimality of the weighted-average iterate, $f(\bar x_t) - f(x^\star)$. Although the two guarantees arise from different SGD recursions, both reduce to controlling partial sums of directional gradient noise terms. We first isolate the common time-uniform concentration argument, then construct the observable distance confidence sequence, which will in turn be used to make the suboptimality confidence sequence fully observable.

The common probabilistic ingredient underlying both constructions is a conditional exponential moment bound of sub-Gaussian form. Consider an adapted sequence of real-valued random variables $\{X_t\}_{t \ge t_0}$ and a nonnegative predictable sequence $\{v_t\}_{t \ge t_0}$. Suppose that, for some $\gamma \in \R$,
\[
    \E\left[
        \exp\bigl(\lambda X_t \bigr)
        \,\middle|\,
        \cF_{t-1}
    \right]
    \le
    \exp \bigl( 2^\gamma \lambda^2 v_t \bigr)
    \qquad
    \text{almost surely}
\]
for every $\lambda > 0$ and every $t \ge t_0$. The predictable quantity $v_t$ serves as a conditional variance proxy for $X_t$, using only information available before $X_t$ is observed, while $\gamma$ records the constant arising in the particular application. For each fixed $\lambda > 0$, the conditional exponential bound implies that
\[
    \exp\left(
        \lambda\sum_{s = t_0}^t X_s
        -
        2^\gamma \lambda^2 \sum_{s = t_0}^t v_s
    \right)
\]
is a nonnegative supermartingale. The cumulative variance proxy $\sum_{s = t_0}^t v_s$ records the conditional fluctuation scale accumulated over the process. In sequential inference, such a cumulative process is often called an \emph{intrinsic time}: the scale of the concentration boundary for the partial sum adapts to the accumulated variance proxy, rather than being indexed solely by the iteration count.

This exponential supermartingale is the starting point for a time-uniform concentration bound. Ville's inequality controls its running maximum and, after rearranging, yields for each fixed $\lambda>0$ a linear upper boundary for the partial sums that holds simultaneously over time \cite{ville1939}. No single choice of $\lambda$, however, is well calibrated across all possible scales of the intrinsic time: different values of $\lambda$ yield sharper boundaries in different ranges of the cumulative variance proxy. Dyadic stitching combines these fixed-$\lambda$ boundaries over successive dyadic scales, while allocating the error probability across the scales. The result is a curved boundary that adapts to the realized intrinsic time and remains valid uniformly over the infinite horizon.

The following lemma formalizes this construction in the precise form used throughout the section. It is a dyadic specialization of the stitching framework developed by \citet{howard2021time}. 

\begin{lemma}[Anytime-valid stitched sub-Gaussian bound]
\label{prop:dyadic-stitching}
Let $\{\cF_t\}_{t \ge t_0-1}$ be a filtration. Let $\{X_t\}_{t \ge t_0}$ be adapted, and let $\{v_t\}_{t \ge t_0}$ be nonnegative and predictable. Define
\[
    \bar X_{t_0-1} \coloneqq 0,
    \qquad
    \bar X_t \coloneqq \sum_{s = t_0}^t X_s,
    \qquad
    V_{t_0-1} \coloneqq 0,
    \qquad
    V_t \coloneqq \sum_{s = t_0}^t v_s .
\]
Fix $\gamma \in \R$. Suppose that, for every $\lambda>0$ and every $t \ge t_0$,
\[
    \E\left[
        \exp\left(
            \lambda X_t - 2^\gamma \lambda^2 v_t
        \right)
        \,\middle|\,
        \cF_{t-1}
    \right]
    \le 1 .
\]
For $t \ge t_0-1$, let
\[
    V_{t,\mathrm{eff}}\coloneqq\max\{V_t, 1\},
    \qquad
    m_t \coloneqq \left\lceil \log_2 V_{t,\mathrm{eff}}\right\rceil .
\]
Then, for every $\alpha \in (0,1)$,
\[
    \Prob\left(
        \forall t \ge t_0:\
        \bar X_t
        \le
        2^{(\gamma+3)/2}
        \sqrt{
            V_{t,\mathrm{eff}}
            \left(
                \log\frac{1}{\alpha}
                +
                \log\bigl((m_t+1)(m_t+2)\bigr)
            \right)
        }\,
    \right)
    \ge
    1-\alpha .
\]
\end{lemma}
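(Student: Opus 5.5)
The plan is a union bound over dyadic epochs of the intrinsic time $V_t$, with one exponential supermartingale per epoch, Ville's inequality applied to each, and the error probabilities summing to $\alpha$.

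\textbf{Step 1: one supermartingale per $\lambda$.} For each fixed $\lambda>0$ define
\[
    M_t^\lambda \coloneqq \exp\bigl(\lambda \bar X_t - 2^\gamma\lambda^2 V_t\bigr), \qquad t\ge t_0-1,
\]
so that $M^\lambda_{t_0-1}=1$. Since $v_t$ is predictable, $M^\lambda_t = M^\lambda_{t-1}\exp(\lambda X_t - 2^\gamma\lambda^2 v_t)$ with $M^\lambda_{t-1}$ being $\cF_{t-1}$-measurable. The hypothesis of the lemma then gives $\E[M^\lambda_t\mid\cF_{t-1}]\le M^\lambda_{t-1}$. Hence $M^\lambda$ is a nonnegative supermartingale with initial value $1$. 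Ville's inequality yields
\[
    \Prob\bigl(\exists t\ge t_0: M^\lambda_t\ge 1/\beta\bigr)\le\beta
\]
for any $\beta\in(0,1)$. Taking logarithms, off this event we have, simultaneously for all $t$,
\[
    \bar X_t < \frac{\log(1/\beta)}{\lambda} + 2^\gamma\lambda V_t .
\]

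\textbf{Step 2: epoch-wise allocation.} For $k=0,1,2,\dots$ set
\[
    \alpha_k\coloneqq \frac{\alpha}{(k+1)(k+2)}, \qquad \ell_k\coloneqq\log(1/\alpha_k)=\log\frac1\alpha+\log\bigl((k+1)(k+2)\bigr),
\]
and tune $\lambda_k\coloneqq\sqrt{\ell_k/(2^\gamma 2^k)}$ to the upper end of the dyadic scale $2^k$. The series telescopes, $\sum_{k\ge0}\alpha_k=\alpha\sum_k\bigl(\tfrac1{k+1}-\tfrac1{k+2}\bigr)=\alpha$. So by a union bound over $k$, with probability at least $1-\alpha$ the Step~1 boundary holds for every $k$ and every $t\ge t_0$ at once, with $(\lambda,\beta)=(\lambda_k,\alpha_k)$. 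This is what handles the randomness of $m_t$: we never condition on the epoch, but use a single event that is uniform in both $k$ and $t$.

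\textbf{Step 3: evaluate at the realized epoch.} On the good event, fix $t$ and take $k=m_t$. By definition of $m_t$ we have $V_t\le V_{t,\mathrm{eff}}\le 2^k$. We also have $2^k\le 2V_{t,\mathrm{eff}}$: when $k=0$ this holds because $V_{t,\mathrm{eff}}=1$, and when $k\ge1$ because $V_{t,\mathrm{eff}}>2^{k-1}$. Therefore
\[
    \bar X_t\le \frac{\ell_k}{\lambda_k}+2^\gamma\lambda_k 2^k = 2\sqrt{2^\gamma 2^k\ell_k}\le 2\sqrt{2^{\gamma+1}V_{t,\mathrm{eff}}\,\ell_{m_t}} = 2^{(\gamma+3)/2}\sqrt{V_{t,\mathrm{eff}}\,\ell_{m_t}} .
\]
This is exactly the claimed boundary once $\ell_{m_t}$ is written out.

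\textbf{Main obstacle.} The argument is essentially bookkeeping. The delicate points are two. First, the supermartingale property must be checked correctly, which requires $v_t$ to be predictable so that the hypothesis can be applied conditionally. Second, the constant conversion $2^k\le 2V_{t,\mathrm{eff}}$ must hold uniformly, including in the boundary case $k=0$; this is where the floor $V_{t,\mathrm{eff}}=\max\{V_t,1\}$ is needed, since it keeps the epochs indexed by $k\ge0$ and the allocated $\alpha_k$ summable.
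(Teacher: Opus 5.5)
Your proof is correct and follows the paper's argument essentially step for step. It uses the same exponential supermartingale with Ville's inequality for each fixed $\lambda$, the same allocation $\alpha_k = \alpha/((k+1)(k+2))$ with $\lambda_k$ tuned to the dyadic scale $2^k$, a single union bound over all epochs, and evaluation at the realized index $k = m_t$ via $V_t \le 2^{m_t} \le 2V_{t,\mathrm{eff}}$. (Minor remarks: the hypothesis already places $v_t$ inside the conditional expectation, so predictability is needed only to make $M^\lambda_{t-1}$ measurable with respect to $\cF_{t-1}$; and $2^{m_t} < 2V_{t,\mathrm{eff}}$ follows directly from $\lceil x\rceil < x+1$, without a case split.)
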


Lemma~\ref{prop:dyadic-stitching} provides the generic concentration step, but applying it to SGD requires a problem-specific analysis of the algorithmic dynamics. For each performance measure, we derive a recursion that isolates the contribution of the gradient noise as a partial sum of directional terms, and then establish the required conditional exponential moment bound using a predictable variance proxy.


\subsection{Last-iterate squared distance to the optimizer}
\label{subsec:last-iterate-squared-distance}

We begin by constructing an upper confidence sequence for the last-iterate squared distance to the optimizer. Recall from~\eqref{eq:def-Zt} that $Z_t=\norm{x_t-x^\star}^2$. To account for the time-varying contraction induced by strong convexity, define
\begin{equation}
\label{eq:def-at-At}
    a_t
    \coloneqq
    1 - 2 \mu \eta_t,
    \qquad
    A_{t_0 - 1}
    \coloneqq
    1,
    \qquad
    A_t
    \coloneqq
    \prod_{s = t_0}^t a_s^{-1},
    \quad
    t \ge t_0.
\end{equation}
By Assumption~\ref{assump:stepsize-bound}, $a_t \in (0,1)$ almost surely for every $t \ge t_0$, so $A_t$ is well defined. Following the notation of Lemma~\ref{prop:dyadic-stitching}, for $t \ge t_0$ define the adapted term
\[
    X_t
    \coloneqq
    2 A_t \eta_t
    \inner{x^\star - x_t}{\xi_t},
    \qquad
    \bar X_t
    \coloneqq
    \sum_{s = t_0}^t X_s.
\]
With these definitions, Lemma~\ref{lem:distance-decomposition} yields the following decomposed upper bound on the last-iterate distance: for every $t \ge t_0$,
\begin{equation}
\label{eq:distance-normalized-decomposition}
    Z_{t+1}
    \le
    \frac{1}{A_t}
    \left[
        Z_{t_0}
        +
        \sum_{s = t_0}^t
        A_s \eta_s^2 \norm{g_s}^2
        +
        \bar X_t
    \right].
\end{equation}
This decomposition follows from the standard squared-distance Lyapunov argument for projected stochastic approximation; see, e.g., \citet[Section~2.1]{nemirovski2009robust}. A closely related unrolling of the contractive SGD recursion for the classical stepsize $\eta_t = 1/(\mu t)$ appears in \citet[Appendix~B.7, Lemma~6]{rakhlin2012making}; the decomposition~\eqref{eq:distance-normalized-decomposition} records the same telescoping mechanism for general predictable stepsizes.

Thus, following~\eqref{eq:distance-normalized-decomposition}, the last-iterate distance to the optimizer is controlled by an initialization term, an observable accumulation of squared stochastic gradients, and a partial sum of directional gradient noise terms. A time-uniform bound on $\bar X_t$ therefore translates directly into an upper confidence sequence for $Z_{t+1}$. To control this partial sum, define
\[
    v_t^{\mathrm{dist}}
    \coloneqq
    \sigma^2 A_t^2 \eta_t^2 Z_t,
    \qquad
    t \ge t_0.
\]
The sequence $\{v_t^{\mathrm{dist}}\}$ is predictable because $x_t$, $\eta_t$, and $A_t$ are $\cF_{t-1}$-measurable. Applying the conditional sub-Gaussian bound in Assumption~\ref{assump:stochastic-gradients}(ii) with the predictable direction $2 A_t \eta_t (x^\star - x_t)$ gives
\[
    \E\left[
        e^{\lambda X_t}
        \,\middle|\,
        \cF_{t-1}
    \right]
    \le
    e^{2\lambda^2v_t^{\mathrm{dist}}}
    \qquad
    \text{almost surely},
\]
for every $\lambda \ge 0$ and every $t \ge t_0$. Thus, $v_t^{\mathrm{dist}}$ is the predictable intrinsic-time increment in the generic construction, and Lemma~\ref{prop:dyadic-stitching} applies with $\gamma = 1$.

To express the resulting confidence sequences compactly, for $v \ge 0$ define
\begin{equation}
\label{eq:v_eff}
    v_{\mathrm{eff}}
    \coloneqq
    \max\{v,1\},
    \qquad
    m(v)
    \coloneqq
    \left\lceil \log_2 v_{\mathrm{eff}}\right\rceil,
\end{equation}
and, for $\alpha \in (0,1)$, let
\begin{equation}
\label{eq:subgaussian-stitched-boundary}
    \mathfrak H_\alpha(v)
    \coloneqq
    \sqrt{
        v_{\mathrm{eff}}
        \left(
            \log\frac{1}{\alpha}
            +
            \log\bigl((m(v)+1)(m(v)+2)\bigr)
        \right)
    }.
\end{equation}
The function $\mathfrak H_\alpha$ collects the common dependence of the stitched boundaries on the intrinsic time and its dyadic scale, and is nondecreasing in $v$. For the distance process, define
\[
    V_t^{\mathrm{dist}}
    \coloneqq
    \sum_{s=t_0}^t v_s^{\mathrm{dist}},
    \qquad
    t \ge t_0.
\]
This is the cumulative intrinsic time associated with the directional gradient noise terms. Since the conditional exponential bound corresponds to $\gamma=1$, Lemma~\ref{prop:dyadic-stitching} guarantees that
\[
    \bar X_t
    \le
    4\, \mathfrak H_\alpha \left( V_t^{\mathrm{dist}} \right)
\]
simultaneously for every $t \ge t_0$ with probability at least $1-\alpha$. Combining this bound with~\eqref{eq:distance-normalized-decomposition} gives the following proposition.

\begin{proposition}[Confidence sequence for $\norm{x_{t+1}-x^\star}^2$]
\label{thm:adaptive-distance-CS}
Suppose Assumptions~\ref{assump:strong-convexity}--\ref{assump:stepsize-bound} hold. Then, for every $\alpha\in(0,1)$, the process
\begin{equation}
\label{eq:Udist}
    U^{\mathrm{dist}}_{t+1}(\alpha)
    \coloneqq
    \frac{1}{A_t}
    \left[
        Z_{t_0}
        +
        \sum_{s = t_0}^t
        A_s \eta_s^2 \norm{g_s}^2
        +
        4\, \mathfrak H_\alpha \left( V_t^{\mathrm{dist}} \right)
    \right],
    \qquad
    t \ge t_0,
\end{equation}
is an upper confidence sequence for the last-iterate squared distance to the optimizer, in the sense that
\[
    \Prob\left(
        \forall t \ge t_0:\
        Z_{t+1}
        \le
        U^{\mathrm{dist}}_{t+1}(\alpha)
    \right)
    \ge
    1-\alpha.
\]
\end{proposition}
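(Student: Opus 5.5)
The proof combines the deterministic decomposition~\eqref{eq:distance-normalized-decomposition} with the time-uniform bound from Lemma~\ref{prop:dyadic-stitching}. I would first verify~\eqref{eq:distance-normalized-decomposition}, which is the content of Lemma~\ref{lem:distance-decomposition}. The starting point is nonexpansiveness of $\proj_{\cX}$ together with $x^\star\in\cX$:
\[
Z_{t+1}\le \norm{x_t-\eta_t g_t-x^\star}^2 = Z_t - 2\eta_t\inner{g_t}{x_t-x^\star}+\eta_t^2\norm{g_t}^2 .
\]
Next I split $g_t=\nabla f(x_t)+\xi_t$. Adding the strong convexity inequality at $(x_t,x^\star)$ to the one at $(x^\star,x_t)$, and using the first-order optimality condition $\inner{\nabla f(x^\star)}{x_t-x^\star}\ge 0$, gives $\inner{\nabla f(x_t)}{x_t-x^\star}\ge \mu Z_t$. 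Substituting this yields
\[
Z_{t+1}\le a_t Z_t+\eta_t^2\norm{g_t}^2+2\eta_t\inner{x^\star-x_t}{\xi_t}.
\]
For $t\ge t_0$ we have $a_t>0$, so I multiply both sides by $A_t=A_{t-1}a_t^{-1}$ to get
\[
A_tZ_{t+1}\le A_{t-1}Z_t+A_t\eta_t^2\norm{g_t}^2+X_t .
\]
Telescoping from $t_0$ to $t$ with $A_{t_0-1}=1$ and dividing by $A_t>0$ then gives~\eqref{eq:distance-normalized-decomposition}.

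The probabilistic step is to check the hypothesis of Lemma~\ref{prop:dyadic-stitching} with $\gamma=1$. The direction $u=2A_t\eta_t(x^\star-x_t)$ is $\cF_{t-1}$-measurable, because $\eta_t$ is predictable by Assumption~\ref{assump:stepsize}, $A_t$ is built from $\eta_{t_0},\dots,\eta_t$, and $x_t$ is determined by the past. Applying Assumption~\ref{assump:stochastic-gradients}(ii) conditionally with this direction gives
\[
\E[e^{\lambda X_t}\mid\cF_{t-1}]\le \exp\bigl(\lambda^2\sigma^2\norm{u}^2/2\bigr)=\exp\bigl(2\lambda^2 v_t^{\mathrm{dist}}\bigr).
\]
Since $v_t^{\mathrm{dist}}$ is predictable, it can be moved inside the conditional expectation, so $\E[\exp(\lambda X_t-2\lambda^2v_t^{\mathrm{dist}})\mid\cF_{t-1}]\le1$. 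This is exactly the hypothesis with $2^\gamma=2$. The lemma then gives, with probability at least $1-\alpha$, simultaneously for all $t\ge t_0$,
\[
\bar X_t\le 2^{2}\mathfrak H_\alpha(V_t^{\mathrm{dist}})=4\,\mathfrak H_\alpha(V_t^{\mathrm{dist}}).
\]

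Finally, I intersect with the deterministic inequality. On this event, substituting into~\eqref{eq:distance-normalized-decomposition} gives $Z_{t+1}\le U^{\mathrm{dist}}_{t+1}(\alpha)$ for all $t\ge t_0$, because $1/A_t>0$ preserves the inequality. The main point requiring care is the measurability bookkeeping. Predictability of $A_t$ and $v_t$ needs both Assumption~\ref{assump:stepsize} and the fact that Assumption~\ref{assump:stepsize-bound} holds almost surely, so that $a_t\in(0,1)$ and the multiplication by $A_t$ is legitimate. I would also note that using a conditional sub-Gaussian bound with a random but predictable direction $u$ is justified, since~\eqref{eq:cond-subg-noise} holds for every fixed $u$ and can be extended to $\cF_{t-1}$-measurable directions, for instance by a regular conditional distribution argument.
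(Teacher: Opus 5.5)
Your proof is correct and follows the paper's argument. You re-derive Lemma~\ref{lem:distance-decomposition} (nonexpansiveness, strong monotonicity plus first-order optimality, multiplication by $A_t$ and telescoping), and then verify the hypothesis of Lemma~\ref{prop:dyadic-stitching} with $\gamma=1$ using the predictable direction $2A_t\eta_t(x^\star-x_t)$, exactly as the paper does; your remark on extending the sub-Gaussian bound to $\cF_{t-1}$-measurable directions through a regular conditional distribution justifies a step the paper uses without comment.
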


Although the squared-gradient accumulation in~\eqref{eq:Udist} is observable from the SGD trajectory, the confidence sequence is not yet fully observable: the initial distance $Z_{t_0}$ is unknown, and the the boundary term $4\, \mathfrak H_\alpha \left( V_t^{\mathrm{dist}} \right)$ depends, through $V_t^{\mathrm{dist}}$, on the unknown distances $\{Z_s\}_{s = t_0}^t$. If $\cX$ is compact, an immediate remedy is to replace every $Z_s$ by the uniform bound $\diam(\cX)^2$. This produces a fully observable confidence sequence, but discards the contraction of the iterates toward the optimizer. As shown in Remark~\ref{remark:rate-diameter}, the resulting bound exhibits $t^{-1/2}$ polynomial decay, rather than the near-$1/t$ decay achieved by the recursive construction below.

To obtain a sharper observable bound, we exploit the simultaneous-coverage guarantee of Proposition~\ref{thm:adaptive-distance-CS}. Define the event
\[
    E_\alpha
    \coloneqq
    \left\{
        \forall t \ge t_0:\
        Z_{t+1}
        \le
        U^{\mathrm{dist}}_{t+1}(\alpha)
    \right\}.
\]
On $E_\alpha$, which has probability at least $1-\alpha$, every subsequent unknown distance is bounded by the corresponding upper confidence bound, that is, $Z_s\le U^{\mathrm{dist}}_s(\alpha)$ for all $s \ge t_0+1$. The bounds $U^{\mathrm{dist}}_s(\alpha)$ are themselves not observable, so they cannot be inserted directly into $V_t^{\mathrm{dist}}$. Instead, starting from an observable upper bound on $Z_{t_0}$, we construct observable envelopes $\widehat U^{\mathrm{dist}}_s(\alpha)$ recursively. Suppose that, up to the current time, these envelopes satisfy $U^{\mathrm{dist}}_s(\alpha) \le \widehat U^{\mathrm{dist}}_s(\alpha)$ on $E_\alpha$. Then
\[
    Z_s
    \le
    U^{\mathrm{dist}}_s(\alpha)
    \le
    \widehat U^{\mathrm{dist}}_s(\alpha),
\]
so the observable envelopes can be used to upper bound the unknown distances in the intrinsic-time process. The resulting observable intrinsic time dominates $V_t^{\mathrm{dist}}$ and, since $\mathfrak H_\alpha$ is nondecreasing, produces a boundary term that dominates $4 \, \mathfrak H_\alpha\left(V_t^{\mathrm{dist}}\right)$. Substituting this larger boundary term into~\eqref{eq:Udist} yields the next observable envelope and preserves the required domination, thereby closing the construction recursively.

We now formalize this construction. Fix $\alpha \in (0,1)$, and let $R_0$ be any $\cF_{t_0 - 1}$-measurable quantity satisfying $R_0 \ge Z_{t_0}$ almost surely. Define recursively
\begin{equation}
\label{eq:distance-radius-proxy}
    R_{t_0}^{\mathrm{dist}}
    \coloneqq
    R_0,
    \qquad
    R_s^{\mathrm{dist}}
    \coloneqq
    \widehat U_s^{\mathrm{dist}}(\alpha),
    \quad
    s \ge t_0+1.
\end{equation}
For every $t \ge t_0$, define the observable intrinsic time
\begin{equation}
\label{eq:observable-distance-intrinsic-time}
    \widehat V_t^{\mathrm{dist}}
    \coloneqq
    \sum_{s = t_0}^t
    \sigma^2 A_s^2 \eta_s^2 R_s^{\mathrm{dist}},
\end{equation}
and, using the notation in~\eqref{eq:v_eff} and the stitched boundary function in~\eqref{eq:subgaussian-stitched-boundary}, define the observable upper envelope
\begin{equation}
\label{eq:observable-distance-CS}
    \widehat U_{t+1}^{\mathrm{dist}}(\alpha)
    \coloneqq
    \frac{1}{A_t}
    \left[
        R_0
        +
        \sum_{s = t_0}^t
        A_s \eta_s^2 \norm{g_s}^2
        +
        4\,\mathfrak H_\alpha\left(\widehat V_t^{\mathrm{dist}}\right)
    \right],
    \qquad
    t\ge t_0.
\end{equation}

The recursion is well defined. When the recursion is evaluated at time $t$, the proxies $R_{t_0}^{\mathrm{dist}}, \ldots, R_t^{\mathrm{dist}}$ are already available: $R_{t_0}^{\mathrm{dist}} = R_0$, while, for $t \ge t_0+1$, $R_t^{\mathrm{dist}} = \widehat U_t^{\mathrm{dist}}(\alpha)$ was computed at time $t-1$. They determine the observable intrinsic time $\widehat V_t^{\mathrm{dist}}$, and hence the boundary term $4\,\mathfrak H_\alpha\left(\widehat V_t^{\mathrm{dist}}\right)$ and the next envelope $\widehat U_{t+1}^{\mathrm{dist}}(\alpha) = R_{t+1}^{\mathrm{dist}}$. The following theorem establishes that the resulting process is fully observable and retains the simultaneous-coverage guarantee.

\begin{theorem}[Observable confidence sequence for $\norm{x_{t+1} - x^\star}^2$]
\label{thm:observable-distance-cs}
Suppose Assumptions~\ref{assump:strong-convexity}--\ref{assump:stepsize-bound} hold, and let $R_0$ be an $\cF_{t_0-1}$-measurable quantity satisfying $R_0 \ge Z_{t_0}$ almost surely. Then, for every $\alpha \in (0,1)$, the recursively defined process $\{\widehat U_{t+1}^{\mathrm{dist}}(\alpha)\}_{t \ge t_0}$ is well defined and fully observable from $R_0$ and the SGD trajectory. Moreover,
\[
    \Prob\left(
        \forall t \ge t_0:\
        Z_{t+1}
        \le
        \widehat U_{t+1}^{\mathrm{dist}}(\alpha)
    \right)
    \ge
    1 - \alpha.
\]
\end{theorem}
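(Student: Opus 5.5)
The plan is to work on the single event $E_\alpha$ from Proposition~\ref{thm:adaptive-distance-CS}, which has probability at least $1-\alpha$, and show by induction on $t$ that on this event
\[
    U^{\mathrm{dist}}_{t+1}(\alpha) \le \widehat U^{\mathrm{dist}}_{t+1}(\alpha)
    \qquad\text{for all } t \ge t_0 .
\]
The coverage claim then follows at once, since on $E_\alpha$ we have $Z_{t+1} \le U^{\mathrm{dist}}_{t+1}(\alpha) \le \widehat U^{\mathrm{dist}}_{t+1}(\alpha)$ simultaneously for all $t$. No new probabilistic argument is needed; the entire proof is a deterministic comparison carried out pathwise on $E_\alpha$.

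\textbf{Well-definedness and observability.} I would first settle this by a separate induction that uses no probability. At time $t$, the quantities $R_{t_0}^{\mathrm{dist}},\dots,R_t^{\mathrm{dist}}$ are known: $R_{t_0}^{\mathrm{dist}}=R_0$, and each $R_s^{\mathrm{dist}}$ for $s \ge t_0+1$ was computed at step $s-1$. The constants $\sigma^2$ and $\mu$ are known, and $\eta_s$, $A_s$, and $g_s$ for $s\le t$ are observed. Hence $\widehat V_t^{\mathrm{dist}}$ and $\widehat U_{t+1}^{\mathrm{dist}}(\alpha)$ are finite and $\cF_t$-measurable. In particular $R_t^{\mathrm{dist}}$ is $\cF_{t-1}$-measurable, which I note for completeness.

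\textbf{Comparison step.} The key claim is that, on $E_\alpha$, $Z_s \le R_s^{\mathrm{dist}}$ for every $s \ge t_0$. For $s=t_0$ this holds surely by assumption on $R_0$. Now suppose it holds for all $s \le t$. Since $\sigma^2 A_s^2\eta_s^2 \ge 0$, it follows that $V_t^{\mathrm{dist}} \le \widehat V_t^{\mathrm{dist}}$. Because $\mathfrak H_\alpha$ is nondecreasing (both $v_{\mathrm{eff}}$ and $m(v)$ are nondecreasing in $v$), $4\,\mathfrak H_\alpha(V_t^{\mathrm{dist}}) \le 4\,\mathfrak H_\alpha(\widehat V_t^{\mathrm{dist}})$. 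Comparing \eqref{eq:Udist} with \eqref{eq:observable-distance-CS} term by term, using $Z_{t_0}\le R_0$, identical gradient sums, and the positive factor $1/A_t$, gives $U^{\mathrm{dist}}_{t+1}(\alpha) \le \widehat U^{\mathrm{dist}}_{t+1}(\alpha)=R^{\mathrm{dist}}_{t+1}$. On $E_\alpha$ we also have $Z_{t+1}\le U^{\mathrm{dist}}_{t+1}(\alpha)$, so $Z_{t+1}\le R^{\mathrm{dist}}_{t+1}$, which closes the induction.

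\textbf{Main obstacle.} The delicate point is avoiding circularity. The envelope at time $t$ must use only $Z_s$ for $s\le t$, and the bound $Z_{t+1}\le U_{t+1}$ is then obtained from the fixed event $E_\alpha$, which was defined via Proposition~\ref{thm:adaptive-distance-CS} with the true, unobservable $V_t^{\mathrm{dist}}$. This matters because the stitching in Lemma~\ref{prop:dyadic-stitching} is applied only once, to the true intrinsic time and not to the data-dependent $\widehat V$. Consequently there is no extra union bound and the level remains $1-\alpha$. I would also check that monotonicity of $\mathfrak H_\alpha$ survives the ceiling in $m(v)$, which it does since both factors under the square root are nondecreasing and nonnegative.
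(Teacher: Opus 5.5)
Your proposal is correct and follows essentially the same route as the paper. The paper also argues pathwise on the single event $E_\alpha$ from Proposition~\ref{thm:adaptive-distance-CS}, using $Z_{t_0}\le R_0$, the chain $Z_s \le U^{\mathrm{dist}}_s(\alpha)\le \widehat U^{\mathrm{dist}}_s(\alpha)$ and monotonicity of $\mathfrak H_\alpha$ to propagate $V_t^{\mathrm{dist}}\le \widehat V_t^{\mathrm{dist}}$ and $U^{\mathrm{dist}}_{t+1}(\alpha)\le \widehat U^{\mathrm{dist}}_{t+1}(\alpha)$ by induction, with no additional union bound; your induction on $Z_s\le R_s^{\mathrm{dist}}$ is just a repackaging of the paper's pair of dominations.
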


A key feature of Theorem~\ref{thm:observable-distance-cs} is that it accommodates any predictable stepsize sequence satisfying the standing conditions, including stepsizes chosen adaptively from the observed trajectory. The confidence sequence is therefore not tied to a particular schedule or to an a priori convergence-rate calculation. To verify that the construction nevertheless recovers the expected behavior under a standard choice, we next specialize to $\eta_t = 1/(\mu t)$ and, for simplicity, assume that the stochastic gradients are uniformly bounded by $G$. This condition also provides a deterministic observable initialization: unbiasedness and conditional Jensen's inequality imply $\norm{\nabla f(x_{t_0})} \le G$, while strong convexity and first-order optimality yield $\mu\norm{x_{t_0} - x^\star} \le \norm{\nabla f(x_{t_0})}$. Consequently,
\begin{equation}
\label{eq:R_0}
    Z_{t_0}
    \le
    R_0
    \coloneqq
    \frac{G^2}{\mu^2}
    \qquad
    \text{almost surely}.
\end{equation}
The following proposition shows that, under this classical specialization, the observable envelope decays at the $1/t$ rate in the worst case, up to an iterated-logarithmic factor typical of anytime-valid analysis.

\begin{proposition}[Decay rate of the observable confidence sequence for $\norm{x_{t+1} - x^\star}^2$]
\label{thm:rate-distance-CS}
Fix $\alpha \in (0,1)$ and suppose Assumptions~\ref{assump:strong-convexity}--\ref{assump:stepsize-bound} hold. Let $t_0 = 3$ and $\eta_t = 1/(\mu t)$ for every $t \ge 3$, and suppose that $\norm{g_t} \le G$ almost surely for every $t \ge 3$. Initialize the recursive construction using $R_0$ from~\eqref{eq:R_0}. Define
\[
    B
    \coloneqq
    1
    +
    \frac{G^2}{\mu^2}
    +
    \frac{\sigma^2}{\mu^2},
    \qquad
    L_t(\alpha)
    \coloneqq
    1
    +
    \log\frac{1}{\alpha}
    +
    \log\log(et)
    +
    \log\log(eB).
\]
Then there exists a universal constant\footnote{In this proof, we track an explicit, though unoptimized, value of $C$ to illustrate that the constants can be made fully explicit. In subsequent rate proofs, $C$ denotes a universal constant whose value may change from line to line.} $C>0$ such that, almost surely,
\[
    \widehat U_{t+1}^{\mathrm{dist}}(\alpha)
    \le
    C B
    \frac{L_{t+1}(\alpha)}{t+1}
    \qquad
    \text{for every }t \ge 4.
\]
\end{proposition}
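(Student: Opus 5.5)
With $\eta_t=1/(\mu t)$ and $t_0=3$ we have $a_t=1-2/t=(t-2)/t$. Hence
\[
A_t=\prod_{s=3}^t \frac{s}{s-2}=\frac{t(t-1)}{2},
\]
and therefore $A_s\eta_s^2=\frac{s-1}{2\mu^2 s}\le \frac{1}{2\mu^2}$ and $A_s^2\eta_s^2=\frac{(s-1)^2}{4\mu^2}$. Substituting these into \eqref{eq:observable-distance-CS} and using $\norm{g_s}\le G$ and $R_0=G^2/\mu^2$ gives, deterministically,
\[
\widehat U^{\mathrm{dist}}_{t+1}\le \frac{2}{t(t-1)}\Bigl[\frac{G^2}{\mu^2}+\frac{G^2 (t-2)}{2\mu^2}+4\,\mathfrak H_\alpha\bigl(\widehat V_t^{\mathrm{dist}}\bigr)\Bigr],
\qquad
\widehat V_t^{\mathrm{dist}}=\frac{\sigma^2}{4\mu^2}\sum_{s=3}^t (s-1)^2 R_s^{\mathrm{dist}} .
\]
The first two terms already contribute $O(G^2/(\mu^2 t))$. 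The whole difficulty is the self-referential boundary term, which I plan to close by strong induction on $t$.

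\textbf{Induction hypothesis.} Assume $R_s^{\mathrm{dist}}\le K\,L_s(\alpha)/s$ for all $3\le s\le t$, where $K=CB$ and $C$ is to be fixed. For $s=3,4$ (the base cases) this follows from $R_0=G^2/\mu^2\le B$ and a direct evaluation of $\widehat U_4$; for $\widehat U_4$ one uses the crude bound $\mathfrak H_\alpha(v)\le \sqrt{v_{\mathrm{eff}}\,L}$ with $v$ at most a constant multiple of $B^2$. This is where $\log\log(eB)$ enters $L$. Since $L_s\le L_t$ for $s\le t$, the hypothesis yields
\[
\widehat V_t^{\mathrm{dist}}\le \frac{\sigma^2}{4\mu^2}\,K L_t\sum_{s\le t} s\le \frac{\sigma^2 K L_t\,t^2}{8\mu^2}.
\]
Next we need a bound on $\log\bigl((m+1)(m+2)\bigr)$ with $m=\lceil\log_2 \widehat V_{t,\mathrm{eff}}\rceil$. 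Because $\widehat V$ is polynomial in $t$, $B$, $C$, $L_t$, the quantity $m$ is $O(\log t+\log B+\log C+\log L_t)$. Consequently $\log\bigl((m+1)(m+2)\bigr)\le c\bigl(\log\log(et)+\log\log(eB)+\log\log(eC)\bigr)$, absorbing the $\log\log L_t$ contribution. This in turn is at most $c'\log\log(eC)\,L_t$. Therefore
\[
4\,\mathfrak H_\alpha(\widehat V_t^{\mathrm{dist}})\le c''\,t\,\sqrt{\frac{\sigma^2}{\mu^2}K\log\log(eC)}\;L_t ,
\]
where the case $\widehat V<1$ is harmless because then $\mathfrak H_\alpha\le\sqrt{L_t}\cdot O(1)$. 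Dividing by $t(t-1)/2$ gives a contribution $\lesssim \sqrt{B\cdot CB\log\log(eC)}\,L_t/t=B\sqrt{C\log\log(eC)}\,L_t/t$.

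\textbf{Closing the induction.} Collecting terms,
\[
\widehat U^{\mathrm{dist}}_{t+1}\le \bigl(c_1+c_2\sqrt{C\log\log(eC)}\bigr)\,B\,\frac{L_t}{t}.
\]
Using $t/(t+1)\ge 4/5$ and $L_t\le L_{t+1}$, this is at most $CB\,L_{t+1}/(t+1)$ as soon as $C$ is a large enough universal constant, because $\sqrt{C\log\log C}=o(C)$. This propagates the hypothesis to $s=t+1$ and proves the claim for all $t\ge4$.

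\textbf{Main obstacle.} The hardest step is making the square-root self-consistency close with a universal $C$ while the iterated-log factor depends on $C$ itself through $m$. To handle it, I will explicitly bound $\log\bigl((m+1)(m+2)\bigr)$ by $O(\log\log(eC))\cdot L_t$, and then choose $C$ so that $c_1+c_2\sqrt{C\log\log(eC)}\le \tfrac45 C$. I will also need some care with the $\max\{\cdot,1\}$ in $v_{\mathrm{eff}}$ and with the base cases.
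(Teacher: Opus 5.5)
Your proposal is correct and follows essentially the same route as the paper: compute $A_t=t(t-1)/2$, bound the deterministic part by $(G^2/\mu^2)/(t-1)$, and close a strong induction $\widehat U_s^{\mathrm{dist}}(\alpha)\le KB\,L_s/s$ by showing $\widehat V_t^{\mathrm{dist}}\lesssim KB^2t^2L_t$, so that the stitched boundary divided by $A_t$ contributes $O(\sqrt{K}\,B\,L_t/t)$ up to an iterated-log dependence on $K$. The only real difference is cosmetic: the paper fixes $K=4096$ and checks $2+50\sqrt K\le K$ explicitly, whereas you keep $C$ free and use $\sqrt{C\log\log(eC)}=o(C)$. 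Two small slips need fixing but do not affect the argument: $\sum_{s=3}^t s\le t^2/2$ fails for $t\ge 7$ (use $(s-1)^2/s\le s-1$ instead), and the $\log\log L_t$ term can only be absorbed into the final factor $L_t$, not into $\log\log(et)+\log\log(eB)+\log\log(eC)$.
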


Proposition~\ref{thm:rate-distance-CS} reveals why the recursive construction preserves the strongly convex rate. The recursion does more than replace the latent distances by observable upper bounds: each newly computed envelope is fed back into the subsequent intrinsic-time increments. In this sense, the construction is \emph{self-localizing}, since a shrinking certified radius leads to a corresponding reduction in the variance scale governing future deviations. Under $\eta_t = 1/(\mu t)$, this feedback reduces the intrinsic-time growth from order $t^3$ under a fixed global radius to order $t^2$, up to slowly varying factors. Because the stitched boundary scales with the square root of the intrinsic time, it is therefore of order $t$, again up to slowly varying factors. After normalization by $A_t \asymp t^2$, this yields the $1/t$ decay established above. The following remark makes this improvement explicit by comparing the recursive construction with the simpler diameter-based substitution.

\begin{remark}[Decay rate under the diameter-based observable bound]
\label{remark:rate-diameter}
Suppose the conditions of Proposition~\ref{thm:rate-distance-CS} hold and that $\cX$ is compact with diameter $D$, so that $Z_t \le D^2$. Then, under $\eta_t = 1/(\mu t)$ with $t_0 = 3$ and $A_t = t(t-1)/2$,
\[
    V_t^{\mathrm{dist}}
    \le
    \sigma^2 D^2
    \sum_{s=3}^t A_s^2 \eta_s^2
    \lesssim
    t^3.
\]
Consequently, using the monotonicity of $\mathfrak H_\alpha$,
\[
    \frac{4\,\mathfrak H_\alpha\left(V_t^{\mathrm{dist}}\right)}{A_t}
    \lesssim
    t^{-1/2} \sqrt{L_t(\alpha)}.
\]
Thus, directly replacing the distances $Z_t$ by the diameter bound yields only a $t^{-1/2}$-type decay, up to stitched-logarithmic factors, whereas the recursive observable envelope preserves the near-$1/t$ rate. \hfill $\clubsuit$
\end{remark}

The preceding proposition was stated for the stepsize $\eta_t = 1/(\mu t)$. In the sequel, it is also useful to have the analogous rate statement for the common alternative schedule $\eta_t = 2/(\mu(t+1))$. We record this variant here.

\begin{remark}[Alternative polynomial stepsize]
\label{rem:alternative-polynomial-stepsize}
The same rate conclusion holds for the stepsize
\[
    \eta_t = \frac{2}{\mu(t+1)}.
\]
In this case, $1 - 2\mu\eta_t = (t-3)/(t+1)$, so the natural analysis start time is $t_0 = 4$. Following similar steps to those in the proof of Proposition~\ref{thm:rate-distance-CS}, there exists a universal constant $C < \infty$ such that, for every $t \ge 4$,
\[
    \widehat U^{\mathrm{dist}}_{t+1}(\alpha)
    \le
    C\,
    B\,
    \frac{L_{t+1}(\alpha)}{t+1}
    \qquad\text{almost surely}.
\]
The proof is similar to the proof of Proposition~\ref{thm:rate-distance-CS} and is omitted. \hfill $\clubsuit$
\end{remark}


\subsection{Suboptimality of the weighted-average iterate}
\label{subsec:weighted-average-suboptimality}

We next consider the suboptimality of the weighted-average iterate and construct an upper confidence sequence for $f(\bar x_t) - f(x^\star)$. Recalling the definitions in~\eqref{eq:weighted-suboptimality} and~\eqref{eq:weighted-average-iterate}, the analysis follows the same basic structure as for the last-iterate distance to the optimizer: the objective gap is bounded by an initialization term, an observable accumulation of squared stochastic gradients, and a partial sum of directional gradient noise terms that can again be controlled using Lemma~\ref{prop:dyadic-stitching}.

For this construction, we take $t_0\ge4$ and specialize to the stepsize $\eta_t = 2/(\mu(t+1))$ considered in Remark~\ref{rem:alternative-polynomial-stepsize}. This choice is naturally aligned with the linear weights defining $\bar x_t$: after the one-step suboptimality inequality is weighted by $t$, the resulting distance terms telescope across consecutive times. Define
\[
    E_t
    \coloneqq
    t \inner{x^\star - x_t}{\xi_t},
    \qquad
    \bar E_t
    \coloneqq
    \sum_{s = t_0}^t E_s,
    \qquad
    t \ge t_0.
\]
The resulting weighted decomposition is recorded in Lemma~\ref{lem:weighted-suboptimality-decomposition} and gives
\begin{equation}
\label{eq:subopt-normalized-decomposition}
    f(\bar x_t)-f(x^\star)
    \le
    \frac{1}{S_t}
    \left[
        \frac{1}{\mu}
        \sum_{s = t_0}^t
        \norm{g_s}^2
        +
        \frac{\mu t_0(t_0 - 1)}{4} Z_{t_0}
        +
        \bar E_t
    \right].
\end{equation}
This decomposition follows from the standard weighted telescoping argument for strongly convex SGD; see \citet[Section~3.2]{lacoste2012simpler}. We retain the directional gradient-noise term explicitly, rather than taking expectations, for the anytime-valid analysis.

The first term on the right-hand side of~\eqref{eq:subopt-normalized-decomposition} is observable from the SGD trajectory, the second is an initialization term, and the final term accumulates the gradient noise in the directions $x^\star - x_t$. Thus, as in the distance analysis, a time-uniform upper bound on $\bar E_t$ translates directly into an upper confidence sequence for the weighted-average suboptimality.

To control this final term, define the predictable intrinsic-time increment
\begin{equation}
\label{eq:def-sub-vt}
    v_t^{\mathrm{sub}}
    \coloneqq
    \sigma^2 t^2 Z_t,
    \qquad
    t \ge t_0.
\end{equation}
The sequence $\{E_t\}$ is adapted, while $\{v_t^{\mathrm{sub}}\}$ is predictable because $x_t$ is $\cF_{t-1}$-measurable. Applying the conditional sub-Gaussian bound in Assumption~\ref{assump:stochastic-gradients}(ii) with the predictable direction $t(x^\star-x_t)$ yields
\[
    \E\left[
        e^{\lambda E_t}
        \,\middle|\,
        \cF_{t-1}
    \right]
    \le
    \exp\left(
        \frac{\lambda^2 v_t^{\mathrm{sub}}}{2}
    \right)
    \qquad
    \text{almost surely}
\]
for every $\lambda \ge 0$ and every $t \ge t_0$. Since the coefficient in the conditional exponent is $1/2$, this is precisely the setting of Lemma~\ref{prop:dyadic-stitching} with $\gamma = -1$.

Using the notation in~\eqref{eq:v_eff} and the stitched-boundary function in~\eqref{eq:subgaussian-stitched-boundary}, define, for every $t \ge t_0$,
\[
    V_t^{\mathrm{sub}}
    \coloneqq
    \sum_{s = t_0}^t
    v_s^{\mathrm{sub}}.
\]
In this application, $V_t^{\mathrm{sub}}$ is the cumulative intrinsic time associated with the weighted directional-noise terms. Since the conditional exponential bound corresponds to $\gamma = -1$, Lemma~\ref{prop:dyadic-stitching} guarantees that
\[
    \bar E_t
    \le
    2\,\mathfrak H_\alpha\left(V_t^{\mathrm{sub}}\right)
\]
simultaneously for every $t \ge t_0$ with probability at least $1-\alpha$. Combining this bound with~\eqref{eq:subopt-normalized-decomposition} gives the following proposition.

\begin{proposition}[Confidence sequence for $f(\bar x_t) - f(x^\star)$]
\label{thm:adaptive-suboptimality-CS}
Suppose Assumptions~\ref{assump:strong-convexity}--\ref{assump:stepsize-bound} hold. Let $t_0 \ge 4$ and $\eta_t = 2/(\mu(t+1))$ for every $t \ge t_0$. Then, for every $\alpha \in (0,1)$, the process
\begin{equation}
\label{eq:Usub}
    U_t^{\mathrm{sub}}(\alpha)
    \coloneqq
    \frac{1}{S_t}
    \left[
        \frac{1}{\mu}
        \sum_{s=t_0}^t
        \norm{g_s}^2
        +
        \frac{\mu t_0 (t_0-1)}{4} Z_{t_0}
        +
        2\,\mathfrak H_\alpha\left(V_t^{\mathrm{sub}}\right)
    \right],
    \qquad
    t\ge t_0,
\end{equation}
is an upper confidence sequence for the weighted-average suboptimality, in the sense that
\[
    \Prob\left(
        \forall t\ge t_0:\
        f(\bar x_t)-f(x^\star)
        \le
        U_t^{\mathrm{sub}}(\alpha)
    \right)
    \ge
    1-\alpha.
\]
\end{proposition}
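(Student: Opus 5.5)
The plan is to combine the deterministic decomposition~\eqref{eq:subopt-normalized-decomposition} with a single application of Lemma~\ref{prop:dyadic-stitching}, in the same way Proposition~\ref{thm:adaptive-distance-CS} was obtained from~\eqref{eq:distance-normalized-decomposition}. The decomposition is taken from Lemma~\ref{lem:weighted-suboptimality-decomposition}. If it has to be rederived, I would proceed as follows.
\begin{itemize}
    \item Use nonexpansiveness of $\proj_{\cX}$ to obtain $Z_{s+1} \le Z_s - 2\eta_s\inner{g_s}{x_s-x^\star} + \eta_s^2\norm{g_s}^2$.
    \item Write $g_s = \nabla f(x_s) + \xi_s$, and use strong convexity in the form $\inner{\nabla f(x_s)}{x_s-x^\star} \ge f(x_s)-f(x^\star) + \frac{\mu}{2}Z_s$.
    \item Rearrange to get
    \[
        f(x_s)-f(x^\star) \le \frac{Z_s-Z_{s+1}}{2\eta_s} - \frac{\mu}{2}Z_s + \frac{\eta_s}{2}\norm{g_s}^2 + \inner{x^\star-x_s}{\xi_s}.
    \]
    \item With $\eta_s = 2/(\mu(s+1))$, multiply by $s$. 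The distance terms become $\frac{\mu}{4}\bigl[s(s-1)Z_s - s(s+1)Z_{s+1}\bigr]$, which telescope to at most $\frac{\mu t_0(t_0-1)}{4}Z_{t_0}$.
    \item The gradient terms satisfy $\frac{s}{\mu(s+1)}\norm{g_s}^2 \le \frac{1}{\mu}\norm{g_s}^2$, and the noise terms sum to $\bar E_t$.
    \item Divide by $S_t$ and apply Jensen's inequality: $f(\bar x_t) - f(x^\star) \le \bar F_t$.
\end{itemize}
This yields~\eqref{eq:subopt-normalized-decomposition} for all $t\ge t_0$, deterministically.

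Next I verify the hypotheses of Lemma~\ref{prop:dyadic-stitching} with $X_t = E_t$, $v_t = v_t^{\mathrm{sub}}$ and $\gamma=-1$. Adaptedness of $E_t$ is clear. Predictability of $v_t^{\mathrm{sub}}$ holds because $x_t$ is a function of $x_{t-1}$, $g_{t-1}$ and the $\cF_{t-2}$-measurable $\eta_{t-1}$, so $x_t$ is $\cF_{t-1}$-measurable. For the exponential moment, apply Assumption~\ref{assump:stochastic-gradients}(ii) with the $\cF_{t-1}$-measurable direction $u = t(x^\star-x_t)$. Strictly, the assumption is stated for deterministic $u$; I would extend it to predictable $u$ by a regular conditional distribution argument, or by approximating $u$ with simple $\cF_{t-1}$-measurable functions and taking limits. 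This gives
\[
    \E\bigl[e^{\lambda E_t}\mid\cF_{t-1}\bigr] \le \exp\bigl(\lambda^2\sigma^2t^2Z_t/2\bigr) = \exp\bigl(2^{-1}\lambda^2 v_t^{\mathrm{sub}}\bigr).
\]
Pulling the predictable factor $e^{-2^{-1}\lambda^2 v_t^{\mathrm{sub}}}$ inside the conditional expectation gives the required bound $\le 1$. The lemma then gives, with probability at least $1-\alpha$, simultaneously for all $t\ge t_0$,
\[
    \bar E_t \le 2^{(\gamma+3)/2}\,\mathfrak H_\alpha\bigl(V_t^{\mathrm{sub}}\bigr) = 2\,\mathfrak H_\alpha\bigl(V_t^{\mathrm{sub}}\bigr).
\]

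Finally, intersect this event with the almost-sure inequality~\eqref{eq:subopt-normalized-decomposition}. On the event, for every $t\ge t_0$, the right-hand side of the decomposition is at most $U_t^{\mathrm{sub}}(\alpha)$ because $S_t>0$. Hence the coverage event contains the stitching event, and the probability bound follows.

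The only step needing care is the conditional sub-Gaussian bound with a random predictable direction, since everything else is algebra or a direct invocation of earlier results. That step is standard, and I would dispatch it with the simple-function approximation argument. I would also check that $\eta_t < 1/(2\mu)$ holds for $t\ge t_0\ge4$, so that Assumption~\ref{assump:stepsize-bound} is consistent with this stepsize. Finally, I would check that the telescoping coefficient $s(s-1)$ is nonnegative, so that the dropped final term $-\frac{\mu}{4}t(t+1)Z_{t+1}$ is indeed nonpositive.
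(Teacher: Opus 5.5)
Your proposal is correct and follows the paper's proof essentially verbatim. The paper also combines the pathwise decomposition of Lemma~\ref{lem:weighted-suboptimality-decomposition}, derived exactly as you sketch, with Lemma~\ref{prop:dyadic-stitching} at $\gamma=-1$ to obtain $\bar E_t \le 2\,\mathfrak H_\alpha(V_t^{\mathrm{sub}})$; your simple-function argument for applying Assumption~\ref{assump:stochastic-gradients}(ii) with an $\cF_{t-1}$-measurable direction is a detail the paper takes for granted, and you handle it correctly.
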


Proposition~\ref{thm:adaptive-suboptimality-CS} is not yet fully observable, because both the initialization term and the intrinsic time depend on the unknown distances to the optimizer. The key difference from the preceding subsection is that no second recursive construction is needed. Theorem~\ref{thm:observable-distance-cs} already provides observable upper bounds on the entire distance sequence, which can be substituted directly into the suboptimality bound. We allocate error probability $\alpha/2$ to the distance confidence sequence and $\alpha/2$ to the stitched bound for the weighted directional-noise process. On the intersection of the two corresponding events, every unknown distance is dominated by its observable envelope, and the resulting suboptimality bound holds simultaneously over time. A union bound then gives overall coverage at least $1-\alpha$. This coupling has only a mild cost: replacing $\alpha$ by $\alpha/2$ adds $\log 2$ to the confidence logarithms appearing inside the square-root boundaries.

Recall the observable distance proxies defined in~\eqref{eq:distance-radius-proxy}. To make their dependence on the confidence level explicit, for $\beta \in (0,1)$ write
\begin{equation}
\label{eq:sub-distance-proxy}
    R_{t_0}^{\mathrm{dist}}(\beta)
    \coloneqq
    R_0,
    \qquad
    R_s^{\mathrm{dist}}(\beta)
    \coloneqq
    \widehat U_s^{\mathrm{dist}}(\beta),
    \quad
    s \ge t_0+1.
\end{equation}
On the simultaneous-coverage event of Theorem~\ref{thm:observable-distance-cs} at level $\beta$, these proxies satisfy
\[
    Z_s
    \le
    R_s^{\mathrm{dist}}(\beta)
    \qquad
    \text{for every }s \ge t_0.
\]
For $t \ge t_0$, define the observable intrinsic time
\begin{equation}
\label{eq:observable-sub-intrinsic-time}
    \widehat V_t^{\mathrm{sub}}(\beta)
    \coloneqq
    \sum_{s = t_0}^t
    \sigma^2 s^2
    R_s^{\mathrm{dist}}(\beta).
\end{equation}
On the distance-coverage event at level $\beta$, the observable intrinsic time dominates its latent counterpart for every $t \ge t_0$. Since $\mathfrak H_\beta$ is nondecreasing,
\[
    2\,\mathfrak H_\beta\left(
        V_t^{\mathrm{sub}}
    \right)
    \le
    2\,\mathfrak H_\beta\left(
        \widehat V_t^{\mathrm{sub}}(\beta)
    \right)
    \qquad
    \text{for every }t \ge t_0.
\]
The same distance proxies also control the initialization term, since $Z_{t_0} \le R_{t_0}^{\mathrm{dist}}(\beta) = R_0$. Thus, the distance confidence sequence supplies both ingredients needed to make the suboptimality certificate observable. Given $\alpha \in (0,1)$, define the observable suboptimality envelope
\begin{equation}
\label{eq:observable-sub-CS}
    \widehat U_t^{\mathrm{sub}}(\alpha)
    \coloneqq
    \frac{1}{S_t}
    \left[
        \frac{1}{\mu}
        \sum_{s = t_0}^t
        \norm{g_s}^2
        +
        \frac{\mu t_0 (t_0 - 1)}{4} R_0
        +
        2\,\mathfrak H_{\alpha/2}\left(
            \widehat V_t^{\mathrm{sub}}(\alpha/2)
        \right)
    \right],
    \qquad
    t \ge t_0.
\end{equation}
The construction is observable: for $s \ge t_0+1$, the proxy $R_s^{\mathrm{dist}}(\beta) = \widehat U_s^{\mathrm{dist}}(\beta)$ is $\cF_{s-1}$-measurable and is therefore available before $g_s$ is observed. At the end of iteration $t$, all distance proxies in~\eqref{eq:sub-distance-proxy} with index at most $t$ and all gradients $g_s$ with $s \le t$ are available, so~\eqref{eq:observable-sub-intrinsic-time} and~\eqref{eq:observable-sub-CS} define an $\cF_t$-measurable quantity.

\begin{theorem}[Observable confidence sequence for $f(\bar x_t) - f(x^\star)$]
\label{thm:observable-suboptimality-CS}
Suppose Assumptions~\ref{assump:strong-convexity}--\ref{assump:stepsize-bound} hold. Let $t_0 \ge 4$ and $\eta_t = 2/(\mu(t+1))$ for every $t \ge t_0$, and let $R_0$ be an $\cF_{t_0-1}$-measurable quantity satisfying $R_0 \ge Z_{t_0}$ almost surely. Then, for every $\alpha \in (0,1)$, the process $\{\widehat U_t^{\mathrm{sub}}(\alpha)\}_{t \ge t_0}$ is fully observable and satisfies
\[
    \Prob\left(
        \forall t\ge t_0:\
        f(\bar x_t) - f(x^\star)
        \le
        \widehat U_t^{\mathrm{sub}}(\alpha)
    \right)
    \ge
    1-\alpha.
\]
\end{theorem}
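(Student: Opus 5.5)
The plan is to intersect two high-probability events, each of probability at least $1-\alpha/2$, and show that on their intersection the deterministic decomposition~\eqref{eq:subopt-normalized-decomposition} is dominated termwise by the observable envelope~\eqref{eq:observable-sub-CS}.

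\textbf{The two events.}
The first event is the distance-coverage event of Theorem~\ref{thm:observable-distance-cs} at level $\beta=\alpha/2$:
\[
    E^{\mathrm{dist}} \coloneqq \bigl\{\forall t\ge t_0:\ Z_{t+1}\le \widehat U^{\mathrm{dist}}_{t+1}(\alpha/2)\bigr\}.
\]
This event has probability at least $1-\alpha/2$. Together with $Z_{t_0}\le R_0$ almost surely, it gives $Z_s\le R_s^{\mathrm{dist}}(\alpha/2)$ for every $s\ge t_0$.

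The second event is the stitched bound for the weighted noise process:
\[
    E^{\mathrm{sub}} \coloneqq \bigl\{\forall t\ge t_0:\ \bar E_t\le 2\,\mathfrak H_{\alpha/2}(V_t^{\mathrm{sub}})\bigr\}.
\]
Its probability is at least $1-\alpha/2$ by Lemma~\ref{prop:dyadic-stitching} with $\gamma=-1$. This application needs the conditional exponential moment bound for $E_t=t\inner{x^\star-x_t}{\xi_t}$ with the predictable $v_t^{\mathrm{sub}}=\sigma^2t^2Z_t$. That bound follows from Assumption~\ref{assump:stochastic-gradients}(ii) applied in the $\cF_{t-1}$-measurable direction $u=t(x^\star-x_t)$, giving exponent $\lambda^2\sigma^2t^2Z_t/2=2^{-1}\lambda^2v_t^{\mathrm{sub}}$.

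A point to check is that Lemma~\ref{prop:dyadic-stitching} is invoked with the latent (unobservable) $v_t^{\mathrm{sub}}$. This is legitimate: the lemma only needs predictability and measurability with respect to the filtration, not observability. Observability enters only afterwards, through the substitution step. A union bound then gives $\Prob(E^{\mathrm{dist}}\cap E^{\mathrm{sub}})\ge 1-\alpha$.

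\textbf{Domination on the intersection.}
Fix $\omega\in E^{\mathrm{dist}}\cap E^{\mathrm{sub}}$ and $t\ge t_0$. Start from the deterministic inequality~\eqref{eq:subopt-normalized-decomposition}, which is Lemma~\ref{lem:weighted-suboptimality-decomposition} under $\eta_s=2/(\mu(s+1))$, $t_0\ge4$. Bound the three bracketed terms as follows.
\begin{itemize}
    \item The gradient-sum term is unchanged.
    \item The initialization term satisfies $\frac{\mu t_0(t_0-1)}{4}Z_{t_0}\le \frac{\mu t_0(t_0-1)}{4}R_0$.
    \item For the noise term, $Z_s\le R_s^{\mathrm{dist}}(\alpha/2)$ for all $s\le t$ gives $V_t^{\mathrm{sub}}\le \widehat V_t^{\mathrm{sub}}(\alpha/2)$. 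Since $\mathfrak H_{\alpha/2}$ is nondecreasing (both $v_{\mathrm{eff}}$ and $m(v)$ are), it follows that $\bar E_t\le 2\mathfrak H_{\alpha/2}(V_t^{\mathrm{sub}})\le 2\mathfrak H_{\alpha/2}(\widehat V_t^{\mathrm{sub}}(\alpha/2))$.
\end{itemize}
Dividing by $S_t>0$ yields $f(\bar x_t)-f(x^\star)\le \widehat U_t^{\mathrm{sub}}(\alpha)$ simultaneously for all $t\ge t_0$ on the intersection event.

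\textbf{Observability.}
This follows from the measurability remarks preceding the theorem. For $s\ge t_0+1$, $R_s^{\mathrm{dist}}(\alpha/2)=\widehat U_s^{\mathrm{dist}}(\alpha/2)$ is $\cF_{s-1}$-measurable, by Theorem~\ref{thm:observable-distance-cs} applied at level $\alpha/2$. Also $R_0$ is $\cF_{t_0-1}$-measurable, and the $g_s$ with $s\le t$ are $\cF_t$-measurable. Hence $\widehat U_t^{\mathrm{sub}}(\alpha)$ is $\cF_t$-measurable.

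\textbf{Main obstacle.}
The only delicate point is making sure the two probabilistic statements do not interfere. The distance envelope is random and is built from the same trajectory as the noise process, so one might worry about substituting random proxies into a concentration bound. The remedy is the ordering used above: apply the stitching lemma only to the latent intrinsic time, where the hypotheses hold exactly, and perform the substitution pathwise on the intersection event, where monotonicity of $\mathfrak H_{\alpha/2}$ suffices. With that ordering, no joint analysis is required.
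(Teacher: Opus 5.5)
Your proposal is correct and follows the paper's proof. Both arguments take a union bound over the level-$\alpha/2$ distance-coverage event of Theorem~\ref{thm:observable-distance-cs} and a level-$\alpha/2$ event for the weighted noise. Both then dominate pathwise via $Z_{t_0}\le R_0$, $V_t^{\mathrm{sub}}\le \widehat V_t^{\mathrm{sub}}(\alpha/2)$, and monotonicity of $\mathfrak H_{\alpha/2}$.

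The only cosmetic difference is the second event. The paper uses the coverage event of Proposition~\ref{thm:adaptive-suboptimality-CS} at level $\alpha/2$. You instead inline that proposition, working directly with the stitching event $\{\bar E_t\le 2\,\mathfrak H_{\alpha/2}(V_t^{\mathrm{sub}})\}$ together with the decomposition of Lemma~\ref{lem:weighted-suboptimality-decomposition}.
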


We now turn to the decay of the observable suboptimality confidence sequence under the same bounded-gradient specialization used in Proposition~\ref{thm:rate-distance-CS}. In particular, the deterministic initialization $R_0 = G^2/\mu^2$ from~\eqref{eq:R_0} remains available, although any sharper observable upper bound on $Z_{t_0}$ may be used instead. Since the present construction uses the stepsize $\eta_t = 2/(\mu(t+1))$, the rate analysis combines Proposition~\ref{thm:rate-distance-CS} with its extension in Remark~\ref{rem:alternative-polynomial-stepsize}. The following proposition shows that the resulting observable suboptimality envelope recovers the optimal $1/t$ rate in the worst case, up to iterated-logarithmic factors.

\begin{proposition}[Decay rate of the observable confidence sequence for $f(\bar x_t) - f(x^\star)$]
\label{thm:rate-suboptimality-CS}
Fix $\alpha \in (0,1)$ and suppose Assumptions~\ref{assump:strong-convexity}--\ref{assump:stepsize-bound} hold. Let $R_0 = G^2/\mu^2$, $t_0 = 4$, and $\eta_t = 2/(\mu(t+1))$ for every $t \ge 4$, and suppose that $\norm{g_t} \le G$ almost surely for every $t \ge 4$. Define
\[
    B
    \coloneqq
    1
    +
    \frac{G^2}{\mu^2}
    +
    \frac{\sigma^2}{\mu^2},
    \qquad
    B_{\mathrm{sub}}
    \coloneqq
    1
    +
    \mu B
    =
    1
    +
    \mu
    +
    \frac{G^2}{\mu}
    +
    \frac{\sigma^2}{\mu},
\]
and, for $t \ge 4$, let
\[
    L_t^{\mathrm{sub}}(\alpha)
    \coloneqq
    1
    +
    \log\frac{1}{\alpha}
    +
    \log\log(et)
    +
    \log\log(eB)
    +
    \log\log(eB_{\mathrm{sub}}).
\]
Then there exists a universal constant $C > 0$ such that, almost surely,
\[
    \widehat U_t^{\mathrm{sub}}(\alpha)
    \le
    C B_{\mathrm{sub}}
    \frac{L_t^{\mathrm{sub}}(\alpha)}{t}
    \qquad
    \text{for every }t \ge 4.
\]
\end{proposition}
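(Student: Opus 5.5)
The plan is to bound each of the three terms inside the bracket of~\eqref{eq:observable-sub-CS} separately and then divide by $S_t$. Since $t_0=4$, we have $S_t=\sum_{s=4}^t s=(t(t+1)-12)/2$, so $S_t\ge c\,t^2$ for all $t\ge4$ with a universal $c>0$ (for instance $S_t\ge t^2/5$). It therefore suffices to show that the bracket is at most $C B_{\mathrm{sub}} L_t^{\mathrm{sub}}(\alpha)\, t$.

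The first two terms are deterministic.
\begin{itemize}
\item For the gradient accumulation, the assumption $\norm{g_s}\le G$ gives $\frac{1}{\mu}\sum_{s=4}^t\norm{g_s}^2\le G^2 t/\mu\le B_{\mathrm{sub}}\,t$.
\item For the initialization term, $\frac{\mu\cdot 4\cdot 3}{4}R_0=3G^2/\mu\le 3B_{\mathrm{sub}}\le B_{\mathrm{sub}}\,t$.
\end{itemize}
Both are thus of order $B_{\mathrm{sub}}\,t$, well within the target.

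The main work is the stitched term $2\,\mathfrak H_{\alpha/2}(\widehat V_t^{\mathrm{sub}}(\alpha/2))$. Here I would invoke Remark~\ref{rem:alternative-polynomial-stepsize} at level $\beta=\alpha/2$ with $t_0=4$. It gives, almost surely, $R_{s}^{\mathrm{dist}}(\alpha/2)=\widehat U_s^{\mathrm{dist}}(\alpha/2)\le CB\,L_s(\alpha/2)/s$ for $s\ge5$, together with $R_4^{\mathrm{dist}}=R_0\le B$. Since $L_s(\alpha/2)\le L_t(\alpha/2)\le \log 2+L_t(\alpha)$ for $s\le t$, we obtain
\[
\widehat V_t^{\mathrm{sub}}(\alpha/2)\le \sigma^2\Bigl(16B+CB\,L_t(\alpha/2)\sum_{s=5}^t s\Bigr)\le C\,\sigma^2 B\, L_t(\alpha)\, t^2 .
\]
Because $\sigma^2\le \mu^2 B$ and $\sigma^2/\mu\le B_{\mathrm{sub}}$, this is at most $C\mu B_{\mathrm{sub}}\cdot B\,L_t(\alpha)\,t^2$. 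This form is awkward, so I would instead write $\sigma^2 B=\mu^2 B\cdot(\sigma^2/\mu^2)\le \mu^2B^2\le B_{\mathrm{sub}}^2$, using $\mu B\le B_{\mathrm{sub}}$. That yields $\widehat V_t^{\mathrm{sub}}\le C B_{\mathrm{sub}}^2 L_t(\alpha)\,t^2$. Then $\sqrt{\widehat V_{\mathrm{eff}}}\le C B_{\mathrm{sub}}\sqrt{L_t(\alpha)}\,t$, where the $\max\{\cdot,1\}$ is harmless because $B_{\mathrm{sub}}\ge1$ and $L\ge1$.

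The main obstacle is the logarithmic factor $\log\frac{2}{\alpha}+\log((m+1)(m+2))$ with $m=\lceil\log_2\widehat V_{\mathrm{eff}}\rceil$. I would use $m\le C(1+\log t+\log B_{\mathrm{sub}}+\log L_t(\alpha))$ and $\log((m+1)(m+2))\le C(1+\log\log(et)+\log\log(eB_{\mathrm{sub}})+\log L_t(\alpha))$. To handle $\log L_t(\alpha)\le L_t(\alpha)$, I would use $\log(1+x+y+z)\le \log(1+x)+\log(1+y)+\log(1+z)$ and $\log(1+\log(1/\alpha))\le\log(1/\alpha)+1$. Every piece is then bounded by a constant times $L_t^{\mathrm{sub}}(\alpha)$. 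Hence the bracketed log is at most $CL_t^{\mathrm{sub}}(\alpha)$.

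Combining these, the stitched term is at most $CB_{\mathrm{sub}}\sqrt{L_t(\alpha)L_t^{\mathrm{sub}}(\alpha)}\,t\le CB_{\mathrm{sub}}L_t^{\mathrm{sub}}(\alpha)\,t$, since $L_t\le L_t^{\mathrm{sub}}$. Adding the three bounds and dividing by $S_t\ge ct^2$ gives the claim for all $t\ge4$. The case $t=4$ needs only the $R_0$ term inside $\widehat V$, which is already covered above.
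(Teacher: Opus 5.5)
Your proposal is correct and follows essentially the same route as the paper. Like the paper, you bound the two deterministic terms directly, invoke Remark~\ref{rem:alternative-polynomial-stepsize} at level $\alpha/2$ to get $\widehat V_t^{\mathrm{sub}}(\alpha/2)\le C B_{\mathrm{sub}}^2 t^2 L$ via $\sigma^2 B\le \mu^2 B^2\le B_{\mathrm{sub}}^2$, absorb the stitching logarithm into $L_t^{\mathrm{sub}}(\alpha)$, and divide by $S_t\gtrsim t^2$. The remaining differences are cosmetic and do not affect validity: you keep $L_t(\alpha)$ inside $\widehat V$ and finish with $\sqrt{L_t L_t^{\mathrm{sub}}}\le L_t^{\mathrm{sub}}$, and you use $S_t\ge t^2/5$ in place of the paper's $S_t\ge t^2/4$.
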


Proposition~\ref{thm:rate-suboptimality-CS} shows that passing from observable distance bounds to an observable suboptimality certificate does not degrade the optimal worst-case rate. Indeed, the near-$1/s$ distance envelopes make the suboptimality intrinsic time grow as $t^2$, up to slowly varying factors; the corresponding stitched boundary therefore grows as $t$, and division by $S_t\asymp t^2$ yields the near-$1/t$ decay. Directly bounding objective suboptimality also improves the dependence on the strong-convexity parameter by one power: the terms $G^2/\mu^2$ and $\sigma^2/\mu^2$ in the distance certificate become $G^2/\mu$ and $\sigma^2/\mu$. Consequently, when $\mu$ is small, the suboptimality certificate can become informative at a substantially earlier stage of the optimization trajectory.


\section{Refined Confidence Sequences under Bounded Gradients}
\label{sec:refined-confidence-sequences}

The preceding section constructed anytime-valid upper confidence sequences for the last-iterate squared distance $Z_{t+1}$ and the suboptimality of the weighted-average iterate, $f(\bar x_t) - f(x^\star)$, under a conditional sub-Gaussian assumption on the gradient noise. Under bounded stochastic gradients, conditional Hoeffding's lemma verifies this assumption with $\sigma^2 = G^2$, so those confidence sequences remain valid. With this choice, however, the intrinsic times entering the observable distance and suboptimality confidence sequences are driven by the worst-case proxy $G^2$ and therefore do not adapt to the realized magnitudes of the stochastic gradients. Because $G$ is a uniform bound, it can be substantially larger than the gradient magnitudes observed along a typical trajectory, making the stochastic contribution to the resulting certificates unnecessarily large.

The natural refinement is to let the stochastic boundaries respond to the gradients actually observed along the trajectory. Under the bounded-gradient assumption, an empirical Bernstein construction makes this possible by replacing the fixed proxy $G^2$ with quadratic processes involving the realized values of $\norm{g_t}^2$. The relevant directional terms, however, have ranges that evolve with the SGD trajectory, so a fixed-range empirical Bernstein bound is not sufficient for our purposes. We therefore develop a time-uniform empirical Bernstein inequality for a growing predictable range, which will provide the common concentration argument for both the last-iterate distance and weighted-average suboptimality. This development builds on the empirical Bernstein supermartingale construction of \citet{howard2021time} and the exponential inequality of \citet{fan2015exponential} underlying it. As emphasized in \cite{howard2021time}, this approach is technically closer to the literature on self-normalized bounds \cite{de2004self} than to classical empirical Bernstein arguments such as \citet{maurer2009empirical}. The latter combine a concentration bound for the sample mean involving the true variance with a separate concentration bound for the sample variance. By contrast, \citet{howard2021time} construct an exponential supermartingale that directly relates the centered process to an online empirical variance.

We impose the following assumption throughout this section.

\begin{assumption}[Bounded stochastic gradients]
\label{assump:bounded-stochastic-gradients}
There exists $G < \infty$ such that
\[
    \norm{g_t} \le G
    \qquad
    \text{almost surely for all } t \ge t_0.
\]
\end{assumption}

As shown in~\eqref{eq:R_0}, Assumption~\ref{assump:bounded-stochastic-gradients} always yields the observable initialization $R_0 = G^2/\mu^2$. This universal choice will be used below, although sharper observable bounds on $Z_{t_0}$ may be available in specific applications and can be substituted directly into the construction.

The concentration mechanism remains the same as in the preceding section: a one-step conditional exponential bound is lifted to a time-uniform boundary through a supermartingale argument and stitching. What changes is the compensating term. Rather than using a predictable variance proxy, the empirical Bernstein inequality below uses the realized square of the increment itself. This is the key mechanism that will allow the resulting confidence sequences to adapt to the stochastic gradients observed along the trajectory. 

The following lemma isolates this one-step ingredient in the precise form needed below. Starting from a pointwise exponential inequality, it yields after conditional centering a bound in which the quadratic compensation depends on the realized square $X^2$, while the admissible tuning parameter is restricted by the increment range $b$. These two features will drive the time-uniform construction that follows. To state the lemma, for $\theta \in [0,1)$ define
\begin{equation}
\label{eq:eb-psi}
    \psi_{\mathrm E}(\theta)
    \coloneqq
    - \log(1-\theta) - \theta.
\end{equation}
We are now ready to state the lemma.

\begin{lemma}[Conditional exponential moment bound]
\label{lem:eb-exponential}
For every $\theta \in [0,1)$ and every $x \ge -1$,
\begin{equation}
\label{eq:eb-pointwise}
    \exp\left(
        \theta x - \psi_{\mathrm E}(\theta) x^2
    \right)
    \le
    1+\theta x.
\end{equation}
Moreover,
\begin{equation}
\label{eq:eb-psi-upper}
    \psi_{\mathrm E}(\theta)
    \le
    \frac{\theta^2}{2(1-\theta)}.
\end{equation}
Consequently, let $\mathcal G$ be a sigma-field, let $X$ be an integrable real-valued random variable satisfying $|X| \le b$ almost surely for some deterministic $b > 0$, and let $\lambda \in [0,1/b)$. Then
\begin{equation}
\label{eq:eb-conditional}
    \E\left[
        \exp\left(
            \lambda \bigl(X - \E[X \mid \mathcal G]\bigr)
            -
            \frac{\lambda^2 X^2}{2(1-\lambda b)}
        \right)
        \,\middle|\,
        \mathcal G
    \right]
    \le 1
    \qquad
    \text{almost surely}.
\end{equation}
\end{lemma}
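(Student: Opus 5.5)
The plan is to reduce the pointwise inequality~\eqref{eq:eb-pointwise} to a monotonicity statement, get~\eqref{eq:eb-psi-upper} from a power series, and then derive~\eqref{eq:eb-conditional} by rescaling and taking conditional expectations.

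\emph{Pointwise bound.} If $\theta=0$ both sides of~\eqref{eq:eb-pointwise} equal $1$, so fix $\theta\in(0,1)$ and $x\ge -1$. Put $u\coloneqq\theta x\ge-\theta>-1$. Then $1+\theta x>0$, so after taking logarithms~\eqref{eq:eb-pointwise} is equivalent to
\[
    u-\log(1+u)\le \frac{\psi_{\mathrm E}(\theta)}{\theta^2}\,u^2 .
\]
For $u>-1$, define $\varphi(u)\coloneqq (u-\log(1+u))/u^2$ when $u\neq0$, and $\varphi(0)\coloneqq 1/2$. Taylor's formula with integral remainder applied to $\log(1+\cdot)$ gives
\[
    \varphi(u)=\int_0^1\frac{1-s}{(1+su)^2}\,ds ,
\]
valid for all $u>-1$. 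The integrand is nonincreasing in $u$ for each $s\in[0,1]$, so $\varphi$ is nonincreasing on $(-1,\infty)$. Since $u\ge-\theta$, it follows that
\[
    \varphi(u)\le\varphi(-\theta)=\frac{-\theta-\log(1-\theta)}{\theta^2}=\frac{\psi_{\mathrm E}(\theta)}{\theta^2}.
\]
Multiplying by $u^2\ge0$ gives the displayed inequality, and hence~\eqref{eq:eb-pointwise}. The monotonicity of $\varphi$ is the only genuinely non-routine step. The integral representation avoids a messy derivative computation and covers $u=0$ without a separate case.

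\emph{Bound on $\psi_{\mathrm E}$.} Expanding $-\log(1-\theta)$ as a power series gives $\psi_{\mathrm E}(\theta)=\sum_{k\ge2}\theta^k/k$. Each coefficient satisfies $1/k\le 1/2$, so
\[
    \psi_{\mathrm E}(\theta)\le\frac12\sum_{k\ge2}\theta^k=\frac{\theta^2}{2(1-\theta)},
\]
which is~\eqref{eq:eb-psi-upper}.

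\emph{Conditional bound.} If $\lambda=0$, the claim~\eqref{eq:eb-conditional} is trivial, so let $\lambda\in(0,1/b)$. Apply~\eqref{eq:eb-pointwise} with $\theta=\lambda b\in(0,1)$ and $x=X/b$; this is admissible because $X/b\ge-1$ almost surely. It yields
\[
    \exp\bigl(\lambda X-\psi_{\mathrm E}(\lambda b)X^2/b^2\bigr)\le1+\lambda X .
\]
By~\eqref{eq:eb-psi-upper}, $\psi_{\mathrm E}(\lambda b)/b^2\le \lambda^2/(2(1-\lambda b))$. Replacing the smaller coefficient by this larger one only decreases the left-hand side, so
\[
    \exp\bigl(\lambda X-\lambda^2X^2/(2(1-\lambda b))\bigr)\le 1+\lambda X .
\]
The left-hand side is bounded, so its conditional expectation is well defined. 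Taking $\E[\,\cdot\mid\mathcal G]$ and using $1+y\le e^y$, the conditional expectation of the left-hand side is at most
\[
    1+\lambda\E[X\mid\mathcal G]\le \exp\bigl(\lambda\E[X\mid\mathcal G]\bigr).
\]
Finally, multiply both sides by the $\mathcal G$-measurable, positive, bounded factor $\exp(-\lambda\E[X\mid\mathcal G])$ and pull it inside the conditional expectation. This gives~\eqref{eq:eb-conditional}.
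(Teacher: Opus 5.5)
Your proof is correct. For the pointwise inequality \eqref{eq:eb-pointwise} it takes a different route from the paper.

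\textbf{Pointwise inequality.} The paper works in the original variable with $h_\theta(x)=\log(1+\theta x)-\theta x+\psi_{\mathrm E}(\theta)x^2$ and computes $h_\theta'(x)=x\bigl(2\psi_{\mathrm E}(\theta)-\theta^2/(1+\theta x)\bigr)$. It then splits into cases.
\begin{itemize}
\item On $[0,\infty)$ it uses the lower bound $\psi_{\mathrm E}(\theta)\ge\theta^2/2$ to get monotonicity.
\item On $[-1,0]$ it uses both $\psi_{\mathrm E}(\theta)\ge\theta^2/2$ and \eqref{eq:eb-psi-upper} to show the bracket changes sign exactly once. So $h_\theta$ first increases and then decreases, and its minimum is at the endpoints, where $h_\theta(-1)=h_\theta(0)=0$.
\end{itemize}
Your substitution $u=\theta x$ instead reduces everything to the single statement $\varphi(\theta x)\le\varphi(-\theta)$. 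The integral form of the Taylor remainder makes the monotonicity of $\varphi$ immediate, with no case split and no separate treatment of $u=0$.

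\textbf{What your route buys.} It makes transparent that $\psi_{\mathrm E}(\theta)=\theta^2\varphi(-\theta)$ is exactly the smallest constant for which \eqref{eq:eb-pointwise} holds on $x\ge-1$. It also decouples the pointwise inequality from the two-sided power-series bounds on $\psi_{\mathrm E}$, so \eqref{eq:eb-psi-upper} is needed only in the conditional step. The paper's argument is more pedestrian but uses only first-derivative calculus.

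\textbf{Conditional step.} Here the two proofs are essentially the same and differ only in order. You relax the quadratic coefficient before conditioning and finish with $1+y\le e^y$, which holds for all real $y$. The paper conditions first, uses $(1+y)e^{-y}\le 1$ for $y>-1$, and relaxes the coefficient at the end.
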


The pointwise inequality~\eqref{eq:eb-pointwise} was first established in the proof of \citet[Lemma~4.1]{fan2015exponential}. In \citet[Theorem~4]{howard2021time}, this inequality is applied conditionally to deviations from a predictable reference sequence, and the corresponding squared deviations are accumulated to construct a time-uniform empirical Bernstein confidence sequence. Equation~\eqref{eq:eb-conditional} records the one-step conditional form needed here, with the predictable reference specialized to zero and the quadratic coefficient simplified using~\eqref{eq:eb-psi-upper}. A self-contained proof is provided in the appendix.

We now lift the one-step inequality in Lemma~\ref{lem:eb-exponential} to a time-uniform bound. Two evolving scales enter the construction. First, the quadratic compensation in~\eqref{eq:eb-conditional} accumulates the realized squared increments. Second, the admissible tuning parameter depends on the range of the increment. In our applications this range is predictable and may grow with the trajectory, without admitting any finite uniform bound over the infinite horizon. A time-uniform boundary must therefore adapt simultaneously to the accumulated quadratic process and to the largest predictable range encountered so far. We achieve this by stitching over dyadic scales of both quantities.

For $v, b \ge 0$, define the effective quadratic-process and range scales
\begin{equation}
\label{eq:eb-effective-scales}
    v_{\mathrm{eff}}
    \coloneqq
    \max\{v, 1\},
    \qquad
    b_{\mathrm{eff}}
    \coloneqq
    \max\{b, 1\},
\end{equation}
and the corresponding dyadic indices
\begin{equation}
\label{eq:eb-scale-indices}
    k(v)
    \coloneqq
    \left\lceil \log_2 v_{\mathrm{eff}} \right\rceil,
    \qquad
    \ell(b)
    \coloneqq
    \left\lceil \log_2 b_{\mathrm{eff}} \right\rceil.
\end{equation}
Both indices take values in $\{0, 1, 2, \ldots\}$. As in Lemma~\ref{prop:dyadic-stitching}, the unit floor prevents negative dyadic indices and keeps the boundary well defined when either argument is zero. For $\alpha \in (0,1)$, define
\begin{equation}
\label{eq:eb-log-factor}
    \Gamma_\alpha(v, b)
    \coloneqq
    \log \frac{1}{\alpha}
    +
    \log \bigl((k(v)+1)(k(v)+2)\bigr)
    +
    \log \bigl((\ell(b)+1)(\ell(b)+2)\bigr),
\end{equation}
and let
\begin{equation}
\label{eq:eb-boundary}
    \mathfrak B_\alpha(v, b)
    \coloneqq
    2 \sqrt{
        v_{\mathrm{eff}} \Gamma_\alpha(v, b)
    }
    +
    2 b_{\mathrm{eff}} \Gamma_\alpha(v, b).
\end{equation}
The two logarithmic scale terms in $\Gamma_\alpha$ account for the simultaneous stitching over the quadratic process and the predictable range. The resulting boundary is the empirical Bernstein counterpart of the stitched sub-Gaussian boundary from the preceding section, with the additional range scale allowing the predictable increment bound to evolve without any uniform upper bound over time. To the best of our knowledge, this is the first time-uniform empirical Bernstein bound that allows the predictable range to vary with the history and grow without any finite uniform bound over the infinite horizon.

\begin{theorem}[Empirical Bernstein bound with a growing predictable range]
\label{thm:variable-range-eb}
Fix a deterministic integer $t_0 \ge 1$. Let $\{\cF_t\}_{t \ge t_0-1}$ be a filtration. Let $\{H_t\}_{t \ge t_0}$ be a real-valued integrable adapted process, and let $\{c_t\}_{t \ge t_0}$ be an almost surely finite,\footnote{No uniform upper bound on the process $\{c_t\}_{t \ge t_0}$ is assumed.} nonnegative predictable process such that
\[
    |H_t|
    \le
    c_t
    \qquad
    \text{almost surely for every } t \ge t_0.
\]
Define
\[
    \overline H_{t_0-1}
    \coloneqq
    0,
    \qquad
    \overline H_t
    \coloneqq
    \sum_{s = t_0}^t
    \left(
        H_s - \E[H_s \mid \cF_{s-1}]
    \right),
\]
and
\[
    W_{t_0-1}
    \coloneqq
    0,
    \qquad
    W_t
    \coloneqq
    \sum_{s = t_0}^t H_s^2,
    \qquad
    C_t
    \coloneqq
    \max_{t_0 \le s \le t} c_s,
    \qquad
    t \ge t_0.
\]
Then, for every $\alpha \in (0,1)$,
\begin{equation}
\label{eq:variable-range-eb-conclusion}
    \Prob\left(
        \forall t \ge t_0:\
        \overline H_t
        \le
        \mathfrak B_\alpha(W_t, C_t)
    \right)
    \ge
    1-\alpha.
\end{equation}
Moreover, the map $(v, b) \mapsto \mathfrak B_\alpha(v, b)$ is nondecreasing in each argument.
\end{theorem}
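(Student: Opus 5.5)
We build a family of exponential supermartingales indexed by a pair of dyadic scales $(k,\ell)\in\{0,1,2,\ldots\}^2$, apply Ville's inequality to each, and combine them with a union bound using weights $\alpha_{k,\ell}\coloneqq \alpha/\bigl((k+1)(k+2)(\ell+1)(\ell+2)\bigr)$. These weights sum to $\alpha$, since $\sum_{k\ge0}1/((k+1)(k+2))=1$.

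\textbf{Supermartingales.} For each pair $(k,\ell)$, let $\lambda_{k,\ell}>0$ be a deterministic tuning parameter with $\lambda_{k,\ell}\le 1/(2\cdot 2^{\ell})$, to be fixed below. To handle the unbounded range, we use a predictable truncation: define the predictable indicator $I_s^{(\ell)}\coloneqq \mathbf 1\{c_s\le 2^\ell\}$ and the truncated increment $H_s I_s^{(\ell)}$. This is adapted, bounded by $2^\ell$, and satisfies $\E[H_sI_s^{(\ell)}\mid\cF_{s-1}]=I_s^{(\ell)}\E[H_s\mid\cF_{s-1}]$.

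Apply Lemma~\ref{lem:eb-exponential} conditionally, with $\mathcal G=\cF_{s-1}$, $b=2^\ell$ and $\lambda=\lambda_{k,\ell}$, so that $1-\lambda b\ge 1/2$. This gives
\[
\E\bigl[\exp\bigl(\lambda I_s^{(\ell)}(H_s-\E[H_s\mid\cF_{s-1}])-\lambda^2 I_s^{(\ell)}H_s^2\bigr)\mid\cF_{s-1}\bigr]\le 1 .
\]
When $I_s^{(\ell)}=0$ the exponent is identically zero, so the bound holds trivially. Hence the process
\[
M_t^{(k,\ell)}\coloneqq\exp\Bigl(\lambda\sum_{s\le t}I_s^{(\ell)}(H_s-\E[H_s\mid\cF_{s-1}])-\lambda^2\sum_{s\le t}I_s^{(\ell)}H_s^2\Bigr)
\]
is a nonnegative supermartingale with initial value $1$. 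By Ville's inequality, outside an event of probability at most $\alpha_{k,\ell}$, for all $t$,
\[
\sum_{s\le t} I_s^{(\ell)}(H_s-\E[H_s\mid\cF_{s-1}])\le \lambda\sum_{s\le t}I_s^{(\ell)}H_s^2+\frac{\log(1/\alpha_{k,\ell})}{\lambda}.
\]

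\textbf{Localization on the good event.} Take the union bound over all pairs $(k,\ell)$. Fix $t$ and set $k=k(W_t)$, $\ell=\ell(C_t)$. Then $C_t\le 2^\ell$, so $I_s^{(\ell)}=1$ for every $s\le t$, and the truncated sum coincides with $\overline H_t$. Moreover $\sum_{s\le t}I_s^{(\ell)}H_s^2=W_t\le 2^k$, and $\log(1/\alpha_{k,\ell})=\Gamma_\alpha(W_t,C_t)\eqqcolon\Gamma$. We now choose
\[
\lambda_{k,\ell}=\min\Bigl\{\sqrt{\Gamma_{k,\ell}/2^k},\ \tfrac{1}{2\cdot 2^\ell}\Bigr\},
\]
where $\Gamma_{k,\ell}$ denotes the deterministic value $\log(1/\alpha_{k,\ell})$.

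\textbf{Case analysis.} In the first case $\lambda=\sqrt{\Gamma/2^k}$, and the bound is at most $2\sqrt{2^k\Gamma}\le 2\sqrt{2v_{\mathrm{eff}}\Gamma}$. In the second case, $\lambda=1/(2\cdot2^\ell)$ and $2^k\le 4\cdot 2^{2\ell}\Gamma$, so the bound is at most $2^{k}/(2\cdot 2^\ell)+2\cdot 2^\ell\Gamma\le 4\cdot 2^\ell\Gamma\le 8b_{\mathrm{eff}}\Gamma$.

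\textbf{Main obstacle: constants.} The dyadic slack factor of $2$ (from $2^k\le 2v_{\mathrm{eff}}$ and $2^\ell\le 2b_{\mathrm{eff}}$) prevents matching the stated constants $2\sqrt{v_{\mathrm{eff}}\Gamma}+2b_{\mathrm{eff}}\Gamma$ directly. Tightening requires a more careful choice than the plain $\min$ above. One option is to use $\lambda=\sqrt{\Gamma/(2^{k}+\ldots)}$ with the lower endpoint $2^{k-1}< v_{\mathrm{eff}}$, so that the variance term is balanced against the true $W_t$ rather than $2^k$. Another is to use the inequality $\frac{\lambda^2}{2(1-\lambda b)}$ more sharply instead of replacing the denominator by its worst case $1/2$, and in the second case to choose $\lambda=1/(a\,2^\ell)$ with an optimized $a$. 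If these exact constants cannot be recovered, the fallback is to prove the result with the universal constants obtained above. In either case the structure of the argument (truncation, Ville's inequality, double stitching) is unchanged.

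\textbf{Monotonicity.} The map $(v,b)\mapsto\mathfrak B_\alpha(v,b)$ is nondecreasing in each argument because $v_{\mathrm{eff}}$, $b_{\mathrm{eff}}$, $k(v)$ and $\ell(b)$ are each nondecreasing, and $\Gamma_\alpha$ is nondecreasing in both indices.
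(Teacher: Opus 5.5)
Your architecture matches the paper's. You use predictable truncation at each dyadic range level, one exponential supermartingale per pair $(k,\ell)$ from Lemma~\ref{lem:eb-exponential}, Ville's inequality, a union bound with weights $\alpha_{k,\ell}$, and localization at $(k(W_t),\ell(C_t))$. Truncating on $c_s$ rather than on the running maximum $C_s$ is equally valid.

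The gap is that, as written, the argument does not prove the stated inequality. Your case analysis gives $\overline H_t\le\max\{2\sqrt{2}\sqrt{v_{\mathrm{eff}}\Gamma},\,8b_{\mathrm{eff}}\Gamma\}$. This is in general larger than $\mathfrak B_\alpha(W_t,C_t)=2\sqrt{v_{\mathrm{eff}}\Gamma}+2b_{\mathrm{eff}}\Gamma$, and the theorem is a statement about that explicit boundary. Its leading constant $2$ is used later, when the paper compares with the sub-Gaussian construction.

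Your proposed remedies are not carried out, and your diagnosis is off. The dyadic slack $2^k\le 2v_{\mathrm{eff}}$, $2^\ell\le 2b_{\mathrm{eff}}$ is already paid for in the stated constants. The actual losses are two:
\begin{itemize}
\item[(i)] You replace the compensator coefficient $\lambda^2/(2(1-\lambda b))$ by $\lambda^2$. This doubles it even when $\lambda b$ is negligible. After that, for large $W_t$, no deterministic $\lambda_{k,\ell}$ in your dyadic scheme recovers the constant $2$. The best you can do is tune $\lambda_{k,\ell}$ to the geometric midpoint of the bin and keep $W_t$ rather than $2^k$, which gives $2^{1/4}+2^{-1/4}>2$. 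And $\lambda_{k,\ell}$ cannot be tuned to the realized $W_t$, because it indexes a fixed supermartingale.
\item[(ii)] The $\min$/case split costs a factor $4$ on the range term.
\end{itemize}

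The missing idea is to keep the exact coefficient and choose $\lambda$ so that it cancels the denominator. Set $q_{k,\ell}\coloneqq\sqrt{2\Gamma_{k,\ell}/2^k}$ and $\lambda_{k,\ell}\coloneqq q_{k,\ell}/(1+2^\ell q_{k,\ell})$. Then $\lambda_{k,\ell}2^\ell<1$ automatically, and $1-\lambda_{k,\ell}2^\ell=1/(1+2^\ell q_{k,\ell})$. Hence $\lambda_{k,\ell}/(1-\lambda_{k,\ell}2^\ell)=q_{k,\ell}$ and $\Gamma_{k,\ell}/\lambda_{k,\ell}=\Gamma_{k,\ell}/q_{k,\ell}+2^\ell\Gamma_{k,\ell}$.

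On the good event, take $(k,\ell)=(k(W_t),\ell(C_t))$, write $\Gamma=\Gamma_{k,\ell}$ and $\lambda=\lambda_{k,\ell}$, and use $W_t\le 2^k$. The Ville boundary then collapses, with no case analysis, to
\[
\frac{\Gamma}{\lambda}+\frac{\lambda\,2^k}{2(1-\lambda\,2^\ell)}
=\sqrt{2\,\Gamma\,2^k}+\Gamma\,2^\ell
\le 2\sqrt{v_{\mathrm{eff}}\Gamma}+2b_{\mathrm{eff}}\Gamma .
\]
The sub-Gaussian and range contributions separate additively, and the dyadic slack is exactly what turns the constants $\sqrt2$ and $1$ into the stated $2$ and $2$. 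This is the paper's choice. With it, the rest of your argument, including the monotonicity check, goes through unchanged.
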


Theorem~\ref{thm:variable-range-eb} will now be used to refine the distance and weighted-suboptimality confidence sequences developed in the preceding section. We treat the two cases separately, beginning with the last-iterate distance to the optimizer.


\subsection{Last-iterate squared distance to the optimizer}
\label{subsec:empirical Bernstein-distance}

We begin with the last-iterate distance to the optimizer. Recall from Lemma~\ref{lem:distance-decomposition} that the distance analysis reduces to controlling the partial sum of the directional gradient noise terms $X_t = 2 A_t \eta_t \inner{x^\star - x_t}{\xi_t}$. To bring this term within the scope of Theorem~\ref{thm:variable-range-eb}, define the corresponding uncentered directional stochastic-gradient term
\begin{equation}
\label{eq:eb-distance-H}
    H_t^{\mathrm{dist}}
    \coloneqq
    2 A_t \eta_t
    \inner{x^\star - x_t}{g_t},
    \qquad
    t \ge t_0.
\end{equation}
By conditional unbiasedness, $H_t^{\mathrm{dist}} - \E\left[ H_t^{\mathrm{dist}} \, \middle| \, \cF_{t-1}\right] = X_t$. Thus, the centered process appearing in Theorem~\ref{thm:variable-range-eb} is exactly the stochastic term already isolated by the distance decomposition. Under Assumption~\ref{assump:bounded-stochastic-gradients},
\[
    \left|H_t^{\mathrm{dist}}\right|
    \le
    2 G A_t \eta_t \sqrt{Z_t}.
\]
For the canonical stepsize $\eta_t \asymp 1/t$, we have $A_t \asymp t^2$, while the distance confidence sequence of the preceding section gives $Z_t \lesssim L_t/t$. Hence the predictable range can grow as $\sqrt{t L_t}$, and in particular need not remain bounded over the infinite horizon. This is precisely the setting covered by Theorem~\ref{thm:variable-range-eb}. Accordingly, define the latent quadratic and range processes
\begin{align}
\label{eq:eb-latent-variance}
    V_t^{\mathrm{dist,EB}}
    &\coloneqq
    \sum_{s = t_0}^t
    \left(H_s^{\mathrm{dist}}\right)^2
    =
    4 \sum_{s = t_0}^t
    A_s^2 \eta_s^2
    \inner{g_s}{x^\star - x_s}^2,
    \\
\label{eq:eb-latent-range}
    B_t^{\mathrm{dist,EB}}
    &\coloneqq
    2 G
    \max_{t_0 \le s \le t}
    A_s \eta_s \sqrt{Z_s}.
\end{align}
These are the distance-specific counterparts of the quadratic process $W_t$ and the running predictable range $C_t$ in Theorem~\ref{thm:variable-range-eb}. The generic theorem also requires $H_t^{\mathrm{dist}}$ to be integrable. Since $Z_t \le G^2/\mu^2$ almost surely, it is enough to assume
\begin{equation}
\label{eq:eb-integrability-condition}
    \E\left[
        A_t \eta_t
    \right]
    <
    \infty
    \qquad
    \text{for every deterministic } t \ge t_0.
\end{equation}
This mild technical condition is automatic for the deterministic stepsize schedules considered below.

Combining Lemma~\ref{lem:distance-decomposition} with Theorem~\ref{thm:variable-range-eb} gives the latent empirical Bernstein confidence sequence
\begin{equation}
\label{eq:adaptive-distance-eb}
    U_{t+1}^{\mathrm{dist,EB}}(\alpha)
    \coloneqq
    \frac{1}{A_t}
    \left[
        Z_{t_0}
        +
        \sum_{s = t_0}^t
        A_s \eta_s^2 \norm{g_s}^2
        +
        \mathfrak B_\alpha\left(
            V_t^{\mathrm{dist,EB}},
            B_t^{\mathrm{dist,EB}}
        \right)
    \right],
    \qquad
    t \ge t_0,
\end{equation}
which satisfies
\[
    \Prob\left(
        \forall t \ge t_0:\
        Z_{t+1}
        \le
        U_{t+1}^{\mathrm{dist,EB}}(\alpha)
    \right)
    \ge
    1-\alpha.
\]
This is the empirical Bernstein counterpart of Proposition~\ref{thm:adaptive-distance-CS} from the preceding section. The argument follows the same overall logic, with Theorem~\ref{thm:variable-range-eb} providing the corresponding concentration step. We defer the formal proof of this confidence-sequence guarantee to Proposition~\ref{thm:adaptive-distance-eb} in the appendix.

\begin{remark}[Comparison with the sub-Gaussian construction]
\label{remark:comparison:bounded:subgaussian}
This empirical Bernstein refinement admits a direct comparison with the sub-Gaussian confidence sequence from Proposition~\ref{thm:adaptive-distance-CS}. The leading square-root term in $\mathfrak B_\alpha \left(V_t^{\mathrm{dist,EB}}, B_t^{\mathrm{dist,EB}}\right)$ is the analogue of $4\,\mathfrak H_\alpha\left(V_t^{\mathrm{dist}}\right)$. Under the bounded-gradient choice $\sigma^2 = G^2$, the latter uses the fixed worst-case scale $G^2$, whereas the former is driven by the realized directional quantities $\inner{g_s}{x^\star - x_s}^2 \le \norm{g_s}^2 Z_s$. In the worst case, when $\norm{g_s}=G$ and the directional bound is attained, $V_t^{\mathrm{dist,EB}} = 4 V_t^{\mathrm{dist}}$, so the leading factor $2$ in $\mathfrak B_\alpha \left(V_t^{\mathrm{dist,EB}}, B_t^{\mathrm{dist,EB}}\right)$ exactly recovers the outer factor $4$ in $4\,\mathfrak H_\alpha\left(V_t^{\mathrm{dist}}\right)$, up to the additional iterated-logarithmic factor required to stitch over the predictable range. Away from this worst case, the leading term in $\mathfrak B_\alpha \left(V_t^{\mathrm{dist,EB}}, B_t^{\mathrm{dist,EB}}\right)$ adapts to the realized stochastic-gradient magnitudes and directions, as desired. Beyond this additional logarithmic factor, the only new additive contribution is the linear range term in $\mathfrak B_\alpha\left(V_t^{\mathrm{dist,EB}}, B_t^{\mathrm{dist,EB}}\right)$. For the observable construction below, Proposition~\ref{thm:observable-distance-eb-rate} shows that the corresponding range contribution is lower order, scaling as $(L_t/t)^{3/2}$ rather than $L_t/t$. \hfill $\clubsuit$
\end{remark}

The bound is not yet observable, however, because $Z_{t_0}$, the directions $x^\star - x_s$, and the distances entering $B_t^{\mathrm{dist,EB}}$ all depend on the unknown optimizer. We remove this dependence using the same recursive domination principle as in the preceding section, now applied simultaneously to the two arguments of $\mathfrak B_\alpha$. We formalize this construction. Fix $\alpha \in (0,1)$, and let $R_0$ be any $\cF_{t_0 - 1}$-measurable quantity satisfying $R_0 \ge Z_{t_0}$ almost surely. Define recursively the observable distance envelopes
\begin{equation}
\label{eq:eb-radius-proxy}
    R_{t_0}^{\mathrm{EB}}
    \coloneqq
    R_0,
    \qquad
    R_s^{\mathrm{EB}}
    \coloneqq
    \widehat U_s^{\mathrm{dist,EB}}(\alpha),
    \quad
    s \ge t_0 + 1.
\end{equation}
For $t \ge t_0$, set
\begin{align}
\label{eq:eb-observable-variance}
    \widehat V_t^{\mathrm{dist,EB}}
    &\coloneqq
    4 \sum_{s = t_0}^t
    A_s^2 \eta_s^2
    \norm{g_s}^2
    R_s^{\mathrm{EB}},
    \\
\label{eq:eb-observable-range}
    \widehat B_t^{\mathrm{dist,EB}}
    &\coloneqq
    2 G
    \max_{t_0 \le s \le t}
    A_s \eta_s
    \sqrt{R_s^{\mathrm{EB}}},
\end{align}
and define
\begin{equation}
\label{eq:eb-observable-distance}
    \widehat U_{t+1}^{\mathrm{dist,EB}}(\alpha)
    \coloneqq
    \frac{1}{A_t}
    \left[
        R_0
        +
        \sum_{s = t_0}^t
        A_s \eta_s^2 \norm{g_s}^2
        +
        \mathfrak B_\alpha\left(
            \widehat V_t^{\mathrm{dist,EB}},
            \widehat B_t^{\mathrm{dist,EB}}
        \right)
    \right].
\end{equation}
The recursion is sequentially well defined: when the envelope at time $t+1$ is computed, all $R_s^{\mathrm{EB}}$ with $s \le t$ are already available. As in the sub-Gaussian construction, simultaneous domination of the unknown distances $Z_s$ by these observable distance envelopes yields observable upper bounds on both the latent quadratic process and the predictable range; monotonicity of $\mathfrak B_\alpha$ then propagates the domination through the recursion.

\begin{theorem}[Refined observable confidence sequence for $\norm{x_{t+1} - x^\star}^2$]
\label{thm:observable-distance-eb}
Suppose Assumptions~\ref{assump:strong-convexity}--\ref{assump:stepsize-bound} and~\ref{assump:bounded-stochastic-gradients} hold, and suppose~\eqref{eq:eb-integrability-condition} is satisfied. Let $R_0$ be an $\cF_{t_0 - 1}$-measurable quantity satisfying $R_0 \ge Z_{t_0}$ almost surely. Then, for every $\alpha \in (0,1)$, the recursively defined process $\{\widehat U_{t+1}^{\mathrm{dist,EB}}(\alpha)\}_{t \ge t_0}$ is well defined and fully observable from $R_0$ and the SGD trajectory. Moreover,
\[
    \Prob\left(
        \forall t \ge t_0:\
        Z_{t+1}
        \le
        \widehat U_{t+1}^{\mathrm{dist,EB}}(\alpha)
    \right)
    \ge
    1 - \alpha.
\]
\end{theorem}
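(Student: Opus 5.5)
The proof mirrors that of Theorem~\ref{thm:observable-distance-cs}. It combines the latent empirical Bernstein guarantee with a recursive domination argument on a single high-probability event.

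\textbf{Step 1: the latent event.} We first establish the latent guarantee for $U^{\mathrm{dist,EB}}_{t+1}(\alpha)$. Apply Theorem~\ref{thm:variable-range-eb} with $H_t = H_t^{\mathrm{dist}}$ and predictable range $c_t = 2GA_t\eta_t\sqrt{Z_t}$. This $c_t$ is $\cF_{t-1}$-measurable and almost surely finite, and $|H_t^{\mathrm{dist}}| \le c_t$ holds by Cauchy--Schwarz and Assumption~\ref{assump:bounded-stochastic-gradients}. Integrability follows from~\eqref{eq:eb-integrability-condition} together with $Z_t \le G^2/\mu^2$. By unbiasedness, the centered partial sum $\overline H_t$ equals $\bar X_t$, while $W_t = V_t^{\mathrm{dist,EB}}$ and $C_t = B_t^{\mathrm{dist,EB}}$. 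Let
\[
E^{\mathrm{EB}}_\alpha \coloneqq \bigl\{\forall t\ge t_0:\ \bar X_t \le \mathfrak B_\alpha(V_t^{\mathrm{dist,EB}},B_t^{\mathrm{dist,EB}})\bigr\}.
\]
Then $\Prob(E^{\mathrm{EB}}_\alpha)\ge 1-\alpha$. Combining with decomposition~\eqref{eq:distance-normalized-decomposition}, on $E^{\mathrm{EB}}_\alpha$ we obtain $Z_{t+1}\le U^{\mathrm{dist,EB}}_{t+1}(\alpha)$ for all $t\ge t_0$.

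\textbf{Step 2: observability.} We verify measurability by induction. Suppose $R^{\mathrm{EB}}_{t_0},\dots,R^{\mathrm{EB}}_t$ are available, with $R^{\mathrm{EB}}_s$ being $\cF_{s-1}$-measurable. Then $\widehat V_t^{\mathrm{dist,EB}}$ and $\widehat B_t^{\mathrm{dist,EB}}$ depend only on $\{(g_s,\eta_s,A_s,R^{\mathrm{EB}}_s)\}_{s\le t}$, and $G$. Hence $\widehat U_{t+1}^{\mathrm{dist,EB}}(\alpha)=R^{\mathrm{EB}}_{t+1}$ is $\cF_t$-measurable, which closes the induction.

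\textbf{Step 3: deterministic domination on the event.} Work pointwise on $E^{\mathrm{EB}}_\alpha$ and prove by strong induction on $t\ge t_0$ the claim
\[
Z_s\le R^{\mathrm{EB}}_s \quad\text{for all } t_0\le s\le t.
\]
The base case is $Z_{t_0}\le R_0 = R^{\mathrm{EB}}_{t_0}$. For the inductive step, assume the claim holds up to $t$. Cauchy--Schwarz gives
\[
\inner{g_s}{x^\star-x_s}^2\le \norm{g_s}^2 Z_s \le \norm{g_s}^2R^{\mathrm{EB}}_s,
\]
so $V_t^{\mathrm{dist,EB}}\le \widehat V_t^{\mathrm{dist,EB}}$. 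Similarly, $\sqrt{Z_s}\le\sqrt{R^{\mathrm{EB}}_s}$ gives $B_t^{\mathrm{dist,EB}}\le\widehat B_t^{\mathrm{dist,EB}}$. Since $\mathfrak B_\alpha$ is nondecreasing in each argument (Theorem~\ref{thm:variable-range-eb}), $Z_{t_0}\le R_0$, and $A_t>0$, it follows that $U^{\mathrm{dist,EB}}_{t+1}(\alpha)\le \widehat U^{\mathrm{dist,EB}}_{t+1}(\alpha)$. Step~1 then yields $Z_{t+1}\le R^{\mathrm{EB}}_{t+1}$, completing the induction. Consequently $Z_{t+1}\le \widehat U^{\mathrm{dist,EB}}_{t+1}(\alpha)$ for all $t\ge t_0$ on an event of probability at least $1-\alpha$.

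\textbf{Main obstacle.} The only delicate point is ensuring that the event is fixed \emph{before} the recursion. The concentration in Step~1 is applied to the latent processes, whose range $c_t$ is a genuinely predictable quantity independent of the observable envelopes. The observable quantities enter only through the deterministic monotone comparison on that event. This avoids any circularity: we never apply concentration to a process whose compensator depends on the confidence sequence itself. The remaining care is to check that the monotonicity of $\mathfrak B_\alpha$ covers both the $v_{\mathrm{eff}}$/$b_{\mathrm{eff}}$ floors and the ceiling indices $k$ and $\ell$, which the theorem provides.
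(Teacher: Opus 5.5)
Your proof is correct and follows the paper's argument. The paper likewise fixes the latent event (via Proposition~\ref{thm:adaptive-distance-eb}, which you reprove inline from Theorem~\ref{thm:variable-range-eb}) and then runs the same induction, using Cauchy--Schwarz, $\sqrt{Z_s}\le\sqrt{R_s^{\mathrm{EB}}}$ and monotonicity of $\mathfrak B_\alpha$ to propagate $U_{t+1}^{\mathrm{dist,EB}}(\alpha)\le\widehat U_{t+1}^{\mathrm{dist,EB}}(\alpha)$; inducting on $Z_s\le R_s^{\mathrm{EB}}$ instead of directly on this envelope inequality is only a cosmetic difference.
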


The observable construction preserves the trajectory adaptation highlighted in Remark~\ref{remark:comparison:bounded:subgaussian}: the quadratic process~\eqref{eq:eb-observable-variance} depends on the realized values $\norm{g_s}^2$, while the recursive observable distance envelopes $R_s^{\mathrm{EB}}$ remove the remaining dependence on the unknown optimizer. The global bound $G$ remains only in the predictable-range process~\eqref{eq:eb-observable-range} and, when used, in the canonical initialization $R_0 = G^2/\mu^2$. It remains to understand the cost of accommodating the growing predictable range. The empirical Bernstein boundary introduces the additional linear range term in~\eqref{eq:eb-boundary}, which has no counterpart in the sub-Gaussian construction. The next proposition shows that this term is of lower order and that the full observable confidence sequence retains the near-$1/t$ decay established in the preceding section. Define
\begin{equation}
\label{eq:eb-rate-scale}
    B_{\mathrm{EB}}
    \coloneqq
    1
    +
    \frac{G^2}{\mu^2},
\end{equation}
and, for $t \ge t_0$,
\begin{equation}
\label{eq:eb-rate-log}
    L_t^{\mathrm{EB}}(\alpha)
    \coloneqq
    1
    +
    \log \frac{1}{\alpha}
    +
    \log \log(et)
    +
    \log \log(e B_{\mathrm{EB}}).
\end{equation}
To isolate the contribution of the linear range term in~\eqref{eq:eb-boundary}, define
\begin{equation}
\label{eq:eb-range-contribution}
    \mathcal R_t^{\mathrm{EB}}(\alpha)
    \coloneqq
    \frac{
        2 \max\left\{
            \widehat B_t^{\mathrm{dist,EB}},
            1
        \right\}
    }{A_t}
    \Gamma_\alpha\left(
        \widehat V_t^{\mathrm{dist,EB}},
        \widehat B_t^{\mathrm{dist,EB}}
    \right).
\end{equation}

With these quantities in place, we are ready to state the proposition.

\begin{proposition}[Decay rate of the refined observable confidence sequence for $\norm{x_{t+1} - x^\star}^2$]
\label{thm:observable-distance-eb-rate}
Fix $\alpha \in (0,1)$ and suppose the conditions of Theorem~\ref{thm:observable-distance-eb} hold. Let $R_0 = G^2/\mu^2$, $\eta_t = 1/(\mu t)$, and $t_0 = 3$. Then there exist universal constants $C, C_{\mathrm r} < \infty$ such that, almost surely,
\begin{equation}
\label{eq:observable-distance-eb-rate}
    \widehat U_{t+1}^{\mathrm{dist,EB}}(\alpha)
    \le
    C B_{\mathrm{EB}}
    \left[
        \frac{L_{t+1}^{\mathrm{EB}}(\alpha)}{t+1}
        +
        \left(
            \frac{L_{t+1}^{\mathrm{EB}}(\alpha)}{t+1}
        \right)^2
    \right],
    \qquad
    t \ge 3,
\end{equation}
and, whenever $t \ge 4$ and $L_t^{\mathrm{EB}}(\alpha) \le t$,
\begin{equation}
\label{eq:eb-range-lower-order}
    \mathcal R_t^{\mathrm{EB}}(\alpha)
    \le
    C_{\mathrm r} B_{\mathrm{EB}}
    \left(
        \frac{L_t^{\mathrm{EB}}(\alpha)}{t}
    \right)^{3/2}.
\end{equation}
\end{proposition}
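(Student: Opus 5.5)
The plan is a strong induction on $t$, modeled on the proof of Proposition~\ref{thm:rate-distance-CS}, with the inductive hypothesis that the envelopes $R_s^{\mathrm{EB}}$ obey the claimed bound. With $\eta_t = 1/(\mu t)$ and $t_0 = 3$ we have $a_s = 1-2/s = (s-2)/s$. Hence
\[
A_t=\prod_{s=3}^t \frac{s}{s-2}=\frac{t(t-1)}{2}, \qquad A_s\eta_s=\frac{s-1}{2\mu}, \qquad A_s\eta_s^2=\frac{s-1}{2\mu^2 s}\le \frac{1}{2\mu^2}.
\]
The deterministic part of~\eqref{eq:eb-observable-distance} is then
\[
\frac{1}{A_t}\Bigl[R_0+\sum_{s\le t} A_s\eta_s^2 G^2\Bigr]\lesssim \frac{G^2}{\mu^2}\Bigl(\frac{1}{t^2}+\frac{1}{t}\Bigr),
\]
which is of order $B_{\mathrm{EB}}/t$. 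It remains to control $\mathfrak B_\alpha(\widehat V_t,\widehat B_t)/A_t$ using the hypothesis
\[
R_s^{\mathrm{EB}}\le K B_{\mathrm{EB}}\bigl(\ell_s+\ell_s^2\bigr), \qquad \ell_s\coloneqq \frac{L_s^{\mathrm{EB}}}{s},
\]
for $3<s\le t$, with $K$ a universal constant to be fixed. At $s=3$ we use $R_0=G^2/\mu^2$, which fits the same form once $K$ is large enough.

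\textbf{Inductive step.} Using $\norm{g_s}\le G$, the quadratic process satisfies
\[
\widehat V_t\le \frac{G^2}{\mu^2}\sum_{s\le t}(s-1)^2 R_s \lesssim K B_{\mathrm{EB}}^2\bigl(t^2 L_t + t L_t^2\bigr),
\]
using that $L_s$ is nondecreasing in $s$. The range process satisfies
\[
\widehat B_t\le \frac{G}{\mu}\max_{s\le t} (s-1)\sqrt{R_s}\lesssim \sqrt{K}\,B_{\mathrm{EB}}\bigl(\sqrt{tL_t}+L_t\bigr).
\]
Both quantities are at most polynomial in $t$, $K$, and $B_{\mathrm{EB}}$. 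This controls the stitching logarithms: $k(\widehat V_t)$ and $\ell(\widehat B_t)$ are $O(\log t+\log B_{\mathrm{EB}}+\log K+\log L_t)$. Since $\log(x+1)\lesssim 1+\log\log(e\,\cdot)$ in the relevant sense, we get
\[
\Gamma_\alpha\lesssim L_{t+1}^{\mathrm{EB}}+\log\log(eK).
\]
We then normalize by $A_t\asymp t^2$ and treat the two terms of $\mathfrak B_\alpha$ separately.

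\textbf{Square-root term.} Up to constants it is bounded by
\[
\frac{\sqrt{K}\,B_{\mathrm{EB}}\sqrt{(t^2L+tL^2)\,\Gamma}}{t^2}\lesssim \sqrt{K}\,B_{\mathrm{EB}}\Bigl(\frac{L}{t}+\frac{L^{3/2}}{t^{3/2}}\Bigr),
\]
and the second piece is at most $\tfrac12(L/t+L^2/t^2)$ by AM--GM.

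\textbf{Range term.} It gives
\[
\frac{\sqrt{K}\,B_{\mathrm{EB}}(\sqrt{tL}+L)\,\Gamma}{t^2}\lesssim \sqrt{K}\,B_{\mathrm{EB}}\Bigl(\frac{L^{3/2}}{t^{3/2}}+\frac{L^2}{t^2}\Bigr).
\]

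Both terms therefore scale as $\sqrt{K}\,B_{\mathrm{EB}}(\ell+\ell^2)$ times a universal constant. Passing from $L_t/t$ to $L_{t+1}/(t+1)$ costs only a factor $2$. We close the induction by choosing $K$ so that $c_0+c_1\sqrt{K}\,(1+\log\log(eK))\le K$, which is possible because the left side grows sublinearly in $K$. This square-root self-consistency is the same mechanism as in Proposition~\ref{thm:rate-distance-CS}.

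\textbf{Main obstacle.} The delicate step is the bookkeeping of the $\log\log(eK)$ factor and of the $\Gamma$-dependence inside $\Gamma$. We need the fixed point in $K$ to exist with a truly universal constant, and we need the mixed $L^{3/2}/t^{3/2}$ terms to be absorbed into $\ell+\ell^2$ without weakening the $B_{\mathrm{EB}}$ prefactor. One point to check: $B_{\mathrm{EB}}^2$ inside the square root gives exactly $B_{\mathrm{EB}}$ after the root, but $\widehat B_t$ carries $G\sqrt{R}\,/\mu\lesssim B_{\mathrm{EB}}$, which also yields a single power. This works because $G/\mu\le\sqrt{B_{\mathrm{EB}}}$.

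\textbf{Range estimate~\eqref{eq:eb-range-lower-order}.} This is then immediate from the bounds above. Because $L_t\le t$, we have $\ell_s\le 1$ and hence $R_s\lesssim B_{\mathrm{EB}}\ell_s$. Consequently
\[
\max\{\widehat B_t,1\}\lesssim B_{\mathrm{EB}}\sqrt{tL_t},
\]
where $B_{\mathrm{EB}}\ge1$ and $tL_t\ge1$ absorb the unit floor. Since $\Gamma\lesssim L_t$ with the now fixed $K$, dividing by $A_t\asymp t^2$ yields $C_{\mathrm r}B_{\mathrm{EB}}(L_t/t)^{3/2}$.
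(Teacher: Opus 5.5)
Your proposal is correct and follows the paper's proof essentially step for step. It uses the same induction hypothesis $\widehat U_s^{\mathrm{dist,EB}}(\alpha)\le K B_{\mathrm{EB}}(\ell_s+\ell_s^2)$, the same polynomial bounds on $\widehat V_t^{\mathrm{dist,EB}}$ and $\widehat B_t^{\mathrm{dist,EB}}$, the same $\log\log K$ cost inside $\Gamma_\alpha$, and closes the induction because $\sqrt{K}\,(1+\log\log(eK)) = o(K)$.

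One small fix is needed in your final step. Since $\ell_s = L_s^{\mathrm{EB}}(\alpha)/s$ is nonincreasing in $s$, the condition $L_t^{\mathrm{EB}}(\alpha)\le t$ does not give $\ell_s\le 1$ for $s<t$. The bound $\max\{\widehat B_t^{\mathrm{dist,EB}},1\}\lesssim B_{\mathrm{EB}}\sqrt{tL_t}$ still holds, because it follows directly from your inductive estimate $\widehat B_t^{\mathrm{dist,EB}}\lesssim \sqrt{K}\,B_{\mathrm{EB}}(\sqrt{tL_t}+L_t)$ together with $L_t\le\sqrt{tL_t}$ whenever $L_t\le t$.
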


Proposition~\ref{thm:observable-distance-eb-rate} yields two useful conclusions. First, once $L_{t+1}^{\mathrm{EB}}(\alpha) \le t+1$, the observable confidence sequence satisfies
\[
    \widehat U_{t+1}^{\mathrm{dist,EB}}(\alpha)
    \le
    2 C B_{\mathrm{EB}}
    \frac{L_{t+1}^{\mathrm{EB}}(\alpha)}{t+1},
\]
thereby recovering the same near-$1/t$ worst-case decay rate as the sub-Gaussian construction. For fixed $\alpha$ and fixed problem parameters, this condition holds for all sufficiently large $t$. Second,~\eqref{eq:eb-range-lower-order} shows that the additional contribution induced by the growing predictable range is of lower order than the leading $L_t^{\mathrm{EB}}(\alpha)/t$ term.

\begin{remark}[Alternative polynomial stepsize]
\label{rem:eb-alternative-polynomial-stepsize}
The same conclusion holds for the stepsize
\[
    \eta_t
    =
    \frac{2}{\mu(t+1)}.
\]
In this case, $1 - 2 \mu \eta_t = (t-3)/(t+1)$, so the natural analysis start time is $t_0 = 4$. Following similar steps to those in the proof of Proposition~\ref{thm:observable-distance-eb-rate}, there exists a universal constant $C < \infty$ such that, almost surely, for every $t \ge 4$,
\[
    \widehat U_{t+1}^{\mathrm{dist,EB}}(\alpha)
    \le
    C B_{\mathrm{EB}}
    \left[
        \frac{L_{t+1}^{\mathrm{EB}}(\alpha)}{t+1}
        +
        \left(
            \frac{L_{t+1}^{\mathrm{EB}}(\alpha)}{t+1}
        \right)^2
    \right].
\]
The analogue of \eqref{eq:eb-range-lower-order} holds as well, with a different universal constant. The proof is similar and is omitted. \hfill $\clubsuit$
\end{remark}


\subsection{Suboptimality of the weighted-average iterate}
\label{subsec:empirical Bernstein-suboptimality}

We now turn to the weighted-average suboptimality. Recall from Lemma~\ref{lem:weighted-suboptimality-decomposition} that the suboptimality analysis reduces to controlling the partial sum of the directional gradient noise terms $E_t = t\,\inner{x^\star - x_t}{\xi_t}$. To bring this term within the scope of Theorem~\ref{thm:variable-range-eb}, define the corresponding uncentered directional stochastic-gradient term
\begin{equation}
\label{eq:eb-sub-H}
    H_t^{\mathrm{sub}}
    \coloneqq
    t\,\inner{x^\star - x_t}{g_t},
    \qquad
    t \ge t_0.
\end{equation}
By conditional unbiasedness, $H_t^{\mathrm{sub}} - \E\left[ H_t^{\mathrm{sub}} \, \middle| \, \cF_{t-1}\right] = E_t$. Thus, the centered process appearing in Theorem~\ref{thm:variable-range-eb} is exactly the stochastic term already isolated by the suboptimality decomposition. Under Assumption~\ref{assump:bounded-stochastic-gradients},
\[
    \left|H_t^{\mathrm{sub}}\right|
    \le
    G t \sqrt{Z_t}.
\]
As in the distance construction, the bound $Z_t \lesssim L_t/t$ implies that the predictable range can grow as $\sqrt{t L_t}$ and need not remain bounded over the infinite horizon. Accordingly, define the latent quadratic and range processes
\begin{align}
\label{eq:eb-sub-latent-variance}
    V_t^{\mathrm{sub,EB}}
    &\coloneqq
    \sum_{s = t_0}^t
    \left(H_s^{\mathrm{sub}}\right)^2
    =
    \sum_{s = t_0}^t
    s^2
    \inner{g_s}{x^\star - x_s}^2,
    \\
\label{eq:eb-sub-latent-range}
    B_t^{\mathrm{sub,EB}}
    &\coloneqq
    G
    \max_{t_0 \le s \le t}
    s \sqrt{Z_s}.
\end{align}
These are the corresponding quadratic and predictable-range processes for the suboptimality analysis. Moreover, since $Z_t \le G^2/\mu^2$ almost surely, $H_t^{\mathrm{sub}}$ is integrable for every deterministic $t \ge t_0$.

Combining Lemma~\ref{lem:weighted-suboptimality-decomposition} with Theorem~\ref{thm:variable-range-eb} gives the latent empirical Bernstein confidence sequence
\begin{equation}
\label{eq:adaptive-suboptimality-eb}
    U_t^{\mathrm{sub,EB}}(\alpha)
    \coloneqq
    \frac{1}{S_t}
    \left[
        \frac{1}{\mu}
        \sum_{s = t_0}^t
        \norm{g_s}^2
        +
        \frac{\mu t_0 (t_0 - 1)}{4}
        Z_{t_0}
        +
        \mathfrak B_\alpha\left(
            V_t^{\mathrm{sub,EB}},
            B_t^{\mathrm{sub,EB}}
        \right)
    \right],
    \qquad
    t \ge t_0,
\end{equation}
which satisfies
\[
    \Prob\left(
        \forall t \ge t_0:\
        f(\bar x_t) - f(x^\star)
        \le
        U_t^{\mathrm{sub,EB}}(\alpha)
    \right)
    \ge
    1-\alpha.
\]
This is the empirical Bernstein counterpart of Proposition~\ref{thm:adaptive-suboptimality-CS} from the preceding section. The argument follows the same overall logic, with Theorem~\ref{thm:variable-range-eb} providing the corresponding concentration step. We defer the formal proof of this confidence-sequence guarantee to Proposition~\ref{thm:adaptive-suboptimality-eb} in the appendix.

As in the distance construction, the latent bound is not observable because $Z_{t_0}$, the directions $x^\star - x_s$, and the distances entering $B_t^{\mathrm{sub,EB}}$ depend on the unknown optimizer. Here this dependence can be removed directly using the observable distance confidence sequence constructed above. Let $R_0$ be any $\cF_{t_0 - 1}$-measurable quantity satisfying $R_0 \ge Z_{t_0}$ almost surely. For $\beta \in (0,1)$, define the observable distance envelopes
\begin{equation}
\label{eq:eb-sub-radius-proxy}
    R_{t_0}^{\mathrm{dist,EB}}(\beta)
    \coloneqq
    R_0,
    \qquad
    R_s^{\mathrm{dist,EB}}(\beta)
    \coloneqq
    \widehat U_s^{\mathrm{dist,EB}}(\beta),
    \quad
    s \ge t_0 + 1.
\end{equation}
For $t \ge t_0$, set
\begin{align}
\label{eq:eb-sub-observable-variance}
    \widehat V_t^{\mathrm{sub,EB}}(\beta)
    &\coloneqq
    \sum_{s = t_0}^t
    s^2 \norm{g_s}^2
    R_s^{\mathrm{dist,EB}}(\beta),
    \\
\label{eq:eb-sub-observable-range}
    \widehat B_t^{\mathrm{sub,EB}}(\beta)
    &\coloneqq
    G
    \max_{t_0 \le s \le t}
    s \sqrt{R_s^{\mathrm{dist,EB}}(\beta)}.
\end{align}
Given $\alpha \in (0,1)$, define
\begin{equation}
\label{eq:eb-sub-observable-cs}
    \widehat U_t^{\mathrm{sub,EB}}(\alpha)
    \coloneqq
    \frac{1}{S_t}
    \left[
        \frac{1}{\mu}
        \sum_{s = t_0}^t
        \norm{g_s}^2
        +
        \frac{\mu t_0 (t_0 - 1)}{4}
        R_0
        +
        \mathfrak B_{\alpha/2}\left(
            \widehat V_t^{\mathrm{sub,EB}}(\alpha/2),
            \widehat B_t^{\mathrm{sub,EB}}(\alpha/2)
        \right)
    \right],
    \qquad
    t \ge t_0.
\end{equation}
As in the distance construction, the observable distance envelopes dominate the unknown distances simultaneously and therefore yield observable upper bounds on both latent arguments of $\mathfrak B_{\alpha/2}$. The use of confidence level $\alpha/2$ accounts for the simultaneous validity of the distance and suboptimality confidence sequences.

\begin{theorem}[Refined observable confidence sequence for $f(\bar x_t) - f(x^\star)$]
\label{thm:observable-suboptimality-eb}
Suppose Assumptions~\ref{assump:strong-convexity}--\ref{assump:stepsize-bound} and~\ref{assump:bounded-stochastic-gradients} hold. Let $t_0 \ge 4$ and $\eta_t = 2/(\mu(t+1))$ for every $t \ge t_0$, and let $R_0$ be an $\cF_{t_0 - 1}$-measurable quantity satisfying $R_0 \ge Z_{t_0}$ almost surely. Then, for every $\alpha \in (0,1)$, the process $\{\widehat U_t^{\mathrm{sub,EB}}(\alpha)\}_{t \ge t_0}$ is well defined and fully observable from $R_0$ and the SGD trajectory. Moreover,
\[
    \Prob\left(
        \forall t \ge t_0:\
        f(\bar x_t) - f(x^\star)
        \le
        \widehat U_t^{\mathrm{sub,EB}}(\alpha)
    \right)
    \ge
    1-\alpha.
\]
\end{theorem}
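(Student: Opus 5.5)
The plan is to combine two simultaneous-coverage events by a union bound, exactly as for Theorem~\ref{thm:observable-suboptimality-CS}, with the empirical Bernstein ingredients in place of the sub-Gaussian ones.

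\textbf{Observability.} For $s \ge t_0+1$, the proxy $R_s^{\mathrm{dist,EB}}(\alpha/2)=\widehat U_s^{\mathrm{dist,EB}}(\alpha/2)$ is $\cF_{s-1}$-measurable by Theorem~\ref{thm:observable-distance-eb}, and $R_{t_0}^{\mathrm{dist,EB}}=R_0$ is $\cF_{t_0-1}$-measurable. Consequently, at the end of iteration $t$, the following are all $\cF_t$-measurable: $\widehat V_t^{\mathrm{sub,EB}}(\alpha/2)$, which uses $g_s$ and $R_s$ for $s\le t$; $\widehat B_t^{\mathrm{sub,EB}}(\alpha/2)$; and the remaining terms of~\eqref{eq:eb-sub-observable-cs}. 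Hence $\widehat U_t^{\mathrm{sub,EB}}(\alpha)$ is fully observable.

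\textbf{First event (distance).} Let $E_1$ be the simultaneous-coverage event of Theorem~\ref{thm:observable-distance-eb} at level $\alpha/2$. The theorem applies here because the stepsize is deterministic, so~\eqref{eq:eb-integrability-condition} holds automatically. On $E_1$ we have $Z_s \le R_s^{\mathrm{dist,EB}}(\alpha/2)$ for all $s\ge t_0$, where the case $s=t_0$ holds by the assumption on $R_0$.

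\textbf{Second event (noise).} Let $E_2$ be the event of Theorem~\ref{thm:variable-range-eb} at level $\alpha/2$, applied with
\[
H_t=H_t^{\mathrm{sub}}, \qquad c_t = G t\sqrt{Z_t}.
\]
This choice is admissible:
\begin{itemize}
\item[(a)] $c_t$ is predictable, since $x_t$ is $\cF_{t-1}$-measurable, and it is finite because $Z_t\le G^2/\mu^2$.
\item[(b)] $|H_t^{\mathrm{sub}}|\le c_t$ by Cauchy--Schwarz and Assumption~\ref{assump:bounded-stochastic-gradients}.
\item[(c)] $H_t^{\mathrm{sub}}$ is integrable, being bounded.
\end{itemize}
By unbiasedness, the centered sum equals $\bar E_t$. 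Hence, on $E_2$,
\[
\bar E_t\le \mathfrak B_{\alpha/2}\bigl(V_t^{\mathrm{sub,EB}},B_t^{\mathrm{sub,EB}}\bigr) \qquad \text{for all } t\ge t_0.
\]
The union bound gives $\Prob(E_1\cap E_2)\ge 1-\alpha$.

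\textbf{Domination on $E_1\cap E_2$.} Cauchy--Schwarz gives $\inner{g_s}{x^\star-x_s}^2\le \norm{g_s}^2 Z_s\le \norm{g_s}^2R_s^{\mathrm{dist,EB}}(\alpha/2)$. This yields $V_t^{\mathrm{sub,EB}}\le \widehat V_t^{\mathrm{sub,EB}}(\alpha/2)$, and similarly $B_t^{\mathrm{sub,EB}}\le \widehat B_t^{\mathrm{sub,EB}}(\alpha/2)$. Since $\mathfrak B_{\alpha/2}$ is nondecreasing in each argument (Theorem~\ref{thm:variable-range-eb}), the latent boundary is dominated by the observable one. Also $Z_{t_0}\le R_0$. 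Substituting these bounds into the decomposition~\eqref{eq:subopt-normalized-decomposition} of Lemma~\ref{lem:weighted-suboptimality-decomposition}, together with $f(\bar x_t)-f(x^\star)\le\bar F_t$, gives $f(\bar x_t)-f(x^\star)\le \widehat U_t^{\mathrm{sub,EB}}(\alpha)$ simultaneously for all $t$.

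\textbf{Main point to check.} The most care is needed in verifying that Theorem~\ref{thm:variable-range-eb} may be applied with a range $c_t$ that depends on the latent $Z_t$. This is legitimate because the theorem needs only predictability and almost-sure finiteness of $c_t$, not observability. The substitution of observable quantities then happens only afterwards, on $E_1$, through monotonicity. Note that no recursion is needed for this part, unlike in Theorem~\ref{thm:observable-distance-eb}.
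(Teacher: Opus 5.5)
Your proposal is correct and follows the paper's proof essentially step for step. You take a union bound over the observable distance coverage event and the latent empirical Bernstein noise event, each at level $\alpha/2$, then use Cauchy--Schwarz domination of both arguments of $\mathfrak B_{\alpha/2}$ and its monotonicity, with the only cosmetic difference being that you apply Theorem~\ref{thm:variable-range-eb} to $H_t^{\mathrm{sub}}$ directly whereas the paper packages that step as Proposition~\ref{thm:adaptive-suboptimality-eb} and cites it.
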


The observable construction preserves the same trajectory adaptation: the quadratic process~\eqref{eq:eb-sub-observable-variance} depends on the realized values $\norm{g_s}^2$, while the observable distance envelopes remove the remaining dependence on the unknown optimizer. As in the distance analysis, it remains to quantify the cost of the growing predictable range. Recall $B_{\mathrm{EB}}$ from~\eqref{eq:eb-rate-scale}, and define
\begin{equation}
\label{eq:eb-sub-rate-scale}
    B_{\mathrm{sub}}^{\mathrm{EB}}
    \coloneqq
    1
    +
    \mu B_{\mathrm{EB}}
    =
    1
    +
    \mu
    +
    \frac{G^2}{\mu},
\end{equation}
and, for $t \ge t_0$,
\begin{equation}
\label{eq:eb-sub-rate-log}
    L_t^{\mathrm{sub,EB}}(\alpha)
    \coloneqq
    1
    +
    \log \frac{2}{\alpha}
    +
    \log \log(et)
    +
    \log \log(e B_{\mathrm{EB}})
    +
    \log \log\left(e B_{\mathrm{sub}}^{\mathrm{EB}}\right).
\end{equation}
To isolate the contribution of the linear range term in~\eqref{eq:eb-boundary}, define
\begin{equation}
\label{eq:eb-sub-range-contribution}
    \mathcal R_t^{\mathrm{sub,EB}}(\alpha)
    \coloneqq
    \frac{
        2 \max\left\{
            \widehat B_t^{\mathrm{sub,EB}}(\alpha/2),
            1
        \right\}
    }{S_t}
    \Gamma_{\alpha/2}\left(
        \widehat V_t^{\mathrm{sub,EB}}(\alpha/2),
        \widehat B_t^{\mathrm{sub,EB}}(\alpha/2)
    \right).
\end{equation}

With these quantities in place, we are ready to state the proposition.

\begin{proposition}[Decay rate of the refined observable confidence sequence for $f(\bar x_t) - f(x^\star)$]
\label{thm:observable-suboptimality-eb-rate}
Fix $\alpha \in (0,1)$ and suppose Assumptions~\ref{assump:strong-convexity}--\ref{assump:stepsize-bound} and~\ref{assump:bounded-stochastic-gradients} hold. Let $R_0 = G^2/\mu^2$, $t_0 = 4$, and $\eta_t = 2/(\mu(t+1))$ for every $t \ge 4$. Then there exist universal constants $C, C_{\mathrm r} < \infty$ such that, almost surely,
\begin{equation}
\label{eq:observable-suboptimality-eb-rate}
    \widehat U_t^{\mathrm{sub,EB}}(\alpha)
    \le
    C B_{\mathrm{sub}}^{\mathrm{EB}}
    \left[
        \frac{L_t^{\mathrm{sub,EB}}(\alpha)}{t}
        +
        \left(
            \frac{L_t^{\mathrm{sub,EB}}(\alpha)}{t}
        \right)^2
    \right],
    \qquad
    t \ge 4,
\end{equation}
and, whenever $L_t^{\mathrm{sub,EB}}(\alpha) \le t$,
\begin{equation}
\label{eq:eb-sub-range-lower-order}
    \mathcal R_t^{\mathrm{sub,EB}}(\alpha)
    \le
    C_{\mathrm r} B_{\mathrm{sub}}^{\mathrm{EB}}
    \left(
        \frac{L_t^{\mathrm{sub,EB}}(\alpha)}{t}
    \right)^{3/2}.
\end{equation}
\end{proposition}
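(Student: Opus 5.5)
The proof is a deterministic, almost-sure calculation. We bound each of the three terms in~\eqref{eq:eb-sub-observable-cs} separately, using the rate bound for the observable EB distance envelope from Remark~\ref{rem:eb-alternative-polynomial-stepsize} (stepsize $\eta_t=2/(\mu(t+1))$, $t_0=4$) as the only input.

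\textbf{Step 1: deterministic terms.} With $t_0=4$ we have $S_t=\sum_{s=4}^t s\asymp t^2$ for $t\ge4$. The gradient term satisfies $\mu^{-1}\sum_s\norm{g_s}^2\le G^2 t/\mu$, and the initialization term equals $3\mu R_0=3G^2/\mu$. After division by $S_t$, both are at most $C(G^2/\mu)/t\le CB_{\mathrm{sub}}^{\mathrm{EB}}/t$.

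\textbf{Step 2: observable quadratic and range processes.} By the remark, almost surely for all $s\ge5$,
\[
R_s^{\mathrm{dist,EB}}(\alpha/2)\le CB_{\mathrm{EB}}\Bigl[\tfrac{\ell_s}{s}+\bigl(\tfrac{\ell_s}{s}\bigr)^2\Bigr],
\]
where $\ell_s=L_s^{\mathrm{EB}}(\alpha/2)\le L_t^{\mathrm{sub,EB}}(\alpha)$ for $s\le t$. This uses that $L_s^{\mathrm{EB}}$ is nondecreasing in $s$ and that $\log(2/\alpha)$ appears in $L^{\mathrm{sub,EB}}$. We also have $R_4=R_0\le B_{\mathrm{EB}}$. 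Write $L:=L_t^{\mathrm{sub,EB}}(\alpha)$. Then
\[
\widehat V_t^{\mathrm{sub,EB}}\le G^2\sum_{s\le t}s^2R_s\lesssim G^2B_{\mathrm{EB}}\bigl(t^2L+tL^2\bigr),
\]
because $\sum_s s\,\ell_s\lesssim t^2L$ and $\sum_s\ell_s^2\lesssim tL^2$. Similarly,
\[
\widehat B_t^{\mathrm{sub,EB}}\le G\max_s s\sqrt{R_s}\lesssim G\sqrt{B_{\mathrm{EB}}}\bigl(\sqrt{tL}+L\bigr).
\]
Now use $G^2\le\mu^2B_{\mathrm{EB}}$, which gives $G^2B_{\mathrm{EB}}\le\mu B_{\mathrm{EB}}\cdot B_{\mathrm{sub}}^{\mathrm{EB}}\cdot\mu$. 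More directly, $G\sqrt{B_{\mathrm{EB}}}\le\mu B_{\mathrm{EB}}\le B_{\mathrm{sub}}^{\mathrm{EB}}$ and $G^2B_{\mathrm{EB}}\le\mu (B_{\mathrm{sub}}^{\mathrm{EB}})^2$. Matching the prefactor exactly to $B_{\mathrm{sub}}^{\mathrm{EB}}$, rather than a power of it, is the main bookkeeping obstacle. I would try to keep one factor $G^2/\mu$ or $\mu B_{\mathrm{EB}}$ outside the square root and absorb the rest through $\sqrt{ab}\le(a+b)/2$-type splits. If that fails, I would accept a polynomial in $B$ inside the $\log\log$ terms: since $\log\log(eB^c)\lesssim 1+\log\log(eB)$, polynomial growth there costs only constants.

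\textbf{Step 3: the log factor and the boundary.} Since $\widehat V$ and $\widehat B$ are polynomial in $t$ and in $B_{\mathrm{EB}}$, $B_{\mathrm{sub}}^{\mathrm{EB}}$, the dyadic indices satisfy $k,\ell\lesssim\log(et)+\log(eB_{\mathrm{EB}})+\log(eB_{\mathrm{sub}}^{\mathrm{EB}})+\log L$. Hence $\log((k+1)(k+2))$ is bounded by a constant multiple of the $\log\log$ terms, using $\log(a+b+c)\le\log 3+\max\log$. Together with $\log\log L\lesssim L$, this gives $\Gamma_{\alpha/2}\lesssim L$. The boundary $\mathfrak B$ then splits into two pieces:
\begin{itemize}
\item the square-root piece is at most $\sqrt{\widehat V\,\Gamma}\lesssim\sqrt{\mu}\,B\,(tL+\sqrt t L^{3/2})$, and dividing by $S_t\asymp t^2$ gives $\lesssim B(L/t+(L/t)^{3/2})$;
\item the range piece is at most $\max\{\widehat B,1\}\Gamma\lesssim B(\sqrt{tL}+L)L$, and dividing by $t^2$ gives $B((L/t)^{3/2}+L^2/t^2)$.
\end{itemize}
This last bound is exactly $\mathcal R_t^{\mathrm{sub,EB}}$. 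When $L\le t$, we have $(L/t)^2\le(L/t)^{3/2}$, which yields~\eqref{eq:eb-sub-range-lower-order}. Finally, $(L/t)^{3/2}\le L/t+(L/t)^2$ always holds, which gives~\eqref{eq:observable-suboptimality-eb-rate} after collecting Steps 1–3. The $\sqrt\mu$ and the unit floors in $v_{\mathrm{eff}}$ and $b_{\mathrm{eff}}$ are absorbed using the additive $1+\mu$ in $B_{\mathrm{sub}}^{\mathrm{EB}}$ and the additive $1$ in $L$.
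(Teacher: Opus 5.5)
Your route is the same as the paper's. You insert the distance-envelope rate from Remark~\ref{rem:eb-alternative-polynomial-stepsize} at level $\alpha/2$, using $L_s^{\mathrm{EB}}(\alpha/2)\le L_t^{\mathrm{sub,EB}}(\alpha)$, into $\widehat V_t^{\mathrm{sub,EB}}(\alpha/2)$ and $\widehat B_t^{\mathrm{sub,EB}}(\alpha/2)$. You then show $\Gamma_{\alpha/2}\lesssim L$ and split $\mathfrak B_{\alpha/2}$ into its square-root and range pieces using $S_t\ge t^2/4$.

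One step is wrong. Neither $G^2B_{\mathrm{EB}}\le\mu^2B_{\mathrm{EB}}B_{\mathrm{sub}}^{\mathrm{EB}}$ nor $G^2B_{\mathrm{EB}}\le\mu\,(B_{\mathrm{sub}}^{\mathrm{EB}})^2$ holds in general. For $\mu=0.01$ and $G=1$ the left side is $10^4+1$, while both right sides are about $10^2$. So the $\sqrt\mu$ in your square-root piece is unjustified: for small $\mu$ that bound is too small by a factor of order $\mu^{-1/2}$. Absorbing a $\sqrt\mu$ into $B_{\mathrm{sub}}^{\mathrm{EB}}$ would in any case not keep a single power of it.

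The ``main bookkeeping obstacle'' you describe does not exist. The other inequality you wrote, $G\sqrt{B_{\mathrm{EB}}}\le\mu B_{\mathrm{EB}}\le B_{\mathrm{sub}}^{\mathrm{EB}}$, squares to $G^2B_{\mathrm{EB}}\le(B_{\mathrm{sub}}^{\mathrm{EB}})^2$. This gives $\sqrt{\widehat V_t^{\mathrm{sub,EB}}(\alpha/2)}\lesssim B_{\mathrm{sub}}^{\mathrm{EB}}\sqrt{t^2L+tL^2}$, with exactly one power of $B_{\mathrm{sub}}^{\mathrm{EB}}$ and no $\mu$, which is what the paper does. Your fallback of tolerating polynomial powers of $B$ only helps inside the iterated logarithms of $\Gamma_{\alpha/2}$, not in the prefactor.

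With this correction, the rest of your argument coincides with the paper's proof. That includes the unit floors, the range bound $B_{\mathrm{sub}}^{\mathrm{EB}}\bigl((L/t)^{3/2}+(L/t)^2\bigr)$, and the final comparisons of powers of $L/t$.
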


Proposition~\ref{thm:observable-suboptimality-eb-rate} yields the same two conclusions as in the distance analysis. First, once $L_t^{\mathrm{sub,EB}}(\alpha) \le t$, the observable confidence sequence satisfies
\[
    \widehat U_t^{\mathrm{sub,EB}}(\alpha)
    \le
    2 C B_{\mathrm{sub}}^{\mathrm{EB}}
    \frac{L_t^{\mathrm{sub,EB}}(\alpha)}{t},
\]
thereby recovering the same near-$1/t$ worst-case decay rate as the sub-Gaussian construction. For fixed $\alpha$ and fixed problem parameters, this condition holds for all sufficiently large $t$. Second,~\eqref{eq:eb-sub-range-lower-order} shows that the additional contribution induced by the growing predictable range is of lower order than the leading $L_t^{\mathrm{sub,EB}}(\alpha)/t$ term.


\section{Minibatch Extensions}
\label{sec:minibatching}

The confidence sequences developed in the preceding two sections admit natural extensions to minibatch SGD. Fix an integer $b \ge 1$. At every iteration $t \ge 1$, conditionally on $\cF_{t-1}$, suppose that we observe independent and identically distributed stochastic gradients $g_t^{(1)}, \ldots, g_t^{(b)}$ satisfying
\begin{equation}
\label{eq:minibatch-gradient-assumptions}
    \E\left[
        g_t^{(i)}
        \,\middle|\,
        \cF_{t-1}
    \right]
    =
    \nabla f(x_t),
    \qquad
    i = 1, \ldots, b.
\end{equation}
Let $\cF_t \coloneqq \sigma\left(\cF_{t-1}, g_t^{(1)}, \ldots, g_t^{(b)}\right)$ denote the filtration generated by the history through iteration $t$. Define the minibatch gradient and its noise by
\begin{equation}
\label{eq:minibatch-gradient}
    \bar g_t
    \coloneqq
    \frac{1}{b}
    \sum_{i=1}^b
    g_t^{(i)},
    \qquad
    \bar\xi_t
    \coloneqq
    \bar g_t - \nabla f(x_t),
\end{equation}
and let the SGD update use $\bar g_t$ in place of $g_t$.

For the sub-Gaussian confidence sequences of Section~\ref{sec:subgaussian-confidence-sequences}, the effect of minibatching is immediate. Suppose, in addition, that the individual noises $\xi_t^{(i)} \coloneqq g_t^{(i)} - \nabla f(x_t)$ satisfy the conditional sub-Gaussian condition in Assumption~\ref{assump:stochastic-gradients} with the same variance proxy $\sigma^2$. Conditional independence then implies that their average $\bar\xi_t$ satisfies the same conditional sub-Gaussian bound with variance proxy $\sigma^2/b$. Consequently, the confidence sequences of Section~\ref{sec:subgaussian-confidence-sequences} extend directly to minibatch SGD by replacing $\sigma^2$ with $\sigma^2/b$ in the corresponding variance processes and replacing $g_t$ with $\bar g_t$ in the SGD recursions.

The empirical Bernstein construction of Section~\ref{sec:refined-confidence-sequences} allows the information contained in the minibatch to be used more finely. The sub-Gaussian construction summarizes the stochastic fluctuations through the fixed conditional variance proxy $\sigma^2/b$. By contrast, the empirical Bernstein argument can be applied to the $b$ individual observations before they are averaged. Conditional independence allows the corresponding one-step exponential bounds to be combined while retaining the sum of the individual realized quadratic contributions. Thus, the leading square-root term enjoys the usual $1/\sqrt b$ minibatch reduction while continuing to adapt to the realized directional second moments within each minibatch, whereas the linear range term carries an explicit factor $1/b$.

This observation yields a general minibatch extension of Theorem~\ref{thm:variable-range-eb}, in which the centered process is formed from the minibatch averages while the realized quadratic process retains the individual observations within each minibatch. The resulting form will apply directly to both the distance and suboptimality confidence sequences.

\begin{proposition}[Minibatch empirical Bernstein bound with a growing predictable range]
\label{cor:minibatch-variable-range-eb}
Fix deterministic integers $t_0 \ge 1$ and $b \ge 1$, and let $\{\cF_t\}_{t \ge t_0-1}$ be a filtration. For every $t \ge t_0$, let $H_t^{(1)}, \ldots, H_t^{(b)}$ be integrable, $\cF_t$-measurable random variables that are conditionally independent given $\cF_{t-1}$ and have a common conditional mean $m_t \coloneqq \E\left[H_t^{(i)} \mid \cF_{t-1}\right]$, for $i = 1, \ldots, b$. Let $\{c_t\}_{t \ge t_0}$ be an almost surely finite,\footnote{As in Theorem~\ref{thm:variable-range-eb}, no uniform upper bound on the process $\{c_t\}_{t \ge t_0}$ is assumed.} nonnegative predictable process such that $\left|H_t^{(i)}\right| \le c_t$ almost surely for every $t \ge t_0$ and $i = 1, \ldots, b$. Define
\[
    \overline H_{t_0-1}^{\mathrm{MB}}
    \coloneqq
    0,
    \qquad
    \overline H_t^{\mathrm{MB}}
    \coloneqq
    \sum_{s=t_0}^t
    \left(
        \frac{1}{b}
        \sum_{i=1}^b
        H_s^{(i)}
        -
        m_s
    \right),
\]
and
\[
    W_{t_0-1}^{\mathrm{MB}}
    \coloneqq
    0,
    \qquad
    W_t^{\mathrm{MB}}
    \coloneqq
    \sum_{s=t_0}^t
    \sum_{i=1}^b
    \left(H_s^{(i)}\right)^2,
    \qquad
    C_t
    \coloneqq
    \max_{t_0 \le s \le t}
    c_s.
\]
Then, for every $\alpha \in (0,1)$,
\begin{equation}
\label{eq:minibatch-eb-bound}
    \Prob\left(
        \forall t \ge t_0:\
        \overline H_t^{\mathrm{MB}}
        \le
        \frac{1}{b}
        \mathfrak B_\alpha\left(
            W_t^{\mathrm{MB}},
            C_t
        \right)
    \right)
    \ge
    1-\alpha.
\end{equation}
\end{proposition}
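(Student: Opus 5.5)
The plan is to reduce Proposition~\ref{cor:minibatch-variable-range-eb} to the machinery behind Theorem~\ref{thm:variable-range-eb}. First multiply through by $b$: the claimed event is equivalent to
\[
    \sum_{s=t_0}^t \sum_{i=1}^b \bigl(H_s^{(i)} - m_s\bigr) \le \mathfrak B_\alpha\bigl(W_t^{\mathrm{MB}}, C_t\bigr)
    \quad \text{for all } t \ge t_0 .
\]
This is a statement about the centered sum of all $b$ individual observations per step. The quadratic process is the sum of their individual squares, and the range is the same predictable $c_t$.

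The one-step ingredient comes from Lemma~\ref{lem:eb-exponential}. Fix $\lambda \in [0, 1/C)$, where $C$ is a deterministic range level. On the predictable event $\{c_t \le C\}$, apply~\eqref{eq:eb-conditional} with $\mathcal G = \cF_{t-1}$ to each $H_t^{(i)}$. This gives a conditional expectation of
\[
    \exp\left(\lambda\bigl(H_t^{(i)} - m_t\bigr) - \frac{\lambda^2 (H_t^{(i)})^2}{2(1-\lambda C)}\right)
\]
that is at most one. Conditional independence given $\cF_{t-1}$ lets the conditional expectation of the product over $i$ factor into the product of these conditional expectations. So the product is also conditionally bounded by one. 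The lemma is stated for a deterministic range $b$, so applying it conditionally needs one step: use the pointwise inequality~\eqref{eq:eb-pointwise} with the $\cF_{t-1}$-measurable bound $c_t$, then take conditional expectations.

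The result is the per-step increment
\[
    \lambda \sum_i \bigl(H_t^{(i)} - m_t\bigr) - \frac{\lambda^2 \sum_i (H_t^{(i)})^2}{2(1-\lambda C)} .
\]
This has exactly the form used for a single observation, with $H_t - \E[H_t \mid \cF_{t-1}]$ replaced by the aggregated centered sum and $H_t^2$ replaced by $\sum_i (H_t^{(i)})^2$. The predictable truncation $\mathbf 1\{c_t \le C\}$ (or equivalently $\mathbf 1\{C_t \le C\}$) is handled as in Theorem~\ref{thm:variable-range-eb}. The increment is multiplied by the indicator, and the process stays a nonnegative supermartingale because the indicator is predictable.

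From here, rerun the proof of Theorem~\ref{thm:variable-range-eb} verbatim with these substitutions. That proof builds these truncated exponential supermartingales, applies Ville's inequality, and stitches over dyadic scales $2^k$ for the quadratic process and $2^\ell$ for the running range. It allocates error $\alpha/((k+1)(k+2)(\ell+1)(\ell+2))$ and picks $\lambda$ per cell to produce $\mathfrak B_\alpha$. None of these steps uses anything about the increments beyond the one-step supermartingale inequality and the definitions of $W_t$ and $C_t$. The conclusion is the displayed bound, and dividing by $b$ gives~\eqref{eq:minibatch-eb-bound}.

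The main obstacle is making this reuse rigorous rather than hand-waved. If Theorem~\ref{thm:variable-range-eb} is proven from an abstract condition, the cleanest route is to isolate it as a lemma: for adapted $Y_t$, $Q_t \ge 0$, and predictable $c_t$ such that for all deterministic $C$ and $\lambda \in [0,1/C)$,
\[
    \E\left[\exp\left(\mathbf 1\{c_t \le C\}\left(\lambda Y_t - \frac{\lambda^2 Q_t}{2(1-\lambda C)}\right)\right) \,\middle|\, \cF_{t-1}\right] \le 1,
\]
the stitched conclusion holds for $\sum_s Y_s$ with $\sum_s Q_s$ and $C_t$. The minibatch case then only needs the factorization step above. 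The only care needed there is integrability and the conditional factorization of products of conditionally independent nonnegative variables, which is standard.
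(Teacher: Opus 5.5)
Your proposal is correct and is essentially the paper's proof: apply Lemma~\ref{lem:eb-exponential} to each predictably truncated $H_t^{(i)}$ at a deterministic dyadic range level, multiply the one-step bounds using conditional independence given $\cF_{t-1}$, rerun the stitching of Theorem~\ref{thm:variable-range-eb} with centered increment $\sum_i (H_t^{(i)} - m_t)$ and quadratic increment $\sum_i (H_t^{(i)})^2$, and divide by $b$. One small imprecision: $\mathbf 1\{c_t \le C\}$ is not literally ``equivalent'' to the paper's $\mathbf 1\{C_t \le C\}$, but both indicators are predictable and equal one for all $s \le t$ whenever $C_t \le C$, which is all the stitching needs, so either choice works.
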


For $b=1$, Proposition~\ref{cor:minibatch-variable-range-eb} reduces exactly to Theorem~\ref{thm:variable-range-eb}. We next specialize Proposition~\ref{cor:minibatch-variable-range-eb} to the SGD setting. For this purpose, we impose the minibatch analogue of Assumption~\ref{assump:bounded-stochastic-gradients}.

\begin{assumption}[Bounded minibatch stochastic gradients]
\label{assump:bounded-minibatch-stochastic-gradients}
There exists $G < \infty$ such that
\[
    \norm{g_t^{(i)}} \le G
    \qquad
    \text{almost surely for all } t \ge t_0
    \text{ and } i = 1, \ldots, b.
\]
\end{assumption}

We now specialize Proposition~\ref{cor:minibatch-variable-range-eb} to the SGD confidence sequences, beginning with the last-iterate distance. For every $t \ge t_0$, define the observable within-minibatch second-moment matrix
\begin{equation}
\label{eq:minibatch-second-moment}
    \widehat\Sigma_t
    \coloneqq
    \frac{1}{b}
    \sum_{i = 1}^b
    g_t^{(i)}
    \bigl(g_t^{(i)}\bigr)^\top.
\end{equation}
As in the single-gradient construction of Section~\ref{sec:refined-confidence-sequences}, set
\[
    H_s^{(i),\mathrm{dist}}
    \coloneqq
    2 A_s \eta_s
    \inner{x^\star - x_s}{g_s^{(i)}},
    \qquad
    i = 1, \ldots, b.
\]
Their averaged centered increment is exactly the stochastic term arising from the distance decomposition with $\bar g_s$ in place of $g_s$. Moreover,
\[
    \sum_{i = 1}^b
    \left(
        H_s^{(i),\mathrm{dist}}
    \right)^2
    =
    4 b A_s^2 \eta_s^2
    (x_s - x^\star)^\top
    \widehat\Sigma_s
    (x_s - x^\star),
\]
while Assumption~\ref{assump:bounded-minibatch-stochastic-gradients} gives
\[
    \left|
        H_s^{(i),\mathrm{dist}}
    \right|
    \le
    2 G A_s \eta_s \sqrt{Z_s}.
\]
Thus, the minibatch quadratic process retains the realized directional second moment encoded by $\widehat\Sigma_s$, rather than reducing the stochastic fluctuations to a fixed variance proxy.

To make the resulting bound fully observable, we use the same recursive domination argument as in the distance construction of Section~\ref{sec:refined-confidence-sequences}. Since
\begin{equation}
\label{eq:spectral:relaxation}
    (x_s - x^\star)^\top
    \widehat\Sigma_s
    (x_s - x^\star)
    \le
    \lambda_{\max}(\widehat\Sigma_s) Z_s,
\end{equation}
the unknown distance can again be replaced by a previously constructed distance envelope. Fix $\alpha \in (0,1)$, and let $R_0$ be any $\cF_{t_0 - 1}$-measurable quantity satisfying $R_0 \ge Z_{t_0}$ almost surely. Define recursively
\begin{equation}
\label{eq:mb-eb-distance-envelope-proxy}
    R_{t_0}^{\mathrm{EB,MB}}
    \coloneqq
    R_0,
    \qquad
    R_s^{\mathrm{EB,MB}}
    \coloneqq
    \widehat U_s^{\mathrm{dist,EB,MB}}(\alpha),
    \quad
    s \ge t_0 + 1.
\end{equation}
For every $t \ge t_0$, set
\begin{align}
\label{eq:mb-eb-observable-distance-variance}
    \widehat V_t^{\mathrm{dist,EB,MB}}
    &\coloneqq
    4 b
    \sum_{s = t_0}^t
    A_s^2 \eta_s^2
    \lambda_{\max}(\widehat\Sigma_s)
    R_s^{\mathrm{EB,MB}},
    \\
\label{eq:mb-eb-observable-distance-range}
    \widehat B_t^{\mathrm{dist,EB,MB}}
    &\coloneqq
    2 G
    \max_{t_0 \le s \le t}
    A_s \eta_s
    \sqrt{R_s^{\mathrm{EB,MB}}},
\end{align}
and define
\begin{equation}
\label{eq:mb-eb-observable-distance}
    \widehat U_{t+1}^{\mathrm{dist,EB,MB}}(\alpha)
    \coloneqq
    \frac{1}{A_t}
    \left[
        R_0
        +
        \sum_{s = t_0}^t
        A_s \eta_s^2
        \norm{\bar g_s}^2
        +
        \frac{1}{b}
        \mathfrak B_\alpha\left(
            \widehat V_t^{\mathrm{dist,EB,MB}},
            \widehat B_t^{\mathrm{dist,EB,MB}}
        \right)
    \right].
\end{equation}
As in the single-gradient case, the construction is recursive: $R_s^{\mathrm{EB,MB}}$ is available before the minibatch at iteration $s$ is observed, and the quantities above determine the distance envelope for the next iterate. The following corollary is the minibatch analogue of Theorem~\ref{thm:observable-distance-eb}.

\begin{corollary}[Refined observable minibatch confidence sequence for $\norm{x_{t+1} - x^\star}^2$]
\label{cor:mb-observable-distance-eb}
Suppose Assumptions~\ref{assump:strong-convexity}--\ref{assump:stepsize-bound} and~\ref{assump:bounded-minibatch-stochastic-gradients} hold, and suppose~\eqref{eq:eb-integrability-condition} is satisfied. Let $R_0$ be an $\cF_{t_0 - 1}$-measurable quantity satisfying $R_0 \ge Z_{t_0}$ almost surely. Then, for every $\alpha \in (0,1)$, the recursively defined process $\{\widehat U_{t+1}^{\mathrm{dist,EB,MB}}(\alpha)\}_{t \ge t_0}$ is well defined and fully observable from $R_0$ and the minibatch SGD trajectory. Moreover,
\[
    \Prob\left(
        \forall t \ge t_0:\
        Z_{t+1}
        \le
        \widehat U_{t+1}^{\mathrm{dist,EB,MB}}(\alpha)
    \right)
    \ge
    1 - \alpha.
\]
\end{corollary}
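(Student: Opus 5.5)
We prove Corollary~\ref{cor:mb-observable-distance-eb} in two steps. First we establish a latent minibatch confidence sequence by combining the distance decomposition with Proposition~\ref{cor:minibatch-variable-range-eb}. Then we make it observable by the same recursive domination argument used for Theorem~\ref{thm:observable-distance-eb}.

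\textbf{Latent bound.} Lemma~\ref{lem:distance-decomposition} is applied with $\bar g_t$ in place of $g_t$ and $\bar\xi_t$ in place of $\xi_t$. This is legitimate because the decomposition is deterministic given the update, and it uses unbiasedness only to identify the noise term. For every $t\ge t_0$ it gives
\[
    Z_{t+1}\le \frac{1}{A_t}\Bigl[Z_{t_0}+\sum_{s=t_0}^t A_s\eta_s^2\norm{\bar g_s}^2+\sum_{s=t_0}^t 2A_s\eta_s\inner{x^\star-x_s}{\bar\xi_s}\Bigr].
\]
Next we set $H_s^{(i)}\coloneqq H_s^{(i),\mathrm{dist}}$ and check the hypotheses of Proposition~\ref{cor:minibatch-variable-range-eb}.
\begin{itemize}
    \item[(i)] \emph{Conditional independence.} The $H_s^{(i)}$ are conditionally independent given $\cF_{s-1}$, because $x_s,\eta_s,A_s$ are $\cF_{s-1}$-measurable and the $g_s^{(i)}$ are conditionally i.i.d.
    \item[(ii)] \emph{Common mean.} They share the conditional mean $m_s=2A_s\eta_s\inner{x^\star-x_s}{\nabla f(x_s)}$.
    \item[(iii)] \emph{Predictable range.} They satisfy $|H_s^{(i)}|\le c_s\coloneqq 2GA_s\eta_s\sqrt{Z_s}$, and $c_s$ is predictable.
    \item[(iv)] \emph{Integrability.} Since $Z_s\le G^2/\mu^2$ by~\eqref{eq:R_0}, integrability follows from~\eqref{eq:eb-integrability-condition}.
\end{itemize}
The centered minibatch sum $\overline H_t^{\mathrm{MB}}$ is exactly the noise sum above. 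Proposition~\ref{cor:minibatch-variable-range-eb} therefore yields an event $E$ with $\Prob(E)\ge1-\alpha$. On $E$, for all $t\ge t_0$,
\[
    Z_{t+1}\le \frac{1}{A_t}\Bigl[Z_{t_0}+\sum_{s=t_0}^t A_s\eta_s^2\norm{\bar g_s}^2+\tfrac1b\mathfrak B_\alpha\bigl(W_t^{\mathrm{MB}},C_t\bigr)\Bigr],
\]
where $W_t^{\mathrm{MB}}=4b\sum_s A_s^2\eta_s^2(x_s-x^\star)^\top\widehat\Sigma_s(x_s-x^\star)$ and $C_t=2G\max_{s\le t}A_s\eta_s\sqrt{Z_s}$.

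\textbf{Recursive domination.} We argue by induction on the event $E$ and prove $Z_s\le R_s^{\mathrm{EB,MB}}$ for all $s\ge t_0$.
\begin{itemize}
    \item[(i)] \emph{Base case.} For $s=t_0$ this is the hypothesis $R_0\ge Z_{t_0}$.
    \item[(ii)] \emph{Domination of the two arguments.} Assume $Z_s\le R_s^{\mathrm{EB,MB}}$ for all $t_0\le s\le t$. The spectral relaxation~\eqref{eq:spectral:relaxation} gives $W_t^{\mathrm{MB}}\le\widehat V_t^{\mathrm{dist,EB,MB}}$. Monotonicity of the square root and of the maximum gives $C_t\le\widehat B_t^{\mathrm{dist,EB,MB}}$.
    \item[(iii)] \emph{Inductive step.} Theorem~\ref{thm:variable-range-eb} states that $\mathfrak B_\alpha$ is nondecreasing in each argument. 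Together with $Z_{t_0}\le R_0$, the right-hand side of the latent bound is therefore at most $\widehat U_{t+1}^{\mathrm{dist,EB,MB}}(\alpha)=R_{t+1}^{\mathrm{EB,MB}}$. Hence $Z_{t+1}\le R_{t+1}^{\mathrm{EB,MB}}$, which closes the induction.
\end{itemize}
The conclusion holds simultaneously for all $t$ on $E$, which gives the claimed coverage.

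\textbf{Observability and well-definedness.} Each $R_s^{\mathrm{EB,MB}}$ is computed from $R_0$, the minibatches up to $s-1$ and the stepsizes, so it is $\cF_{s-1}$-measurable. Consequently $\widehat\Sigma_s$, $\bar g_s$, $\widehat V_t$ and $\widehat B_t$ are all computable at time $t$, and the recursion is well defined.

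The main point needing care is that the latent concentration step uses the true predictable range $c_s$, not its observable envelope. Proposition~\ref{cor:minibatch-variable-range-eb} is applied once, to latent quantities, so the recursion never interacts with the probability statement: the observable envelopes enter only deterministically, through monotonicity on the fixed event $E$.
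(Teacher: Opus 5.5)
Your proposal is correct and takes the paper's route: a latent minibatch bound from Lemma~\ref{lem:distance-decomposition} (with $\bar g_s$ and $\bar\xi_s$) combined with Proposition~\ref{cor:minibatch-variable-range-eb}, applied to $H_s^{(i),\mathrm{dist}}$ with the true predictable range $2GA_s\eta_s\sqrt{Z_s}$, followed by the recursive domination argument of Theorem~\ref{thm:observable-distance-eb} via the spectral relaxation~\eqref{eq:spectral:relaxation} and monotonicity of $\mathfrak B_\alpha$. The paper inducts on $U\le\widehat U$ over the latent-coverage event, whereas you induct directly on $Z_s\le R_s^{\mathrm{EB,MB}}$ over the concentration event; this is only a different bookkeeping of the same argument.
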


The spectral relaxation~\eqref{eq:spectral:relaxation} makes the directional quadratic process observable while retaining information about the geometry of the realized minibatch. Indeed, the latent quantity depends on the unknown direction $x_s - x^\star$, whereas $\lambda_{\max}(\widehat\Sigma_s)$ provides a direction-free upper bound determined entirely by the observed stochastic gradients. Moreover,
\[
    \lambda_{\max}(\widehat\Sigma_s)
    \le
    \operatorname{tr}(\widehat\Sigma_s)
    =
    \frac{1}{b}
    \sum_{i = 1}^b
    \norm{g_s^{(i)}}^2
    \le
    G^2,
\]
so the resulting bound can be substantially sharper than replacing the quadratic process by the uniform worst-case factor $G^2$, particularly when the realized gradients are small or their energy is spread across several directions. This refinement also remains computationally tractable: since $\widehat\Sigma_s$ has rank at most $b$, its largest eigenvalue can be computed from a $b \times b$ Gram matrix, without forming or diagonalizing a full $d \times d$ matrix.

We next turn to weighted-average suboptimality. As in the single-gradient construction of Section~\ref{sec:refined-confidence-sequences}, the stochastic term in the weighted suboptimality decomposition is controlled through the directional observations
\[
    H_s^{(i),\mathrm{sub}}
    \coloneqq
    s
    \inner{x^\star - x_s}{g_s^{(i)}},
    \qquad
    i = 1, \ldots, b.
\]
Their averaged centered increment is $s\inner{x^\star - x_s}{\bar\xi_s}$, and
\[
    \sum_{i = 1}^b
    \left(
        H_s^{(i),\mathrm{sub}}
    \right)^2
    =
    b s^2
    (x_s - x^\star)^\top
    \widehat\Sigma_s
    (x_s - x^\star),
\]
while Assumption~\ref{assump:bounded-minibatch-stochastic-gradients} gives
\[
    \left|
        H_s^{(i),\mathrm{sub}}
    \right|
    \le
    G s \sqrt{Z_s}.
\]
Thus, as in the distance construction, the latent quadratic and range processes can be made observable using the same spectral relaxation together with the previously constructed minibatch distance envelopes. For a confidence level $\beta \in (0,1)$, define
\begin{equation}
\label{eq:mb-eb-sub-distance-envelope}
    R_{t_0}^{\mathrm{dist,EB,MB}}(\beta)
    \coloneqq
    R_0,
    \qquad
    R_s^{\mathrm{dist,EB,MB}}(\beta)
    \coloneqq
    \widehat U_s^{\mathrm{dist,EB,MB}}(\beta),
    \quad
    s \ge t_0 + 1.
\end{equation}
For every $t \ge t_0$, set
\begin{align}
\label{eq:mb-eb-sub-observable-variance}
    \widehat V_t^{\mathrm{sub,EB,MB}}(\beta)
    &\coloneqq
    b
    \sum_{s = t_0}^t
    s^2
    \lambda_{\max}(\widehat\Sigma_s)
    R_s^{\mathrm{dist,EB,MB}}(\beta),
    \\
\label{eq:mb-eb-sub-observable-range}
    \widehat B_t^{\mathrm{sub,EB,MB}}(\beta)
    &\coloneqq
    G
    \max_{t_0 \le s \le t}
    s
    \sqrt{
        R_s^{\mathrm{dist,EB,MB}}(\beta)
    }.
\end{align}
Given $\alpha \in (0,1)$, define
\begin{equation}
\label{eq:mb-eb-sub-observable-cs}
\begin{aligned}
    \widehat U_t^{\mathrm{sub,EB,MB}}(\alpha)
    \coloneqq
    \frac{1}{S_t}
    \Bigg[
        &\frac{1}{\mu}
        \sum_{s = t_0}^t
        \norm{\bar g_s}^2
        +
        \frac{\mu t_0(t_0 - 1)}{4}
        R_0
        \\
        &+
        \frac{1}{b}
        \mathfrak B_{\alpha/2}\left(
            \widehat V_t^{\mathrm{sub,EB,MB}}(\alpha/2),
            \widehat B_t^{\mathrm{sub,EB,MB}}(\alpha/2)
        \right)
    \Bigg],
    \qquad
    t \ge t_0.
\end{aligned}
\end{equation}

The following corollary is the minibatch analogue of Theorem~\ref{thm:observable-suboptimality-eb}.

\begin{corollary}[Refined observable minibatch confidence sequence for $f(\bar x_t) - f(x^\star)$]
\label{cor:mb-observable-suboptimality-eb}
Suppose Assumptions~\ref{assump:strong-convexity}--\ref{assump:stepsize-bound} and~\ref{assump:bounded-minibatch-stochastic-gradients} hold. Let $t_0 \ge 4$ and $\eta_t = 2/(\mu(t+1))$ for every $t \ge t_0$, and let $R_0$ be an $\cF_{t_0 - 1}$-measurable quantity satisfying $R_0 \ge Z_{t_0}$ almost surely. Then, for every $\alpha \in (0,1)$, the process $\{\widehat U_t^{\mathrm{sub,EB,MB}}(\alpha)\}_{t \ge t_0}$ is well defined and fully observable from $R_0$ and the minibatch SGD trajectory. Moreover,
\[
    \Prob\left(
        \forall t \ge t_0:\
        f(\bar x_t) - f(x^\star)
        \le
        \widehat U_t^{\mathrm{sub,EB,MB}}(\alpha)
    \right)
    \ge
    1-\alpha.
\]
\end{corollary}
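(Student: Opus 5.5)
We follow the proof of Theorem~\ref{thm:observable-suboptimality-eb}, with Proposition~\ref{cor:minibatch-variable-range-eb} in place of Theorem~\ref{thm:variable-range-eb} and Corollary~\ref{cor:mb-observable-distance-eb} in place of Theorem~\ref{thm:observable-distance-eb}. The error budget is split evenly: $\alpha/2$ goes to the distance envelope and $\alpha/2$ to the stitched bound for the weighted noise.

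\textbf{Latent decomposition.} Lemma~\ref{lem:weighted-suboptimality-decomposition} is a deterministic consequence of the projected update and strong convexity. It therefore holds verbatim with $g_s$ replaced by $\bar g_s$ and $\xi_s$ replaced by $\bar\xi_s$, with the stepsize $\eta_s = 2/(\mu(s+1))$ and $t_0 \ge 4$. This gives
\[
f(\bar x_t)-f(x^\star)\le \frac{1}{S_t}\Bigl[\frac1\mu\sum_{s=t_0}^t\norm{\bar g_s}^2+\frac{\mu t_0(t_0-1)}{4}Z_{t_0}+\sum_{s=t_0}^t s\inner{x^\star-x_s}{\bar\xi_s}\Bigr].
\]
Set $H_s^{(i)} \coloneqq H_s^{(i),\mathrm{sub}}$. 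Given $\cF_{s-1}$, these variables are conditionally independent, because $x_s$ is $\cF_{s-1}$-measurable and the $g_s^{(i)}$ are conditionally independent. Their common conditional mean is $m_s = s\inner{x^\star-x_s}{\nabla f(x_s)}$. Each is integrable, since $Z_s\le G^2/\mu^2$ by~\eqref{eq:R_0} and $\norm{g_s^{(i)}}\le G$. Each is bounded by the predictable range $c_s = Gs\sqrt{Z_s}$. The centered average is exactly $s\inner{x^\star-x_s}{\bar\xi_s}$.

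Applying Proposition~\ref{cor:minibatch-variable-range-eb} at level $\alpha/2$ then gives an event $E_2$ with $\Prob(E_2)\ge 1-\alpha/2$. On $E_2$, for all $t\ge t_0$, the noise sum is at most
\[
\frac1b\,\mathfrak B_{\alpha/2}\Bigl(W_t^{\mathrm{MB}},\ G\max_{t_0\le s\le t}s\sqrt{Z_s}\Bigr),
\]
where $W_t^{\mathrm{MB}} = b\sum_s s^2 (x_s-x^\star)^\top\widehat\Sigma_s(x_s-x^\star)$.

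\textbf{Domination.} Let $E_1$ be the coverage event of Corollary~\ref{cor:mb-observable-distance-eb} at level $\alpha/2$, so $\Prob(E_1)\ge 1-\alpha/2$. Its hypotheses hold: condition~\eqref{eq:eb-integrability-condition} is automatic for this deterministic stepsize, and Assumption~\ref{assump:stepsize-bound} holds for $t\ge 4$. On $E_1$ we have $Z_s\le R_s^{\mathrm{dist,EB,MB}}(\alpha/2)$ for all $s\ge t_0$; at $s=t_0$ this is just $R_0\ge Z_{t_0}$. Combining this with the spectral relaxation~\eqref{eq:spectral:relaxation} gives
\[
W_t^{\mathrm{MB}}\le \widehat V_t^{\mathrm{sub,EB,MB}}(\alpha/2).
\]
Likewise the latent range is at most $\widehat B_t^{\mathrm{sub,EB,MB}}(\alpha/2)$, and $Z_{t_0}\le R_0$. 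Theorem~\ref{thm:variable-range-eb} states that $\mathfrak B_{\alpha/2}$ is nondecreasing in each argument. Hence on $E_1\cap E_2$ the latent bound is dominated by $\widehat U_t^{\mathrm{sub,EB,MB}}(\alpha)$ simultaneously for all $t$. A union bound gives $\Prob(E_1\cap E_2)\ge1-\alpha$.

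\textbf{Observability.} $R_s^{\mathrm{dist,EB,MB}}(\alpha/2)$ is $\cF_{s-1}$-measurable by the recursion in Corollary~\ref{cor:mb-observable-distance-eb}. $\widehat\Sigma_s$ and $\bar g_s$ are $\cF_s$-measurable. So $\widehat U_t^{\mathrm{sub,EB,MB}}(\alpha)$ is $\cF_t$-measurable and computable from $R_0$ and the minibatch trajectory.

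\textbf{Main obstacle.} The only delicate point is one technical condition of Proposition~\ref{cor:minibatch-variable-range-eb}: it requires a single predictable range $c_s$, which must not be the observable proxy. We take $c_s$ to be the latent $Gs\sqrt{Z_s}$, which is a genuinely predictable, almost surely finite process. The substitution of the observable proxy then happens only afterwards, on $E_1$, through monotonicity. Done in this order, the validity of the stitched bound never depends on the random envelope event.
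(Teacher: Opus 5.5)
Your proposal is correct and takes essentially the paper's approach. The paper gives no separate proof and presents this corollary as the minibatch analogue of Theorem~\ref{thm:observable-suboptimality-eb}; your argument is exactly that transport: Proposition~\ref{cor:minibatch-variable-range-eb} is applied at level $\alpha/2$ with the latent predictable range $Gs\sqrt{Z_s}$, the spectral relaxation~\eqref{eq:spectral:relaxation} and Corollary~\ref{cor:mb-observable-distance-eb} at level $\alpha/2$ supply the domination via monotonicity of $\mathfrak B_{\alpha/2}$, and a union bound closes it.
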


When $b = 1$, we have $\widehat\Sigma_t = g_t g_t^\top$ and $\lambda_{\max}(\widehat\Sigma_t) = \norm{g_t}^2$, so the minibatch distance and suboptimality constructions reduce exactly to their single-gradient empirical Bernstein counterparts in Section~\ref{sec:refined-confidence-sequences}. For general $b$, the factor $b$ in the observable quadratic processes combines with the prefactor $1/b$ outside the empirical Bernstein boundary to yield the usual $1/\sqrt b$ reduction in the leading square-root term, up to the iterated-logarithmic factors from stitching, while the linear range term carries the explicit factor $1/b$.


\section{Experiments}
\label{sec:experiments}

To study the empirical performance of our confidence sequences, we run a series of experiments on a primal support vector machine (SVM) problem, using both real-world and synthetic data.
Section~\ref{sec:experiments-setup} details the optimization problem, the experimental protocol, and the bounds we compare against.
Section~\ref{sec:experiments-1} then discusses the introductory figure (Figure~\ref{fig:figure-introduction}), which illustrates the improvement in certified stopping on a single instance.
Section~\ref{sec:experiments-minibatch} (Figure~\ref{fig:comparing_b}) varies the minibatch size $b$, exhibiting its effect on the relative behavior of the bounds.
Section~\ref{sec:experiments-adaptivity} (Figure~\ref{fig:compare-separability}) compares three synthetic datasets of increasing difficulty while keeping all constants entering the bounds fixed, thereby isolating trajectory adaptivity.
We conclude with two verifications. Section~\ref{sec:experiments-misspecification} (Figures~\ref{fig:compare-G} and~\ref{fig:compare-R_0}) checks the sensitivity of our bounds to conservatively chosen constants, and Section~\ref{sec:experiments_true_single_pass} (Figure~\ref{fig:true-single-pass}) examines whether the same behavior persists in a single-pass regime beyond the finite-dataset formulation used in the preceding experiments.


\subsection{Experimental setup}
\label{sec:experiments-setup}

We consider the SVM primal optimization problem
\begin{equation}
    \label{eq:svm-objective}
    f(x)\coloneqq \frac{\mu}{2}\|x\|^2 + \E_z\big[\max\{0,1-z^\top x\}\big]\,,
\end{equation}
where $z$ follows a probability distribution $\mathbb Q$ on $\R^d$ and satisfies
$\|z\|\le M$ almost surely. This problem is $\mu$-strongly convex and nonsmooth.
For a minibatch size $b\ge 1$, at each time $t\ge 1$ we sample
$z_t^{(1)},\ldots,z_t^{(b)}$ independently from $\mathbb Q$, conditionally on
$\mathcal F_{t-1}$, and define
\begin{equation*}
    g_t^{(i)}
    =
    \mu x_t-z_t^{(i)}\mathbf 1\{1-(z_t^{(i)})^\top x_t\ge 0\},
    \qquad
    \bar g_t=\frac{1}{b}\sum_{i=1}^b g_t^{(i)},
    \qquad
    i=1,\ldots,b,
\end{equation*}
where $\bar g_t$ is used in the SGD update. For $b=1$, this reduces to the single-gradient setting; in that case, we simply write $g_t$. Since \eqref{eq:svm-objective} is not differentiable in general, each $g_t^{(i)}$ is an unbiased stochastic subgradient rather than a stochastic gradient. Although we assume differentiability throughout the paper for notational convenience, the same arguments apply with unbiased stochastic subgradients.

For the explicit constants, first notice that
\begin{equation*}
    \frac{\mu}{2}\|x^\star\|^2\le f(x^\star)\le f(0)=1.
\end{equation*}
We therefore take
$\mathcal X=\{x\in\R^d:\|x\|\le\sqrt{2/\mu}\}$,
which contains $x^\star$ and hence has the same minimizer as the unconstrained problem
\eqref{eq:svm-objective}. On $\mathcal X$,
\begin{equation*}
    \|g_t^{(i)}\|\le \sqrt{2\mu}+M\eqqcolon G
    \qquad\text{almost surely},\qquad i=1,\ldots,b.
\end{equation*}
Thus the bounded-gradient conditions required by the empirical-Bernstein-based confidence sequences hold.
In particular, the conditional Hoeffding lemma gives the valid minibatch
sub-Gaussian proxy $\sigma^2=G^2/b$ for $\bar g_t$.
Finally, we use $R_0=8/\mu$, the squared diameter of $\mathcal X$, as a deterministic
bound on $Z_{t_0}$.

\bigskip
\noindent\textbf{Data and protocol.}
We run experiments on the \emph{covertype.binary} dataset, obtained from the LIBSVM data websitem\footnote{\url{https://www.csie.ntu.edu.tw/~cjlin/libsvmtools/datasets/}} and on synthetic datasets generated with scikit-learn's \texttt{make\_classification}.\footnote{\url{https://scikit-learn.org/stable/modules/generated/sklearn.datasets.make_classification.html}} A bias term is appended to every dataset, and dense features are standardized to zero mean and unit variance.

To account for the fact that we work with finite datasets, we take $\mathbb Q$ to be the uniform distribution over the dataset, rather than the unknown data-generating distribution. Sampling independently from $\mathbb Q$, i.e., with replacement, makes the empirical risk problem a special case of our general setting. To confirm that the performance of our confidence sequences is not an artifact of this special case, we also conduct a \emph{true single-pass} experiment in which SGD uses each data point exactly once, directly targeting the expected risk under the data-generating distribution. The full details and results of this experiment are discussed in Section~\ref{sec:experiments_true_single_pass}.

In all experiments we set $\mu=0.1$, choose the confidence level $1-\alpha=99\%$, use $t_0=4$, and take the stepsize schedule $\eta_t=2/(\mu(t+1))$. For the finite-dataset experiments, the ground-truth quantities $x^\star$ and $f(x^\star)$ are computed beforehand with LIBLINEAR \citep{fan2008liblinear}. The resulting numerical error is several orders of magnitude below the smallest values appearing in our plots, so these quantities can be treated as exact.

Each figure reports a single trajectory of SGD. This is deliberate: our goal is to illustrate how the confidence sequences and the resulting stopping certificates evolve along an individual realized trajectory, rather than to average away their trajectory dependence across runs. Repeating the experiments with several seeds yields essentially identical qualitative behavior, and we therefore report a single run. Similarly, the horizon $T_{\max}$ at which our plots stop is arbitrary: all the confidence sequences we display remain valid beyond it and do not depend on it.

\begin{table}[t]
\centering
\begin{tabular}{lllccc}
\toprule
& Label & Quantity & Anytime-valid & Trajectory-adaptive & Reference \\
\midrule
\multirow{5}{*}{$Z_t$}
& \textbf{Ground truth} & $\|x_t-x^\star\|^2$ & --- & --- & --- \\
& \textbf{H} & $\widehat U^{\mathrm{dist}}_t(\alpha)$ & yes & yes & Theorem~\ref{thm:observable-distance-cs} \footnotemark \\
& \textbf{EB} & $\widehat U^{\mathrm{dist,EB,MB}}_t(\alpha)$ & yes & yes & Corollary~\ref{cor:mb-observable-distance-eb} \\
& \textbf{worst-case} & $\check U^{\mathrm{dist}}_t(\alpha)$ & yes & no & \eqref{eq:worst-case-distance-CS} \\
& \textbf{RSS12} & \eqref{eq:rakhlin_high_prob_bound} & no & no & \citep{rakhlin2012making} \\
\midrule
\multirow{4}{*}{$\bar F_t$}
& \textbf{Ground truth} & $f(\bar x_t)-f(x^\star)$ & --- & --- & --- \\
& \textbf{H} & $\widehat U^{\mathrm{sub}}_t(\alpha)$ & yes & yes & Theorem~\ref{thm:observable-suboptimality-CS} \\
& \textbf{EB} & $\widehat U^{\mathrm{sub,EB,MB}}_t(\alpha)$ & yes & yes & Corollary~\ref{cor:mb-observable-suboptimality-eb} \\
& \textbf{worst-case} & $\check U^{\mathrm{sub}}_t(\alpha)$ & yes & no & \eqref{eq:worst-case-suboptimality-CS} \\
\bottomrule
\end{tabular}
\caption{Quantities compared in the experiments. All bounds are plotted at confidence level $1-\alpha=0.99$ and use the same trajectory.}
\label{tab:compared-bounds}
\end{table}
\footnotetext{In the minibatch experiments, we use the direct minibatch extension described at the beginning of Section~\ref{sec:minibatching}, replacing $\sigma^2$ with $\sigma^2/b$ in the corresponding variance processes and $g_t$ with $\bar g_t$ in the SGD recursions.}

\bigskip
\noindent\textbf{Compared bounds.}
For both the squared distance to the optimum $Z_{t+1}=\|x_{t+1}-x^\star\|^2$ and the suboptimality $f(\bar x_t)-f(x^\star)$ of the weighted average, we compare the quantities listed in Table~\ref{tab:compared-bounds}. The two \textbf{worst-case} baselines are obtained from our Hoeffding-based confidence sequences by replacing the realized squared-gradient terms by their a priori upper bound $G^2$ and propagating this replacement through the recursive distance envelopes. With $\sigma^2=G^2$ denoting the individual-gradient sub-Gaussian proxy, so that $\sigma^2/b$ is the corresponding minibatch proxy, this yields
\begin{equation}
\label{eq:worst-case-distance-CS}
    \check U^\text{dist}_{t_0}(\alpha)
    \coloneqq
    R_0,
    \qquad
    \check U^\text{dist}_{t+1}(\alpha)
    \coloneqq
    \frac{1}{A_t}
    \Bigg(
        R_0
        +
        G^2\sum_{s=t_0}^t A_s\eta_s^2
        +
        4\,\mathfrak{H}_\alpha
        \Bigg(
            \frac{\sigma^2}{b}
            \sum_{s=t_0}^t
            A_s^2\eta_s^2
            \check U_s^\text{dist}(\alpha)
        \Bigg)
    \Bigg)
\end{equation}
for squared distance, and
\begin{equation}
\label{eq:worst-case-suboptimality-CS}
    \check U^\text{sub}_{t}(\alpha)
    \coloneqq
    \frac{1}{S_t}
    \Bigg(
        \frac{t-t_0+1}{\mu}G^2
        +
        \frac{\mu\,t_0(t_0-1)}{4}R_0
        +
        2\,\mathfrak{H}_{\alpha/2}
        \Bigg(
            \frac{\sigma^2}{b}
            \sum_{s=t_0}^t
            s^2
            \check U^\text{dist}_s\Bigl(\frac{\alpha}{2}\Bigr)
        \Bigg)
    \Bigg)
\end{equation}
for suboptimality. Under the deterministic stepsize schedule used in our experiments, these sequences are deterministic and trajectory-independent. They are also well defined recursively: the distance sequence is initialized at $\check U^\text{dist}_{t_0}(\alpha)=R_0$, and the suboptimality sequence uses the already defined distance sequence at level $\alpha/2$. Moreover, since $\|\bar g_t\|^2\le G^2$ almost surely and $\mathfrak H_\alpha$ is nondecreasing, the same recursive domination argument used for the Hoeffding-based confidence sequences shows that the worst-case baselines remain valid confidence sequences. Comparing \textbf{H} with the corresponding worst-case baseline therefore isolates the effect of adapting the Hoeffding-based construction to the realized trajectory.

For the distance to the optimum, we additionally compare against the high-probability bound of \citet[Proposition~1]{rakhlin2012making}, which states that, for any fixed horizon $T\ge 4$, with probability at least $1-\alpha$, simultaneously for all $t\le T$,
\begin{equation}
\label{eq:rakhlin_high_prob_bound}
    Z_t
    \le
    \frac{\bigl(624\log(\log(T)/\alpha)+1\bigr)G^2}{\mu^2 t}.
\end{equation}
This bound stems from the same standard one-step recursion underlying our distance analysis, but differs from $\check U^\text{dist}$ in several respects. First, \eqref{eq:rakhlin_high_prob_bound} is stated in closed form, at the price of a multiplicative constant that is not necessarily optimized, whereas $\check U^\text{dist}$ is defined recursively. Second, \eqref{eq:rakhlin_high_prob_bound} does not account for the variance reduction due to minibatching, whereas $\check U^\text{dist}$ uses the minibatch variance proxy $\sigma^2/b$. The effect of this difference will be illustrated in Section~\ref{sec:experiments-minibatch} by varying $b$. Third, \eqref{eq:rakhlin_high_prob_bound} is a finite-horizon simultaneous guarantee rather than a confidence sequence in our sense, since the horizon $T$ must be fixed in advance. We take $T=T_{\max}$, the most favorable valid choice for covering the entire plotted horizon. More recently, \citet{pham2025time} obtained an infinite-horizon time-uniform analogue that removes the need to specify $T$, but with a larger leading constant ($1008$ instead of $624$). Since our goal is to compare against a favorable explicit literature benchmark, we therefore use \eqref{eq:rakhlin_high_prob_bound} in the experiments. 

The bound \eqref{eq:rakhlin_high_prob_bound} is stated for the stepsize schedule $\eta_t=1/(\mu t)$, while our experiments use $\eta_t=2/(\mu(t+1))$. Both belong to the standard $1/t$ stepsize regime for strongly convex stochastic optimization, and we use \eqref{eq:rakhlin_high_prob_bound} as the corresponding explicit literature benchmark. Overall, the two baselines serve different purposes: \textbf{RSS12} provides an explicit finite-horizon literature benchmark, while $\check U^\text{dist}$ is the natural non-adaptive counterpart of our Hoeffding-based confidence sequence under the same stepsize schedule and minibatch variance proxy.

For the suboptimality of the weighted average, we are not aware of a high-probability bound in the literature stated with explicit constants. \citet{harvey2019simple} establish such a bound for the same linearly weighted averaging scheme and stepsize schedule, but only up to unspecified absolute constants, which makes a direct quantitative comparison impossible without further analysis. We therefore compare only against the worst-case baseline $\check U^{\text{sub}}_t$ above.


\subsection{Introductory plot}
\label{sec:experiments-1}

We begin with an experiment illustrating our main findings. We run SGD on \emph{covertype.binary} with minibatch size $b=256$. Figure~\ref{fig:figure-introduction} reports, on log-log axes, the squared distance to the optimum (left) and the suboptimality of the weighted average (right), with the bounds of Table~\ref{tab:compared-bounds} evaluated along the same trajectory. On the right panel, we mark for each bound the first time it drops below the threshold $\varepsilon=10^{-3}$. Stopping SGD at this time certifies, with probability $1-\alpha$, that the returned weighted average satisfies $f(\bar x_t)-f(x^\star)\le \varepsilon$.

Note first that $\check U^\text{dist}$ lies several orders of magnitude below \textbf{RSS12}. As explained in Section~\ref{sec:experiments-setup}, this reflects the larger explicit constants in \textbf{RSS12} together with the fact that it does not account for the variance reduction due to minibatching. The role of minibatching is examined separately in Section~\ref{sec:experiments-minibatch}.

Turning to the confidence sequences, the Hoeffding-based bound $\widehat U^\text{sub}$ (\textbf{H}) certifies the threshold $6.1$ times earlier than the worst-case baseline, while the empirical Bernstein minibatch bound $\widehat U^{\text{sub,EB,MB}}$ (\textbf{EB}) does so $312$ times earlier. In terms of gradient lookups, this corresponds to $5.1\times 10^6$ lookups for \textbf{EB}, $2.6\times 10^8$ for \textbf{H}, and $1.6\times 10^9$ for the worst-case baseline, all for the same confidence level and target accuracy.

These improvements illustrate the adaptivity of our confidence sequences, which make use of observed quantities along the trajectory rather than their a priori upper bounds. The further improvement of the empirical Bernstein sequence over the Hoeffding-based one reflects its ability to exploit the realized second-moment structure within each minibatch, rather than relying only on the fixed sub-Gaussian variance proxy. Overall, the experiment shows that trajectory adaptation can substantially tighten the resulting certificates while preserving their anytime-valid guarantees.


\subsection{Effect of the minibatch size}
\label{sec:experiments-minibatch}

\begin{figure}[t]
\centering
\includegraphics[width=\textwidth]{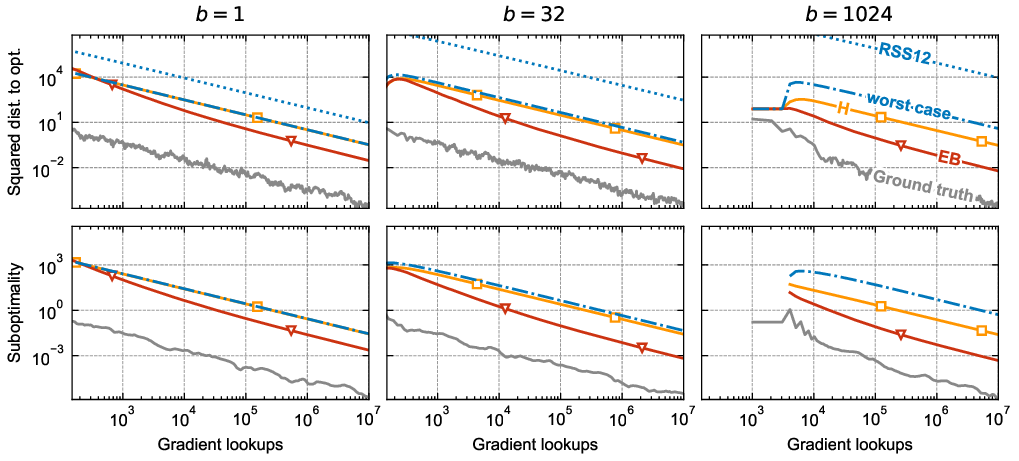}
\caption{SGD on the \emph{covertype.binary} dataset for minibatch sizes $b=1$ (left), $32$ (middle), and $1024$ (right). Top row: squared distance to the optimum. Bottom row: suboptimality of the weighted average. All panels use the conventions of Figure~\ref{fig:figure-introduction}: ground truth (solid gray line), \textbf{worst-case} bound (dash-dotted blue line), \textbf{H} (solid orange line with squares), and \textbf{EB} (solid red line with triangles); \textbf{RSS12} (dotted blue line) is shown in the distance panels only.}
\label{fig:comparing_b}
\end{figure}

We now repeat the previous experiment on \emph{covertype.binary} with $b\in\{1,32,1024\}$, all other parameters being unchanged. All three runs are given the same budget of $10^7$ gradient lookups. Although the performance of SGD may differ, the goal of this experiment is not to isolate how fast the algorithm converges, but how tightly its progress can be certified. The question of choosing the minibatch size $b$ is beyond the scope of this paper; here we only study its effect on the confidence sequences.

Figure~\ref{fig:comparing_b} reports the results. The ordering of the curves remains the same across all three experiments, and the overall picture of Section~\ref{sec:experiments-1} is unchanged. We make the following additional observations:
\begin{itemize}
\item The bound of \citet{rakhlin2012making} is about one order of magnitude above our \textbf{worst-case} baseline at $b=1$. As $b$ grows, the gap widens to three orders of magnitude at $b=1024$. At $b=1$, the difference is primarily due to the explicit constants, while for larger values of $b$ the gap widens because $\check U^\text{dist}$ captures the variance reduction due to minibatching through the proxy $\sigma^2/b$, whereas \textbf{RSS12} does not.

\item Hoeffding-based \textbf{H} is indistinguishable from \textbf{worst-case} at $b=1$. The adaptive contribution in \textbf{H} comes from retaining the realized squared minibatch-gradient norms $\|\bar g_s\|^2$ in the deterministic accumulation, whereas the recursive deviation term uses the fixed variance proxy $\sigma^2/b$. At $b=1$, the latter dominates, so replacing $\|g_s\|^2$ by $G^2$ changes little. As $b$ increases, the deviation term shrinks and the trajectory-dependent accumulation becomes relatively more important, making the effect of adaptivity increasingly visible.

\item Empirical-Bernstein-based \textbf{EB} improves on \textbf{H} by about one order of magnitude already at $b=1$, and this gap widens with $b$. Unlike \textbf{H}, whose deviation term is controlled by the fixed variance proxy $\sigma^2/b$, \textbf{EB} exploits the realized second-moment structure within each minibatch through $\lambda_{\max}(\widehat\Sigma_t)$. The contribution of each minibatch to the leading square-root term therefore carries the usual $1/\sqrt b$ scaling while adapting to $\lambda_{\max}(\widehat\Sigma_t)\le\operatorname{tr}(\widehat\Sigma_t)\le G^2$. In addition, the lower-order range term carries an explicit factor $1/b$ and hence decreases faster with the minibatch size.

\item The gap between the ground truth and the \textbf{worst-case} baseline also widens with $b$. The \textbf{EB} confidence sequence nevertheless recovers a growing share of this gap. Define
\[
    \rho_t\coloneqq
    \frac{\log\bigl(\check U_t^{\mathrm{sub}}(\alpha)/\widehat U_t^{\mathrm{sub,EB,MB}}(\alpha)\bigr)}
    {\log\bigl(\check U_t^{\mathrm{sub}}(\alpha)/(f(\bar x_t)-f(x^\star))\bigr)},
\]
the fraction of the gap between the \textbf{worst-case} bound and the ground truth recovered by \textbf{EB} on a logarithmic scale. At the end of the budget, we obtain $\rho=24.8\%$, $43.6\%$, and $53.8\%$ for $b=1,32,1024$, respectively.
\end{itemize}


\subsection{Adaptivity across problem instances}
\label{sec:experiments-adaptivity}

\begin{figure}[t]
\centering
\includegraphics[width=\textwidth]{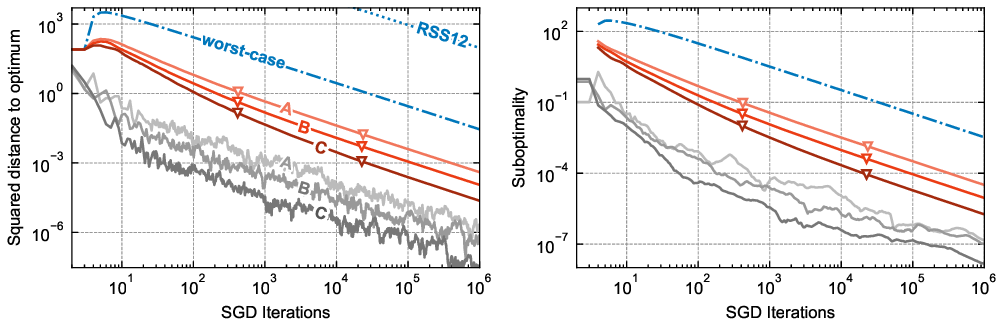}
\caption{SGD on three datasets generated with \texttt{make\_classification}, with increasing class separability from \textbf{A} to \textbf{C}. Left: squared distance to the optimum. Right: suboptimality of the weighted average. Each panel shows the ground truth (solid gray lines) and \textbf{EB} (solid red lines) for each dataset, with increasing shade darkness from \textbf{A} to \textbf{C} and each curve labelled accordingly, together with the \textbf{worst-case} bound (dash-dotted blue line); \textbf{RSS12} (dotted blue line) is shown in the distance panel only.}
\label{fig:compare-separability}
\end{figure}

In this experiment, we aim to isolate the trajectory adaptivity of the empirical Bernstein confidence sequence. We generate three datasets with \texttt{make\_classification}, of increasing class separability, and report the evolution of the ground-truth quantities $Z_t$ and $f(\bar x_t)-f(x^\star)$, together with \textbf{EB} (Figure~\ref{fig:compare-separability}).

\begin{table}[t]
\centering
\begin{tabular}{lcc}
\toprule
& \texttt{class\_sep} & \texttt{flip\_y}\\
\midrule
\textbf{A} (low separability) & 0.2 & 0.1 \\
\textbf{B} (medium separability) & 1.0 & 0.02 \\
\textbf{C} (high separability) & 5.0 & 0.0 \\
\bottomrule
\end{tabular}
\caption{Parameters of each generated dataset.}
\label{tab:separability-datasets}
\end{table}

The dataset parameters are given in Table~\ref{tab:separability-datasets}. For all three, we set $\texttt{n\_samples}=10^5$, $\texttt{n\_features}=20$, and $\texttt{n\_clusters\_per\_class}=1$. Each dataset is then rescaled so that $G$ has the same value across all three. Since $\mu$, $\alpha$, $b=256$, and $R_0$ are also fixed, the \textbf{worst-case} bound is identical across the three instances.

On these instances, SGD makes faster progress on the more highly separated datasets, and the confidence sequences follow the same ordering. The relationship between the geometry of the data and the performance of SGD on the SVM primal problem is outside the scope of this paper; what matters here is that this variation in performance is invisible to the \textbf{worst-case} analysis but reflected by the trajectory-adaptive confidence sequences. The same deterministic bound applies to all three runs, whereas \textbf{EB} separates them and, after $10^6$ SGD iterations, is roughly $17$ times tighter for \textbf{C} than for \textbf{A} in suboptimality: $\widehat U^{\mathrm{sub,EB,MB}}=1.9\times10^{-6}$ for \textbf{C}, compared with $3.2\times10^{-5}$ for \textbf{A}.

This behavior is consistent with the empirical Bernstein construction, which adapts to the realized second-moment structure within each minibatch through $\lambda_{\max}(\widehat\Sigma_t)$. Better separated datasets have fewer samples with an active hinge loss, and in our experiments this is accompanied by substantially smaller realized values of $\lambda_{\max}(\widehat\Sigma_t)$. The average value of $\lambda_{\max}(\widehat\Sigma_t)$ over the last $1\%$ of the run decreases by approximately $73\%$ from \textbf{A} to \textbf{B}, and by approximately $98\%$ from \textbf{A} to \textbf{C}. A deterministic bound depending on $G$ alone cannot distinguish these trajectories.


\subsection{Robustness to parameter misspecification}
\label{sec:experiments-misspecification}

The bounds considered above depend on deterministic problem constants such as $G$, $\sigma^2$, and $R_0$, whose roles differ across the different constructions. These quantities are rarely available exactly in practice. The situation is not symmetric: choosing a value that is too small can invalidate the corresponding guarantee, whereas a conservative upper bound preserves validity at the price of tightness. The practical question is therefore what is lost when these constants are chosen too conservatively. We do not vary $\mu$, since it enters the stepsize schedule and changing it would therefore also change the SGD trajectory itself.

\begin{figure}[t]
\centering
\includegraphics[width=\textwidth]{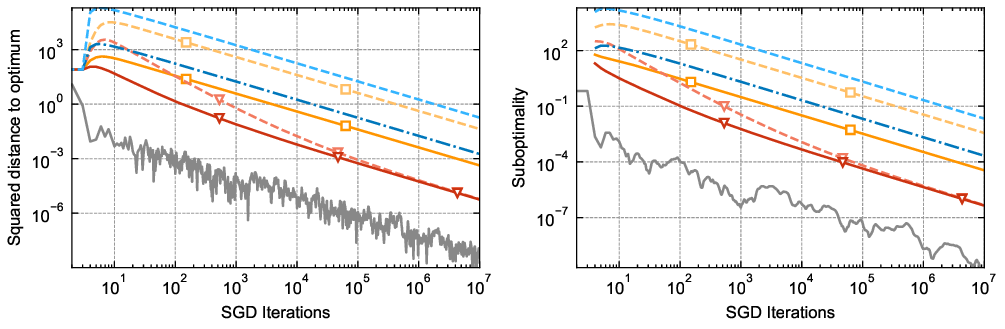}
\caption{SGD on dataset \textbf{B} from Table~\ref{tab:separability-datasets}, with $b=256$, comparing the bounds obtained with the correct gradient bound $G$ and with $G$ overestimated by a factor of $10$. Solid and dash-dotted lines correspond to the correct value, while dashed lines in lighter shades correspond to the overestimated one. Each panel shows the ground truth (solid gray line), the \textbf{worst-case} bound (blue line), \textbf{H} (orange line with squares), and \textbf{EB} (red line with triangles).}
\label{fig:compare-G}
\end{figure}

\bigskip
\noindent\textbf{Misspecification of the gradient bound $G$.}
We first compare the bounds computed with the correct value of $G$ to the same bounds computed with $G$ overestimated by a factor of $10$. For \textbf{H} and the \textbf{worst-case} baseline, we use the valid individual-gradient proxy $\sigma^2=G^2$, so overestimating $G$ by a factor of $10$ also overestimates $\sigma^2$ by a factor of $100$. The experiment is run on dataset \textbf{B} from the previous subsection, with the SGD trajectory itself unchanged. Figure~\ref{fig:compare-G} reports both families of curves.

The \textbf{worst-case} bounds degrade as expected for both squared distance and suboptimality. Their leading asymptotic terms scale with $G^2$, and the misspecified curves lie approximately a factor of $100$ above their well-specified counterparts. A similarly strong effect is observed for Hoeffding-based \textbf{H}, reflecting the dependence of its deviation term on the fixed variance proxy $\sigma^2/b$ rather than on the realized second-moment structure of the stochastic gradients.

Empirical-Bernstein-based \textbf{EB} behaves differently. Its leading deviation term adapts to the realized second-moment structure within each minibatch through $\lambda_{\max}(\widehat\Sigma_t)$, while $G$ enters only through the lower-order range term of the empirical Bernstein boundary. The misspecified and well-specified \textbf{EB} curves are consequently asymptotically equivalent. They differ noticeably only at first and agree within a factor of $1.06$ after $10^7$ iterations, for both $\widehat U^{\mathrm{dist,EB,MB}}$ and $\widehat U^{\mathrm{sub,EB,MB}}$.

\begin{figure}[t]
    \centering
    \includegraphics[width=\textwidth]{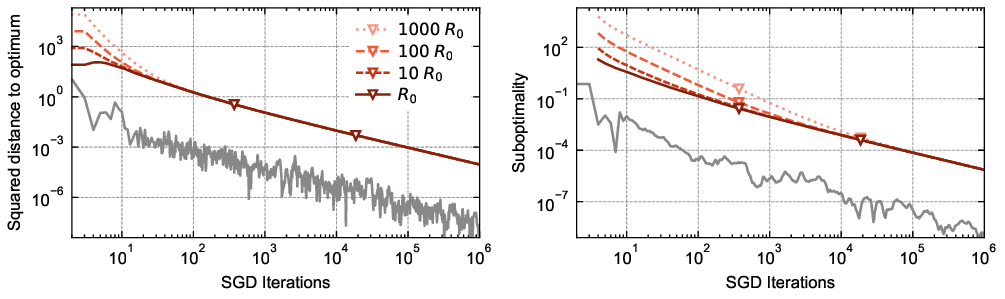}
    \caption{SGD on \texttt{make\_classification} dataset, $b=256$, comparing \textbf{EB} computed with the correct value of $R_0$ and with $R_0$ over-estimated by factors $10$, $100$ and $1000$. 
    Each panel shows the ground truth (solid gray line) and the four confidence sequences (red lines, in increasingly light shades and increasingly sparse dotting as the over-estimation factor grows, as indicated in the legend).
    }
    \label{fig:compare-R_0}
\end{figure}

\bigskip
\noindent\textbf{Misspecification of the initial distance bound $R_0$.}
We repeat the experiment with $R_0$ overestimated by factors $10$, $100$, and $1000$, while keeping all other quantities fixed, and report \textbf{EB} in each case in Figure~\ref{fig:compare-R_0}. Unlike $G$, $R_0$ enters through the initialization of the recursive distance envelope, and its influence therefore diminishes as the trajectory-dependent terms accumulate. For the squared-distance confidence sequence, the effect is pronounced initially but vanishes rapidly: with $R_0$ overestimated by a factor of $1000$, the confidence sequence initially differs from the correctly specified one by the same factor, but their relative difference is only $4\%$ after $100$ iterations and $0.01\%$ after $10^3$ iterations.

Taken together, these two experiments show that, along the trajectories considered here, the effect of conservative choices of $G$ and $R_0$ on \textbf{EB} is largely transient. This reflects the trajectory adaptivity of the empirical Bernstein construction: its leading deviation term is driven by realized second-moment quantities, while $G$ enters through the lower-order range term and the influence of $R_0$ is rapidly attenuated by the recursive construction. In practice, this suggests that safe upper bounds on $G$ and $R_0$ need not be specified very precisely for the resulting \textbf{EB} certificates to become largely insensitive to them later in the run.


\subsection{True single-pass}
\label{sec:experiments_true_single_pass}

\begin{figure}[t]
    \centering
    \includegraphics[width=\textwidth]{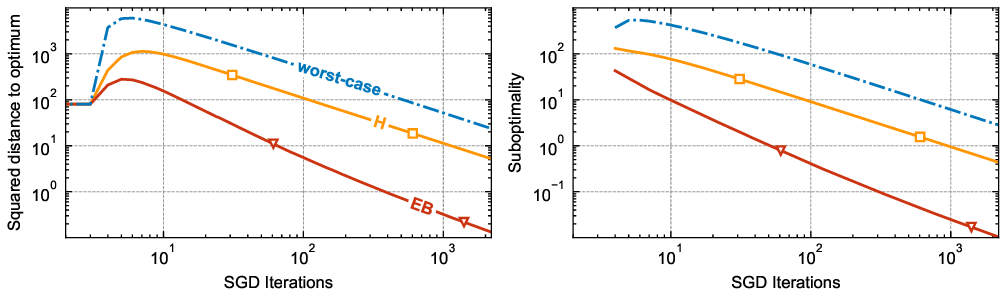}
    \caption{SGD on the \emph{covertype.binary} dataset in a true single-pass regime, with $b=256$. Each panel shows the \textbf{worst-case} bound (dash-dotted blue line), \textbf{H} (solid orange line with squares), and \textbf{EB} (solid red line with triangles).}
    \label{fig:true-single-pass}
\end{figure}

All experiments above sample with replacement from the empirical distribution of a finite dataset, which makes \eqref{eq:svm-objective} the empirical risk and allows $x^\star$ to be computed beforehand. We now turn to the population setting, where the observations are modeled as i.i.d.\ samples from an unknown data-generating distribution $\mathbb P$. Under this standard model, processing the dataset once in an order chosen independently of the observations has the same distribution as sequential i.i.d.\ sampling from $\mathbb P$. Two consequences follow: the minimizer of the expected risk cannot be computed beforehand, so no ground truth is available, and the run is necessarily limited by the available sample size.

We run this experiment on \emph{covertype.binary}, which contains $N=581012$ observations, with minibatch size $b=256$, and report the evolution of \textbf{worst-case}, \textbf{H}, and \textbf{EB} in Figure~\ref{fig:true-single-pass}. Using complete minibatches, the run stops after $\lfloor N/b\rfloor=2269$ SGD iterations, with no observation reused. The picture is essentially the same as the one observed throughout this section. For suboptimality, $\widehat U_t^{\mathrm{sub}}$ is $6.4$ times tighter than $\check U_t^{\mathrm{sub}}$, while $\widehat U_t^{\mathrm{sub,EB,MB}}$ is $2.7\times10^2$ times tighter. For comparison, in the experiment of Section~\ref{sec:experiments-1}, the corresponding factors are also $6.4$ and $2.7\times10^2$ after the same number of iterations. The improvements reported above are therefore not an artifact of repeatedly sampling from a finite empirical distribution and persist in this single-pass expected-risk setting.


\subsection{Summary of the empirical findings}

As shown in these experiments, on real-world data our confidence sequences can certify a target accuracy several orders of magnitude earlier than the trajectory-independent bounds against which they are compared. This improvement reflects their ability to adapt to the realized SGD trajectory. When all problem constants are fixed and only the problem instance varies, the worst-case bounds remain unchanged while our confidence sequences adapt to the instance. Minibatching further amplifies these differences, and our bounds recover a growing share of the gap between the worst-case bounds and the ground truth as $b$ increases.

The two constructions nevertheless behave differently. The empirical-Bernstein-based sequences are consistently tighter than their Hoeffding-based counterparts in our experiments and appear particularly attractive when the stochastic gradients can be uniformly bounded. Their leading deviation terms adapt to realized second-moment quantities, while the effects of conservative choices of $G$ and $R_0$ become negligible later in the runs considered above. By contrast, the Hoeffding-based sequences retain an asymptotic dependence on the fixed sub-Gaussian variance proxy $\sigma^2$.

Even with these improvements, a considerable gap remains between the confidence sequences and the ground truth in the experiments where the latter is available. Whether this residual gap can be reduced through a sharper analysis of the SGD dynamics is an interesting question that we leave open.


\section*{Acknowledgements}

Liviu Aolaritei acknowledges support from the Swiss National Science Foundation through the Postdoc.Mobility Fellowship (grant agreement P500PT\_222215). Funded in part by the European Union (ERC-2022-SYG-OCEAN-101071601). Views and opinions expressed are however those of the author(s) only and do not necessarily reflect those of the European Union or the European Research Council Executive Agency. Neither the European Union nor the granting authority can be held responsible for them.

\bibliographystyle{abbrvnat} 
\bibliography{bibfile.bib}


\appendix

\section{Proofs}
\label{sec:proofs}


\subsection{Proofs of Section~\ref{sec:subgaussian-confidence-sequences}}

\subsubsection{Proof of Lemma~\ref{prop:dyadic-stitching}}

For each $\lambda > 0$, define
\[
    M_t(\lambda)
    \coloneqq
    \exp\left(
        \lambda \bar X_t - 2^\gamma \lambda^2 V_t
    \right),
    \qquad t \ge t_0-1,
\]
with $M_{t_0-1}(\lambda) = 1$. By the conditional exponential bound,
\[
    \E[M_t(\lambda) \mid \mathcal F_{t-1}]
    =
    M_{t-1}(\lambda)
    \E\left[
        \exp\left(
            \lambda X_t - 2^\gamma \lambda^2 v_t
        \right)
        \,\middle|\,
        \mathcal F_{t-1}
    \right]
    \le
    M_{t-1}(\lambda).
\]
Thus $\{M_t(\lambda)\}_{t \ge t_0 - 1}$ is a nonnegative supermartingale. By Ville's inequality, for every $c > 0$,
\[
    \Prob\left(
        \exists\, t \ge t_0-1:
        \lambda \bar X_t - 2^\gamma \lambda^2 V_t \ge c
    \right)
    \le e^{-c}.
\]
Equivalently,
\begin{equation}
\label{eq:fixed-lambda}
    \Prob\left(
        \exists\, t \ge t_0-1:
        \bar X_t \ge \frac{c}{\lambda} + 2^\gamma \lambda V_t
    \right)
    \le e^{-c}.
\end{equation}

We now stitch these fixed-$\lambda$ bounds over dyadic ranges of the variance process. Define
\[
    I_0 \coloneqq [0, 1],
    \qquad
    I_m \coloneqq (2^{m-1}, 2^m],
    \qquad m \ge 1.
\]
For $m \ge 0$, set
\[
    \alpha_m \coloneqq \frac{\alpha}{(m + 1)(m + 2)},
    \qquad
    c_m \coloneqq \log\frac{1}{\alpha_m}
    =
    \log \frac{1}{\alpha}
    +
    \log \bigl((m+1)(m+2)\bigr).
\]
Then $\sum_{m=0}^\infty \alpha_m = \alpha$. For each $m \ge 0$, choose
\[
    \lambda_m
    \coloneqq
    \sqrt{
        \frac{c_m}{2^\gamma\, 2^m}
    } .
\]
Applying \eqref{eq:fixed-lambda} with $(\lambda, c) = (\lambda_m, c_m)$ gives
\[
    \Prob\left(
        \exists\, t \ge t_0-1:
        \bar X_t
        \ge
        \frac{c_m}{\lambda_m}
        +
        2^\gamma \lambda_m V_t
    \right)
    \le \alpha_m .
\]
Suppose now that $V_t \in I_m$. Then $V_t \le 2^m$, so on the complement of the above event,
\[
    \bar X_t
    <
    \frac{c_m}{\lambda_m}
    +
    2^\gamma \lambda_m 2^m .
\]
By the choice of $\lambda_m$,
\[
    \frac{c_m}{\lambda_m}
    =
    \sqrt{2^\gamma c_m 2^m},
    \qquad
    2^\gamma \lambda_m 2^m
    =
    \sqrt{2^\gamma c_m 2^m}.
\]
Hence
\[
    \bar X_t
    <
    2 \sqrt{2^\gamma c_m 2^m}.
\]
If $m \ge 1$, then $V_t > 2^{m-1}$, and so $2^m < 2 V_t$. Therefore
\[
    \bar X_t
    <
    2 \sqrt{2^\gamma c_m 2 V_t}
    =
    2^{(\gamma+3)/2}\sqrt{V_t c_m}.
\]
If $m=0$, then $V_{t,\mathrm{eff}}=1$, and
\[
    \bar X_t
    <
    2 \sqrt{2^\gamma c_0}
    \le
    2^{(\gamma+3)/2} \sqrt{V_{t,\mathrm{eff}} c_0}.
\]
Thus, for every $m\ge 0$, outside an event of probability at most
$\alpha_m$,
\[
    V_t\in I_m
    \quad\Longrightarrow\quad
    \bar X_t
    <
    2^{(\gamma + 3)/2} \sqrt{V_{t,\mathrm{eff}} c_m}.
\]

Taking a union bound over $m \ge 0$, with probability at least $1-\sum_{m=0}^\infty \alpha_m = 1 - \alpha$, the preceding implication holds simultaneously for all $m \ge 0$ and all $t \ge t_0 - 1$. On this event, fix any $t \ge t_0$ and let
\[
    m_t \coloneqq \left\lceil\log_2 V_{t,\mathrm{eff}}\right\rceil .
\]
Then $V_t \in I_{m_t}$, and therefore
\[
    \bar X_t
    \le
    2^{(\gamma+3)/2}
    \sqrt{
        V_{t,\mathrm{eff}}
        \left(
            \log\frac{1}{\alpha}
            +
            \log \bigl((m_t + 1)(m_t + 2)\bigr)
        \right)
    } .
\]
Since $t \ge t_0$ was arbitrary, the result follows.


\subsubsection{Proof of Proposition~\ref{thm:adaptive-distance-CS}}

By Lemma~\ref{lem:distance-decomposition}, for every $t \ge t_0$,
\[
    Z_{t+1}
    \le
    \frac{1}{A_t}
    \left[
        Z_{t_0}
        +
        \sum_{s = t_0}^t
        A_s \eta_s^2 \norm{g_s}^2
        +
        \bar X_t
    \right].
\]
It remains to control $\bar X_t$ uniformly over time. Recall that
\[
    X_t
    =
    2 A_t \eta_t
    \inner{x^\star - x_t}{\xi_t}.
\]
Since $x_t$, $\eta_t$, and $A_t$ are $\cF_{t-1}$-measurable, applying Assumption~\ref{assump:stochastic-gradients}(ii) with the predictable direction $2 A_t \eta_t(x^\star - x_t)$ gives
\[
    \E\left[
        e^{\lambda X_t}
        \,\middle|\,
        \cF_{t-1}
    \right]
    \le
    \exp\left(
        2 \lambda^2 \sigma^2 A_t^2 \eta_t^2 Z_t
    \right)
\]
for every $\lambda\ge0$. Thus, the sequences $\{X_t\}$ and $\{v_t^{\mathrm{dist}}\}$ satisfy the hypotheses of Lemma~\ref{prop:dyadic-stitching} with
\[
    v_t^{\mathrm{dist}}
    =
    \sigma^2 A_t^2 \eta_t^2 Z_t,
    \qquad
    \gamma=1.
\]
Applying Lemma~\ref{prop:dyadic-stitching} therefore yields
\[
    \Prob\left(
        \forall t\ge t_0:\
        \bar X_t
        \le
        4\,\mathfrak H_\alpha\left(
            V_t^{\mathrm{dist}}
        \right)
    \right)
    \ge
    1-\alpha.
\]
Substituting this bound into the preceding decomposition and using the definition of $U_{t+1}^{\mathrm{dist}}(\alpha)$ in~\eqref{eq:Udist} proves the result.


\subsubsection{Proof of Theorem~\ref{thm:observable-distance-cs}}

We first check that the recursion is well-defined. The quantity $\widehat U^{\mathrm{dist}}_{t_0 + 1}(\alpha)$ is defined explicitly from $R_0$ and the data at time $t_0$. Inductively, if
\[
    \widehat U^{\mathrm{dist}}_{t_0 + 1}(\alpha), \dots, \widehat U^{\mathrm{dist}}_{t}(\alpha)
\]
have been defined, then $\widehat V_t^{\mathrm{dist}}$ is also defined, which in turn defines $\widehat U^{\mathrm{dist}}_{t+1}(\alpha)$. Thus the sequence is recursively well-defined. Moreover, each $\widehat U^{\mathrm{dist}}_{t+1}(\alpha)$ depends only on $R_0$ and the observed trajectory up to time $t$, so the construction is fully observable. Now let
\[
    U^{\mathrm{dist}}_{t+1}(\alpha)
    \coloneqq
    \frac{1}{A_t}
    \left[
        Z_{t_0}
        +
        \sum_{s = t_0}^t A_s \eta_s^2 \norm{g_s}^2
        +
        4\,\mathfrak H_\alpha\left(V_t^{\mathrm{dist}}\right)
    \right]
\]
be the adaptive confidence sequence from Proposition~\ref{thm:adaptive-distance-CS}, where
\[
    V_t^{\mathrm{dist}}
    =
    \sum_{s = t_0}^t \sigma^2 A_s^2 \eta_s^2 Z_s.
\]
Set
\[
    E_\alpha
    \coloneqq
    \left\{
        \forall t \ge t_0:\ 
        Z_{t+1} \le U^{\mathrm{dist}}_{t+1}(\alpha)
    \right\}.
\]
By Proposition~\ref{thm:adaptive-distance-CS}, $\Prob(E_\alpha) \ge 1-\alpha$. We will show that on $E_\alpha$,
\[
    U^{\mathrm{dist}}_{t+1}(\alpha) \le \widehat U^{\mathrm{dist}}_{t+1}(\alpha)
    \qquad
    \text{for all } t \ge t_0.
\]
This will imply
\[
    Z_{t+1}
    \le
    U^{\mathrm{dist}}_{t+1}(\alpha)
    \le
    \widehat U^{\mathrm{dist}}_{t+1}(\alpha)
    \qquad
    \text{for all } t \ge t_0
\]
on $E_\alpha$, and therefore establish the claim. By~\eqref{eq:v_eff} and~\eqref{eq:subgaussian-stitched-boundary}, the map $v \mapsto \mathfrak H_\alpha(v)$ is nondecreasing on $[0,\infty)$. We now prove by induction on $t \ge t_0$ that on $E_\alpha$,
\[
    V_t^{\mathrm{dist}} \le \widehat V_t^{\mathrm{dist}}
    \qquad\text{and}\qquad
    U^{\mathrm{dist}}_{t+1}(\alpha) \le \widehat U^{\mathrm{dist}}_{t+1}(\alpha).
\]
For the base case $t=t_0$, since $R_0\ge Z_{t_0}$, we have
\[
    V_{t_0}^{\mathrm{dist}}
    =
    \sigma^2 A_{t_0}^2 \eta_{t_0}^2 Z_{t_0}
    \le
    \sigma^2 A_{t_0}^2 \eta_{t_0}^2 R_0
    =
    \widehat V_{t_0}^{\mathrm{dist}}.
\]
By monotonicity,
\[
    \mathfrak H_\alpha \left(V_{t_0}^{\mathrm{dist}}\right) 
    \le 
    \mathfrak H_\alpha\left(\widehat V_{t_0}^{\mathrm{dist}}\right).
\]
Using also $Z_{t_0} \le R_0$, we obtain
\begin{align*}
    U^{\mathrm{dist}}_{t_0 + 1}(\alpha)
    =
    \frac{1}{A_{t_0}}
    \left[
        Z_{t_0}
        +
        A_{t_0} \eta_{t_0}^2 \norm{g_{t_0}}^2
        +
        4\,\mathfrak H_\alpha \left(V_{t_0}^{\mathrm{dist}}\right)
    \right]
    &\le
    \frac{1}{A_{t_0}}
    \left[
        R_0
        +
        A_{t_0} \eta_{t_0}^2 \norm{g_{t_0}}^2
        +
        4\,\mathfrak H_\alpha\left(\widehat V_{t_0}^{\mathrm{dist}}\right)
    \right]
    \\&=
    \widehat U^{\mathrm{dist}}_{t_0 + 1}(\alpha).
\end{align*}
Now fix $t \ge t_0+1$, and suppose that for every $s = t_0, \dots, t-1$,
\[
    V_s^{\mathrm{dist}} \le \widehat V_s^{\mathrm{dist}}
    \qquad \text{and}\qquad
    U^{\mathrm{dist}}_{s+1}(\alpha)\le \widehat U^{\mathrm{dist}}_{s+1}(\alpha).
\]
Since we are on $E_\alpha$, for each $s=t_0+1,\dots,t$,
\[
    Z_s \le U^{\mathrm{dist}}_s(\alpha) \le \widehat U^{\mathrm{dist}}_s(\alpha).
\]
Therefore,
\[
    V_t^{\mathrm{dist}}
    =
    \sigma^2 A_{t_0}^2 \eta_{t_0}^2 Z_{t_0}
    +
    \sum_{s=t_0+1}^t \sigma^2 A_s^2 \eta_s^2 Z_s
    \le
    \sigma^2 A_{t_0}^2 \eta_{t_0}^2 R_0
    +
    \sum_{s = t_0 + 1}^t \sigma^2 A_s^2 \eta_s^2 \widehat U^{\mathrm{dist}}_s(\alpha)
    =
    \widehat V_t^{\mathrm{dist}}.
\]
By monotonicity again,
\[
    \mathfrak H_\alpha\left(V_t^{\mathrm{dist}}\right)
    \le
    \mathfrak H_\alpha\left(\widehat V_t^{\mathrm{dist}}\right).
\]
Using also $Z_{t_0} \le R_0$, we conclude that
\begin{align*}
    U^{\mathrm{dist}}_{t+1}(\alpha)
    =
    \frac{1}{A_t}
    \left[
        Z_{t_0}
        +
        \sum_{s = t_0}^t A_s \eta_s^2 \norm{g_s}^2
        +
        4\, \mathfrak H_\alpha\left(V_t^{\mathrm{dist}}\right)
    \right]
    &\le
    \frac{1}{A_t}
    \left[
        R_0
        +
        \sum_{s = t_0}^t A_s \eta_s^2 \norm{g_s}^2
        +
        4\, \mathfrak H_\alpha\left(\widehat V_t^{\mathrm{dist}}\right)
    \right]
    \\&=
    \widehat U^{\mathrm{dist}}_{t+1}(\alpha).
\end{align*}
This closes the induction. Hence, on $E_\alpha$,
\[
    Z_{t+1} \le \widehat U^{\mathrm{dist}}_{t+1}(\alpha)
    \qquad\text{for all } t \ge t_0.
\]
Therefore,
\[
    \Prob\left(
        \forall t \ge t_0:\ 
        Z_{t+1} \le \widehat U^{\mathrm{dist}}_{t+1}(\alpha)
    \right)
    \ge
    \Prob(E_\alpha)
    \ge
    1 - \alpha.
\]
This completes the proof.


\subsubsection{Proof of Proposition~\ref{thm:rate-distance-CS}}

Fix $\alpha \in (0, 1)$ and write $L_t \coloneqq L_t(\alpha)$. Since $B \ge 1$, the sequence $L_t$ is nondecreasing and satisfies $L_t \ge 1$ for all $t \ge 1$. For convenience, throughout the proof define
\[
    \widehat V_{t,\mathrm{eff}}^{\mathrm{dist}}
    \coloneqq
    \max\left\{\widehat V_t^{\mathrm{dist}},1\right\},
    \qquad
    \widehat m_t^{\mathrm{dist}}
    \coloneqq
    m\left(\widehat V_t^{\mathrm{dist}}\right)
    =
    \left\lceil
        \log_2 \widehat V_{t,\mathrm{eff}}^{\mathrm{dist}}
    \right\rceil.
\]
For $\eta_t = 1/(\mu t)$ and $t_0 = 3$, we have
\[
    1 - 2 \mu \eta_t
    =
    1 - \frac{2}{t}
    =
    \frac{t - 2}{t},
    \qquad t \ge 3.
\]
Therefore
\[
    A_t
    =
    \prod_{s = 3}^t \frac{s}{s - 2}
    =
    \frac{t(t - 1)}{2},
    \qquad t \ge 3.
\]
In particular,
\[
    \frac{1}{A_t}
    =
    \frac{2}{t(t - 1)}.
\]

We first bound the deterministic part of the recursive envelope. Define
\[
    D_t
    \coloneqq
    \frac{1}{A_t}
    \left[
        R_0 + \sum_{s = 3}^t A_s \eta_s^2 \norm{g_s}^2
    \right],
    \qquad t \ge 3.
\]
Since
\[
    A_s \eta_s^2
    =
    \frac{s(s - 1)}{2} \cdot \frac{1}{\mu^2 s^2}
    =
    \frac{s - 1}{2 \mu^2 s}
    \le
    \frac{1}{2 \mu^2},
\]
The almost-sure bound $\norm{g_t}\le G$ gives
\[
    \sum_{s=3}^t A_s \eta_s^2 \norm{g_s}^2
    \le
    \frac{G^2}{2 \mu^2}(t-2).
\]
Using $R_0 \le G^2/\mu^2$, we obtain
\[
    D_t
    \le
    \frac{2}{t(t-1)}
    \left[
        \frac{G^2}{\mu^2}
        +
        \frac{G^2}{2 \mu^2}(t-2)
    \right]
    =
    \frac{G^2}{\mu^2} \frac{1}{t-1}.
\]
Thus, for every $t \ge 3$,
\[
    D_t
    \le
    B \frac{1}{t-1}
    \le
    2 B \frac{L_{t+1}}{t+1},
\]
where we used $B \ge G^2/\mu^2$, $L_{t+1} \ge 1$, and $1/(t-1) \le 2/(t+1)$. Hence,
\begin{equation}
\label{eq:Dt-rate}
    D_t
    \le
    2 B \frac{L_{t+1}}{t+1},
    \qquad t \ge 3.
\end{equation}

Now set $K \coloneqq 4096$. We prove by induction that
\[
    \widehat U^{\mathrm{dist}}_s(\alpha)
    \le
    K B \frac{L_s}{s},
    \qquad s \ge 4.
\]
For the base case $s = 4$, note that $A_3 = 3$ and $\eta_3 = 1/(3 \mu)$, so $A_3^2 \eta_3^2 = 1/\mu^2$. Therefore,
\[
    \widehat V_3^{\mathrm{dist}}
    =
    \sigma^2 A_3^2 \eta_3^2 R_0
    \le
    \frac{\sigma^2}{\mu^2} \frac{G^2}{\mu^2}
    \le
    B^2.
\]
Since $B \ge 1$, this implies $\widehat V_{3, \mathrm{eff}}^{\mathrm{dist}} = \max\{\widehat V_3^{\mathrm{dist}}, 1\} \le B^2$. Consequently,
\[
    \widehat m_3^{\mathrm{dist}}
    =
    \left\lceil \log_2 \widehat V_{3, \mathrm{eff}}^{\mathrm{dist}}\right\rceil
    \le
    1 + 2\log_2 B.
\]
Using $\log_2 B \le 2 \log B$ for $B \ge 1$, we get $\widehat m_3^{\mathrm{dist}} + 2 \le 3 + 2\log_2 B \le 4(1 + \log B)$. Hence,
\[
    \log\bigl((\widehat m_3^{\mathrm{dist}} + 1)(\widehat m_3^{\mathrm{dist}} + 2)\bigr)
    \le
    2 \log(\widehat m_3^{\mathrm{dist}} + 2)
    \le
    2 \log 4 + 2\log(1 + \log B).
\]
Since $\log(1 + \log B) = \log\log(e B) \le L_4$ and $L_4 \ge 1$, the preceding display gives
\[
    \log\bigl((\widehat m_3^{\mathrm{dist}} + 1)(\widehat m_3^{\mathrm{dist}} + 2)\bigr)
    \le
    5 L_4.
\]
Therefore, by~\eqref{eq:subgaussian-stitched-boundary},
\[
    4\,\mathfrak H_\alpha\left(\widehat V_3^{\mathrm{dist}}\right)
    \le
    4 \sqrt{
        B^2
        \left(
            \log\frac{1}\alpha + 5 L_4
        \right)
    }
    \le
    10 B L_4,
\]
where we used $\log(1/\alpha) \le L_4$ and $L_4 \ge 1$. Now,
\[
    \widehat U^{\mathrm{dist}}_4(\alpha)
    =
    \frac{1}{A_3}
    \left[
        R_0 + A_3 \eta_3^2 \norm{g_3}^2 + 4\,\mathfrak H_\alpha\left(\widehat V_3^{\mathrm{dist}}\right)
    \right].
\]
Using $A_3 = 3$, $\eta_3 = 1/(3\mu)$, $R_0 \le G^2/\mu^2$, and $\norm{g_3} \le G$, we obtain
\[
    \widehat U^{\mathrm{dist}}_4(\alpha)
    \le
    \frac{1}{3}
    \left[
        \frac{G^2}{\mu^2}
        +
        \frac{G^2}{3 \mu^2}
        +
        10 B L_4
    \right].
\]
Because $B \ge G^2/\mu^2$ and $L_4 \ge 1$, this yields
\[
    \widehat U^{\mathrm{dist}}_4(\alpha)
    \le
    4 B L_4
    \le
    K B \frac{L_4}{4},
\]
since $K = 4096\ge 16$. This proves the base case.

Now fix $t \ge 4$ and assume that
\[
    \widehat U^{\mathrm{dist}}_s(\alpha)
    \le
    K B \frac{L_s}{s},
    \qquad s = 4, \dots, t.
\]
We prove the same bound for $s = t+1$. By the recursive definition of the observable intrinsic time,
\[
    \widehat V_t^{\mathrm{dist}}
    =
    \sigma^2 A_3^2 \eta_3^2 R_0
    +
    \sum_{s=4}^t
        \sigma^2 A_s^2 \eta_s^2
        \widehat U^{\mathrm{dist}}_s(\alpha).
\]
As shown above, the initial term satisfies $\sigma^2 A_3^2 \eta_3^2 R_0 \le B^2$. For the summation term, we have
\[
    A_s^2 \eta_s^2
    =
    \left(\frac{s(s - 1)}{2}\right)^2
    \frac{1}{\mu^2 s^2}
    =
    \frac{(s-1)^2}{4 \mu^2}
    \le
    \frac{s^2}{4 \mu^2}.
\]
Using the induction hypothesis,
\[
    \sum_{s = 4}^t
        \sigma^2 A_s^2 \eta_s^2
        \widehat U^{\mathrm{dist}}_s(\alpha)
    \le
    \frac{K \sigma^2 B}{4 \mu^2}
    \sum_{s = 4}^t s L_s 
    \le
    \frac{K \sigma^2 B}{4 \mu^2}
    t^2 L_t                      
    \le
    \frac{K}{4} B^2 t^2 L_t,
\]
where we used the monotonicity of $L_s$ and the fact that $\sigma^2/\mu^2 \le B$. Since $t \ge 4$, $B \ge 1$, $L_t \ge 1$, and $K \ge 1$, we also have
\[
    B^2 \le K B^2 t^2 L_t
    \qquad\text{and}\qquad
    1 \le K B^2 t^2 L_t.
\]
Therefore the initial term and the effective truncation by $1$ are both absorbed into the same scale, and
\begin{equation}
\label{eq:Vhat-rate}
    \widehat V_{t,\mathrm{eff}}^{\mathrm{dist}}
    \le
    K B^2 t^2 L_t.
\end{equation}

We next control the stitched logarithmic factor. From \eqref{eq:Vhat-rate} and $K = 4096 = 2^{12}$,
\[
    \widehat m_t^{\mathrm{dist}}
    =
    \left\lceil \log_2 \widehat V_{t,\mathrm{eff}}^{\mathrm{dist}}\right\rceil
    \le
    13 + 2 \log_2 B + 2 \log_2 t + \log_2 L_t.
\]
Therefore, $\widehat m_t^{\mathrm{dist}} + 2 \le 16 + 2\log_2 B + 2\log_2 t + \log_2 L_t$. Since $\log_2 y \le 2\log y$ for $y\ge 1$,
\[
    \widehat m_t^{\mathrm{dist}} + 2
    \le
    16 + 4 \log B + 4\log t + 2\log L_t.
\]
The right-hand side is bounded by $16 (1 + \log B)(1 + \log t)(1 + \log L_t)$. Thus,
\[
\begin{aligned}
    \log\bigl((\widehat m_t^{\mathrm{dist}} + 1)(\widehat m_t^{\mathrm{dist}} + 2)\bigr)
    &\le
    2 \log(\widehat m_t^{\mathrm{dist}} + 2) \\
    &\le
    2 \log 16
    + 2 \log(1 + \log B)
    + 2 \log(1 + \log t)
    + 2 \log(1 + \log L_t).
\end{aligned}
\]
Each term on the right is controlled by $L_t$. Indeed,
\[
    \log(1 + \log B)=\log\log(e B)\le L_t,
    \qquad
    \log(1 + \log t)=\log\log(e t)\le L_t,
\]
and, since $L_t \ge 1$, we have $\log(1 + \log L_t)\le L_t$. Also $2 \log 16 \le 6 L_t$. Hence
\[
    \log\bigl((\widehat m_t^{\mathrm{dist}} + 1)(\widehat m_t^{\mathrm{dist}} + 2)\bigr)
    \le
    12 L_t.
\]
Since $\log(1/\alpha) \le L_t$, we have
\[
    \log\frac{1}{\alpha}
    +
    \log\bigl((\widehat m_t^{\mathrm{dist}} + 1)(\widehat m_t^{\mathrm{dist}} + 2)\bigr)
    \le
    13 L_t.
\]

Combining this with \eqref{eq:Vhat-rate}, we obtain
\[
\begin{aligned}
    4\,\mathfrak H_\alpha\left(\widehat V_t^{\mathrm{dist}}\right)
    =
    4 \sqrt{
        \widehat V_{t,\mathrm{eff}}^{\mathrm{dist}}
        \left(
            \log\frac{1}{\alpha}
            +
            \log\bigl((\widehat m_t^{\mathrm{dist}} + 1)(\widehat m_t^{\mathrm{dist}} + 2)\bigr)
        \right)}                                      
    \le
    4 \sqrt{
        K B^2 t^2 L_t \cdot 13 L_t}                                      
    &=
    4 \sqrt{13 K}\, B\, t L_t                  
    \\&\le
    15 \sqrt K \, B \, t L_t .
\end{aligned}
\]
Dividing by $A_t = t(t-1)/2$ gives
\[
    \frac{4\,\mathfrak H_\alpha\left(\widehat V_t^{\mathrm{dist}}\right)}{A_t}
    \le
    30 \sqrt K \, B \, \frac{L_t}{t-1}.
\]
For $t \ge 4$, we have $1/(t-1) \le 5/(3(t+1))$, and therefore
\begin{equation}
\label{eq:b-over-A-rate}
    \frac{4\,\mathfrak H_\alpha\left(\widehat V_t^{\mathrm{dist}}\right)}{A_t}
    \le
    50 \sqrt K \,B \,\frac{L_{t+1}}{t+1},
\end{equation}
where we also used $L_t \le L_{t+1}$.

Finally, by definition,
\[
    \widehat U^{\mathrm{dist}}_{t+1}(\alpha)
    =
    D_t + \frac{4\,\mathfrak H_\alpha\left(\widehat V_t^{\mathrm{dist}}\right)}{A_t}.
\]
Using \eqref{eq:Dt-rate} and \eqref{eq:b-over-A-rate}, we get
\[
    \widehat U^{\mathrm{dist}}_{t+1}(\alpha)
    \le
    \left(2 + 50 \sqrt K \right)
    B \frac{L_{t+1}}{t+1}.
\]
Since $K = 4096$, $\sqrt K = 64$, and $2 + 50 \sqrt K = 3202 \le 4096 = K$, we conclude that
\[
    \widehat U^{\mathrm{dist}}_{t+1}(\alpha)
    \le
    K B \frac{L_{t+1}}{t+1}.
\]
This closes the induction. Substituting $K = 4096$ gives the claimed bound.


\subsubsection{Proof of Proposition~\ref{thm:adaptive-suboptimality-CS}}

\begin{proof}
By Lemma~\ref{lem:weighted-suboptimality-decomposition}, for every $t \ge t_0$,
\[
    f(\bar x_t) - f(x^\star)
    \le
    \frac{1}{\mu S_t}
    \sum_{s = t_0}^t
    \norm{g_s}^2
    +
    \frac{\mu t_0 (t_0-1)}{4S_t} Z_{t_0}
    +
    \frac{1}{S_t} \bar E_t.
\]
It remains to control $\bar E_t$ uniformly over time. Recall that
\[
    E_t
    =
    t \inner{x^\star-x_t}{\xi_t}.
\]
Since $x_t$ is $\cF_{t-1}$-measurable, applying Assumption~\ref{assump:stochastic-gradients}(ii) with the predictable direction $t (x^\star - x_t)$ gives
\[
    \E\left[
        e^{\lambda E_t}
        \,\middle|\,
        \cF_{t-1}
    \right]
    \le
    \exp\left(
        \frac{\lambda^2 \sigma^2 t^2 Z_t}{2}
    \right)
\]
for every $\lambda \ge 0$. Thus, the increments $E_t = \bar E_t - \bar E_{t-1}$ satisfy the hypotheses of Lemma~\ref{prop:dyadic-stitching} with
\[
    v_t^{\mathrm{sub}}
    =
    \sigma^2 t^2 Z_t,
    \qquad
    \gamma = -1.
\]
Applying Lemma~\ref{prop:dyadic-stitching} with $\bar E_t$ in place of $\bar X_t$ and $V_t^{\mathrm{sub}}$ in place of $V_t$ yields
\[
    \Prob\left(
        \forall t\ge t_0:\
        \bar E_t
        \le
        2\, \mathfrak H_\alpha\left(V_t^{\mathrm{sub}}\right)
    \right)
    \ge
    1-\alpha.
\]
Substituting this bound into the preceding decomposition and using the definition of $U_t^{\mathrm{sub}}(\alpha)$ in~\eqref{eq:Usub} proves the result.
\end{proof}


\subsubsection{Proof of Theorem~\ref{thm:observable-suboptimality-CS}}

We first note that, for each $t \ge t_0$, the quantity $\widehat U_t^{\mathrm{sub}}(\alpha)$ depends only on $R_0$, the observed trajectory up to time $t$, and the recursive observable bounds
\[
    \widehat U_{t_0+1}^{\mathrm{dist}}(\alpha/2), \dots, \widehat U_t^{\mathrm{dist}}(\alpha/2).
\]
Hence $\widehat U_t^{\mathrm{sub}}(\alpha)$ is fully observable. Now let $U_t^{\mathrm{sub}}(\alpha/2)$ be the adaptive confidence sequence from Proposition~\ref{thm:adaptive-suboptimality-CS}. Set
\[
    E_1
    \coloneqq
    \left\{
        \forall t \ge t_0:\ 
        f(\bar x_t) - f(x^\star) \le U_t^{\mathrm{sub}}(\alpha/2)
    \right\}.
\]
By Proposition~\ref{thm:adaptive-suboptimality-CS}, $\Prob(E_1) \ge 1 - \alpha/2$. Also set
\[
    E_2
    \coloneqq
    \left\{
        \forall t \ge t_0:\ 
        Z_{t+1} \le \widehat U_{t+1}^{\mathrm{dist}}(\alpha/2)
    \right\}.
\]
By Theorem~\ref{thm:observable-distance-cs}, $\Prob(E_2) \ge 1-\alpha/2$. We now show that on $E_2$, $U_t^{\mathrm{sub}}(\alpha/2) \le \widehat U_t^{\mathrm{sub}}(\alpha)$ for all $t \ge t_0$. Indeed, on $E_2$, we have $Z_s \le \widehat U_s^{\mathrm{dist}}(\alpha/2)$ for every $s \ge t_0 + 1$, and also $Z_{t_0} \le R_0$. Therefore,
\[
    V_t^{\mathrm{sub}}
    =
    \sigma^2 t_0^2 Z_{t_0}
    +
    \sum_{s = t_0 + 1}^t \sigma^2 s^2 Z_s
    \le
    \sigma^2 t_0^2 R_0
    +
    \sum_{s = t_0 + 1}^t \sigma^2 s^2 \widehat U_s^{\mathrm{dist}}(\alpha/2)
    =
    \widehat V_t^{\mathrm{sub}}(\alpha/2).
\]
Since $\mathfrak H_{\alpha/2}$ is nondecreasing, it follows that
\[
    2\,\mathfrak H_{\alpha/2}\left(V_t^{\mathrm{sub}}\right)
    \le
    2\,\mathfrak H_{\alpha/2}\left(
        \widehat V_t^{\mathrm{sub}}(\alpha/2)
    \right).
\]
Using also $Z_{t_0} \le R_0$, we obtain
\[
    U_t^{\mathrm{sub}}(\alpha/2)
    \le
    \frac{1}{\mu S_t} \sum_{s = t_0}^t \norm{g_s}^2
    +
    \frac{\mu t_0 (t_0-1)}{4 S_t} R_0
    +
    \frac{2}{S_t}\,
    \mathfrak H_{\alpha/2}\left(
        \widehat V_t^{\mathrm{sub}}(\alpha/2)
    \right)
    =
    \widehat U_t^{\mathrm{sub}}(\alpha).
\]
Therefore, on $E_1 \cap E_2$,
\[
    f(\bar x_t) - f(x^\star)
    \le
    U_t^{\mathrm{sub}}(\alpha/2)
    \le
    \widehat U_t^{\mathrm{sub}}(\alpha)
    \qquad
    \text{for all } t \ge t_0.
\]
Hence
\[
    \Prob\left(
        \forall t \ge t_0:\ 
        f(\bar x_t) - f(x^\star) \le \widehat U_t^{\mathrm{sub}}(\alpha)
    \right)
    \ge
    \Prob(E_1 \cap E_2)
    \ge
    1 - \alpha,
\]
where the last step follows from the union bound. This completes the proof.


\subsubsection{Proof of Proposition~\ref{thm:rate-suboptimality-CS}}

Throughout the proof, $C < \infty$ denotes a universal constant whose value may change from line to line. Write $L_t^{\mathrm{sub}} \coloneqq L_t^{\mathrm{sub}}(\alpha)$. The sequence $L_t^{\mathrm{sub}}$ is nondecreasing and satisfies $L_t^{\mathrm{sub}} \ge 1$. For convenience, throughout the proof define
\[
    \widehat V_{t,\mathrm{eff}}^{\mathrm{sub}}(\alpha/2)
    \coloneqq
    \max\left\{
        \widehat V_t^{\mathrm{sub}}(\alpha/2),1
    \right\},
    \qquad
    \widehat m_t^{\mathrm{sub}}(\alpha/2)
    \coloneqq
    m\left(
        \widehat V_t^{\mathrm{sub}}(\alpha/2)
    \right)
    =
    \left\lceil
        \log_2
        \widehat V_{t,\mathrm{eff}}^{\mathrm{sub}}(\alpha/2)
    \right\rceil.
\]
For every $t \ge 4$,
\[
    S_t
    = 
    \sum_{s=4}^t s
    =
    \frac{t(t+1)}{2} - 6
    \ge
    \frac{t^2}{4}.
\]
By definition of the observable weighted-average suboptimality bound,
\[
    \widehat U_t^{\mathrm{sub}}(\alpha)
    =
    \frac{1}{S_t}
    \left(
        \frac{1}{\mu} \sum_{s=4}^t \norm{g_s}^2
        +
        3 \mu R_0
        +
        2\,\mathfrak H_{\alpha/2}\left(
            \widehat V_t^{\mathrm{sub}}(\alpha/2)
        \right)
    \right),
    \qquad t \ge 4,
\]
where we used $\mu t_0 (t_0 - 1)/4 = 3\mu$. 

We first bound the deterministic terms. Since $\norm{g_s} \le G$ and, by the standing initialization, $R_0 \le G^2/\mu^2$, we have
\[
    \frac{1}{S_t} \frac{1}{\mu} \sum_{s=4}^t \norm{g_s}^2
    \le
    \frac{G^2}{\mu} \frac{t}{S_t}
    \le
    4 \frac{G^2}{\mu} \frac{1}{t},
\]
and
\[
    \frac{3 \mu R_0}{S_t}
    \le
    3 \frac{G^2}{\mu} \frac{1}{S_t}
    \le
    12 \frac{G^2}{\mu} \frac{1}{t^2}
    \le
    12 \frac{G^2}{\mu}\frac{1}{t}.
\]
Since $B_{\mathrm{sub}} \ge G^2/\mu$ and $L_t^{\mathrm{sub}} \ge 1$, it follows that
\begin{equation}
\label{eq:subopt-deterministic-rate}
    \frac{1}{S_t}
    \left(
        \frac{1}{\mu} \sum_{s=4}^t \norm{g_s}^2
        +
        3 \mu R_0
    \right)
    \le
    C B_{\mathrm{sub}} \frac{L_t^{\mathrm{sub}}}{t}.
\end{equation}

It remains to control the observable boundary term. By Remark~\ref{rem:alternative-polynomial-stepsize}, applied with confidence level $\alpha/2$, there exists a universal constant $C < \infty$ such that, for every $s \ge 5$,
\[
    \widehat U_s^{\mathrm{dist}}(\alpha/2)
    \le
    C \, B \, \frac{L_s(\alpha/2)}{s},
\]
where $L_s(\cdot)$ is the logarithmic factor from Proposition~\ref{thm:rate-distance-CS}. Since
\[
    L_s(\alpha/2)
    =
    L_s(\alpha) + \log 2
    \le
    C L_s^{\mathrm{sub}}(\alpha),
\]
we obtain
\[
    \widehat U_s^{\mathrm{dist}}(\alpha/2)
    \le
    C \, B \, \frac{L_s^{\mathrm{sub}}}{s},
    \qquad s \ge 5.
\]

Using the definition of the observable intrinsic time, we get
\[
\begin{aligned}
    \widehat V_t^{\mathrm{sub}}(\alpha/2)
    =
    16 \sigma^2 R_0
    +
    \sum_{s=5}^t \sigma^2 s^2 \widehat U_s^{\mathrm{dist}}(\alpha/2)
    &\le
    16 \sigma^2 \frac{G^2}{\mu^2}
    +
    C \sigma^2 B
    \sum_{s=5}^t s L_s^{\mathrm{sub}}\\
    &\le
    16 \sigma^2 \frac{G^2}{\mu^2}
    +
    C \sigma^2 B \, t^2 L_t^{\mathrm{sub}}.
\end{aligned}
\]
The first term is bounded by $C \mu^2 B^2$, because
\[
    \sigma^2 \frac{G^2}{\mu^2}
    =
    \mu^2
    \left(\frac{\sigma^2}{\mu^2}\right)
    \left(\frac{G^2}{\mu^2}\right)
    \le
    \mu^2 B^2.
\]
The second term is bounded by $C \mu^2 B^2 t^2 L_t^{\mathrm{sub}}$, since
\[
    \sigma^2 B
    =
    \mu^2 \left(\frac{\sigma^2}{\mu^2}\right) B
    \le
    \mu^2 B^2.
\]
Therefore, using $t \ge 4$ and $L_t^{\mathrm{sub}} \ge 1$,
\[
    \widehat V_t^{\mathrm{sub}}(\alpha/2)
    \le
    C \mu^2 B^2 t^2 L_t^{\mathrm{sub}}.
\]
Since $B_{\mathrm{sub}} = 1 + \mu B$, we have $\mu B \le B_{\mathrm{sub}}$ and $1 \le B_{\mathrm{sub}}$. Hence
\[
    \mu^2 B^2 t^2 L_t^{\mathrm{sub}}
    \le
    B_{\mathrm{sub}}^2 t^2 L_t^{\mathrm{sub}},
    \qquad
    1 \le B_{\mathrm{sub}}^2 t^2 L_t^{\mathrm{sub}}.
\]
Thus the effective intrinsic time satisfies
\begin{equation}
\label{eq:subopt-variance-rate}
    \widehat V_{t,\mathrm{eff}}^{\mathrm{sub}}(\alpha/2)
    \le
    C B_{\mathrm{sub}}^2 t^2 L_t^{\mathrm{sub}}.
\end{equation}
Therefore,
\[
    \widehat m_t^{\mathrm{sub}}(\alpha/2)
    \le
    C
    +
    2 \log_2 B_{\mathrm{sub}}
    +
    2 \log_2 t
    +
    \log_2 L_t^{\mathrm{sub}}.
\]
The same elementary logarithmic bound used in the proof of Proposition~\ref{thm:rate-distance-CS} then gives
\[
    \log\left(
        \bigl(\widehat m_t^{\mathrm{sub}}(\alpha/2) + 1\bigr)
        \bigl(\widehat m_t^{\mathrm{sub}}(\alpha/2) + 2\bigr)
    \right)
    \le
    C L_t^{\mathrm{sub}}.
\]
The logarithmic factor on the left is bounded in this way because the preceding display introduces only iterated logarithms of $t$, $B_{\mathrm{sub}}$, and $L_t^{\mathrm{sub}}$, all of which are absorbed by the definition of $L_t^{\mathrm{sub}}$; the same definition also gives $\log(2/\alpha) \le C L_t^{\mathrm{sub}}$. Using the definition of $\mathfrak H_{\alpha/2}$ and~\eqref{eq:subopt-variance-rate}, we obtain
\[
    2\,\mathfrak H_{\alpha/2}\left(
        \widehat V_t^{\mathrm{sub}}(\alpha/2)
    \right)
    \le
    2 \sqrt{
        C B_{\mathrm{sub}}^2 t^2 L_t^{\mathrm{sub}}
        \cdot
        C L_t^{\mathrm{sub}}
    }
    \le
    C B_{\mathrm{sub}}\,t\,L_t^{\mathrm{sub}}.
\]
Dividing by $S_t \ge t^2/4$ gives
\begin{equation}
\label{eq:subopt-boundary-rate}
    \frac{2\,\mathfrak H_{\alpha/2}\left(
            \widehat V_t^{\mathrm{sub}}(\alpha/2)
        \right)}{S_t}
    \le
    C B_{\mathrm{sub}}\frac{L_t^{\mathrm{sub}}}{t}.
\end{equation}

Combining \eqref{eq:subopt-deterministic-rate} and \eqref{eq:subopt-boundary-rate}, we conclude that
\[
    \widehat U_t^{\mathrm{sub}}(\alpha)
    \le
    C B_{\mathrm{sub}}\frac{L_t^{\mathrm{sub}}}{t},
    \qquad t \ge 4,
\]
almost surely. This proves the proposition.


\subsection{Proofs of Section~\ref{sec:refined-confidence-sequences}}

\subsubsection{Proof of Lemma~\ref{lem:eb-exponential}}

The claim is immediate for $\theta = 0$, so fix $\theta \in (0,1)$. For $x \ge -1$, define
\[
    h_\theta(x)
    \coloneqq
    \log(1+\theta x) - \theta x
    +\psi_{\mathrm E}(\theta) x^2.
\]
The logarithm is well defined because $1 + \theta x \ge 1 - \theta > 0$, and \eqref{eq:eb-pointwise} is equivalent to $h_\theta(x) \ge 0$. Differentiating gives
\begin{equation}
\label{eq:eb-h-derivative}
    h_\theta'(x)
    =
    x \left(
        2 \psi_{\mathrm E}(\theta)
        - \frac{\theta^2}{1 + \theta x}
    \right).
\end{equation}
The power-series representation
\[
    \psi_{\mathrm E}(\theta)
    =
    \sum_{m=2}^{\infty} \frac{\theta^m}{m}
\]
implies
\begin{equation}
\label{eq:eb-psi-two-sided}
    \frac{\theta^2}{2}
    \le
    \psi_{\mathrm E}(\theta)
    \le
    \frac{\theta^2}{2(1-\theta)}.
\end{equation}
The upper bound is \eqref{eq:eb-psi-upper}.

For $x \ge 0$, the term in parentheses in \eqref{eq:eb-h-derivative} is at least $2 \psi_{\mathrm E}(\theta) - \theta^2 \ge 0$. Thus $h_\theta$ is nondecreasing on $[0, \infty)$, and $h_\theta(x) \ge h_\theta(0) = 0$ there. For $x \in [-1,0]$, set
\[
    q_\theta(x)
    \coloneqq
    2 \psi_{\mathrm E}(\theta)
    - \frac{\theta^2}{1 + \theta x}.
\]
The function $q_\theta$ is nondecreasing, since
\[
    q_\theta'(x)
    =
    \frac{\theta^3}{(1 + \theta x)^2} > 0.
\]
By \eqref{eq:eb-psi-two-sided}, $q_\theta(-1) \le 0$ and $q_\theta(0) \ge 0$. Hence $q_\theta$ changes sign at most once, from nonpositive to nonnegative. Since $x \le 0$, equation \eqref{eq:eb-h-derivative} shows that $h_\theta$ first increases and then decreases on $[-1, 0]$. Its minimum is therefore attained at an endpoint. Finally,
\[
    h_\theta(-1)
    =
    \log(1 - \theta) + \theta + \psi_{\mathrm E}(\theta)
    = 0,
    \qquad
    h_\theta(0) = 0.
\]
This proves \eqref{eq:eb-pointwise}.

For the conditional statement, let $m \coloneqq \E[X \mid \mathcal G]$. Apply \eqref{eq:eb-pointwise} with $\theta = \lambda b$ and $x = X/b$ to obtain
\[
    \exp\left(
        \lambda X
        -\frac{\psi_{\mathrm E}(\lambda b)}{b^2} X^2
    \right)
    \le
    1 + \lambda X.
\]
Taking conditional expectations and multiplying by the $\mathcal G$-measurable factor $e^{- \lambda m}$ gives
\[
    \E\left[
        \exp\left(
            \lambda(X - m)
            - \frac{\psi_{\mathrm E}(\lambda b)}{b^2} X^2
        \right)
        \middle| \mathcal G
    \right]
    \le
    e^{-\lambda m}(1 + \lambda m)
    \le 1.
\]
Here $\lambda m \ge - \lambda b > -1$, and the last inequality follows from $(1+y) e^{-y} \le 1$ for $y > -1$. By \eqref{eq:eb-psi-upper},
\[
    \frac{\psi_{\mathrm E}(\lambda b)}{b^2}
    \le
    \frac{\lambda^2}{2 (1-\lambda b)}.
\]
Replacing the quadratic coefficient by the larger quantity on the right decreases the exponential integrand and proves \eqref{eq:eb-conditional}.

\subsubsection{Proof of Theorem~\ref{thm:variable-range-eb}}

For $k, \ell \ge 0$, define
\[
    \overline v_k \coloneqq 2^k,
    \qquad
    \overline b_\ell \coloneqq 2^\ell.
\]
We stitch over the dyadic intervals
\[
    I_0^{\mathrm v} \coloneqq [0, 1],
    \qquad
    I_k^{\mathrm v}
    \coloneqq
    (2^{k-1}, 2^k],
    \qquad
    k \ge 1,
\]
and
\[
    I_0^{\mathrm b} \coloneqq [0,1],
    \qquad
    I_\ell^{\mathrm b}
    \coloneqq
    (2^{\ell-1}, 2^\ell],
    \qquad
    \ell \ge 1.
\]
For every $k, \ell \ge 0$, set
\[
    \alpha_{k, \ell}
    \coloneqq
    \frac{\alpha}
    {(k+1)(k+2)(\ell+1)(\ell+2)}
\]
and
\[
    d_{k, \ell}
    \coloneqq
    \log \frac{1}{\alpha_{k, \ell}}
    =
    \log \frac{1}{\alpha}
    +
    \log \bigl( (k+1)(k+2) \bigr)
    +
    \log \bigl((\ell+1)(\ell+2)\bigr).
\]
Since
\[
    \sum_{j = 0}^{\infty}
    \frac{1}{(j+1)(j+2)}
    =
    1,
\]
we have
\[
    \sum_{k, \ell \ge 0}\alpha_{k, \ell}
    =
    \alpha.
\]
For each pair $(k, \ell)$, define
\begin{equation}
\label{eq:eb-lambda-kl}
    q_{k, \ell}
    \coloneqq
    \sqrt{\frac{2 d_{k, \ell}}{\overline v_k}},
    \qquad
    \lambda_{k, \ell}
    \coloneqq
    \frac{q_{k, \ell}}
    {1 + \overline b_\ell q_{k, \ell}}.
\end{equation}
Then
\[
    0
    <
    \lambda_{k, \ell} \overline b_\ell
    <
    1.
\]

Fix $k, \ell \ge 0$. Since $c_s$ is $\cF_{s-1}$-measurable for every $s$, the running maximum $C_s = \max_{t_0 \le r \le s} c_r$ is also $\cF_{s-1}$-measurable. Consequently,
\[
    J_s^{(\ell)}
    \coloneqq
    \mathds{1}\{C_s \le \overline b_\ell\}
\]
is predictable. Define
\[
    \overline H_{t_0 - 1}^{(\ell)}
    \coloneqq
    0,
    \qquad
    \overline H_t^{(\ell)}
    \coloneqq
    \sum_{s = t_0}^t
    J_s^{(\ell)}
    \left(
        H_s - \E[H_s \mid \cF_{s-1}]
    \right),
\]
and
\[
    W_{t_0 - 1}^{(\ell)}
    \coloneqq
    0,
    \qquad
    W_t^{(\ell)}
    \coloneqq
    \sum_{s = t_0}^t
    J_s^{(\ell)} H_s^2.
\]
For $t \ge t_0 - 1$, let
\[
    M_t^{k, \ell}
    \coloneqq
    \exp\left(
        \lambda_{k, \ell} \overline H_t^{(\ell)}
        -
        \frac{\lambda_{k, \ell}^2}
        {2 (1-\lambda_{k, \ell} \overline b_\ell)}
        W_t^{(\ell)}
    \right).
\]
If $J_s^{(\ell)} = 1$, then
\[
    |H_s|
    \le
    c_s
    \le
    C_s
    \le
    \overline b_\ell,
\]
whereas $J_s^{(\ell)} H_s = 0$ if $J_s^{(\ell)} = 0$. Hence
\[
    |J_s^{(\ell)} H_s|
    \le
    \overline b_\ell.
\]
Moreover, predictability gives
\[
    \E[J_s^{(\ell)} H_s \mid \cF_{s-1}]
    =
    J_s^{(\ell)} \E[H_s \mid \cF_{s-1}].
\]
Applying Lemma~\ref{lem:eb-exponential} conditionally with $X = J_s^{(\ell)} H_s$, $b = \overline b_\ell$, and $\lambda = \lambda_{k,\ell}$ yields
\[
    \E[M_s^{k, \ell} \mid \cF_{s-1}]
    \le
    M_{s-1}^{k, \ell}.
\]
Thus $\{M_t^{k, \ell}\}_{t \ge t_0-1}$ is a nonnegative supermartingale with $M_{t_0 - 1}^{k, \ell} = 1$. By Ville's inequality,
\begin{equation}
\label{eq:eb-pairwise-bound}
\begin{aligned}
    \Prob\Bigg(
        \exists\,  t \ge t_0:\
        \overline H_t^{(\ell)}
        \ge
        \frac{d_{k, \ell}}{\lambda_{k, \ell}}
        +
        \frac{\lambda_{k,\ell}}
        {2 (1-\lambda_{k, \ell}\overline b_\ell)}
        W_t^{(\ell)}
    \Bigg)
    \le
    \alpha_{k, \ell}.
\end{aligned}
\end{equation}

We now stitch these fixed-$\lambda$ bounds over the dyadic ranges of $W_t$ and $C_t$. Because $C_t$ is nondecreasing, $C_t \le \overline b_\ell$ implies
\[
    J_s^{(\ell)} = 1
    \qquad
    \text{for every }t_0 \le s \le t.
\]
Therefore,
\[
    \overline H_t^{(\ell)}
    =
    \overline H_t,
    \qquad
    W_t^{(\ell)}
    =
    W_t
\]
at every time $t$ satisfying $C_t \le \overline b_\ell$. Taking a union bound in \eqref{eq:eb-pairwise-bound} over all $k, \ell \ge 0$, with probability at least
\[
    1-
    \sum_{k, \ell \ge 0} \alpha_{k, \ell}
    =
    1 - \alpha,
\]
we have
\begin{equation}
\label{eq:eb-all-pairs-linear}
    \overline H_t
    <
    \frac{d_{k, \ell}}{\lambda_{k,\ell}}
    +
    \frac{\lambda_{k, \ell}}
    {2 (1-\lambda_{k, \ell} \overline b_\ell)}
    W_t
\end{equation}
simultaneously for every $k, \ell \ge 0$ and every $t \ge t_0$ such that $C_t \le \overline b_\ell$.

Work on this event and fix $t \ge t_0$. Let
\[
    W_{t,\mathrm{eff}}
    \coloneqq
    \max\{W_t, 1\},
    \qquad
    C_{t,\mathrm{eff}}
    \coloneqq
    \max\{C_t, 1\},
\]
and let
\[
    k_t \coloneqq k(W_t),
    \qquad
    \ell_t \coloneqq \ell(C_t).
\]
Then
\[
    W_t \in I_{k_t}^{\mathrm v},
    \qquad
    C_t \in I_{\ell_t}^{\mathrm b},
\]
and therefore
\[
    W_t
    \le
    \overline v_{k_t}
    \le
    2 W_{t, \mathrm{eff}},
    \qquad
    C_t
    \le
    \overline b_{\ell_t}
    \le
    2 C_{t, \mathrm{eff}}.
\]
Moreover,
\[
    d_{k_t, \ell_t}
    =
    \Gamma_\alpha(W_t, C_t).
\]
Applying \eqref{eq:eb-all-pairs-linear} with $(k, \ell) = (k_t, \ell_t)$ gives
\begin{align*}
    \overline H_t
    &<
    \frac{d_{k_t, \ell_t}}{\lambda_{k_t, \ell_t}}
    +
    \frac{\lambda_{k_t, \ell_t}}
    {2 (1 - \lambda_{k_t, \ell_t} \overline b_{\ell_t})}
    W_t
    \le
    \frac{d_{k_t, \ell_t}}{\lambda_{k_t, \ell_t}}
    +
    \frac{\lambda_{k_t, \ell_t} \overline v_{k_t}}
    {2 (1 - \lambda_{k_t,\ell_t} \overline b_{\ell_t})}.
\end{align*}
By the choice of $\lambda_{k_t,\ell_t}$ in \eqref{eq:eb-lambda-kl},
\[
    \frac{d_{k_t, \ell_t}}{\lambda_{k_t, \ell_t}}
    +
    \frac{\lambda_{k_t, \ell_t} \overline v_{k_t}}
    {2 (1-\lambda_{k_t, \ell_t} \overline b_{\ell_t})}
    =
    \sqrt{2 d_{k_t, \ell_t} \overline v_{k_t}}
    +
    d_{k_t, \ell_t} \overline b_{\ell_t}.
\]
Consequently,
\begin{align*}
    \overline H_t
    <
    \sqrt{2 d_{k_t,\ell_t} \overline v_{k_t}}
    +
    d_{k_t,\ell_t} \overline b_{\ell_t}
    \le
    2 \sqrt{W_{t,\mathrm{eff}} d_{k_t,\ell_t}}
    +
    2 C_{t,\mathrm{eff}} d_{k_t,\ell_t}
    =
    \mathfrak B_\alpha(W_t, C_t).
\end{align*}
Since $t \ge t_0$ was arbitrary, this proves \eqref{eq:variable-range-eb-conclusion}. Monotonicity follows because all terms in \eqref{eq:eb-effective-scales}--\eqref{eq:eb-boundary} are nondecreasing in the corresponding argument.


\subsubsection{Proof of Theorem~\ref{thm:observable-distance-eb}}

The recursion is well defined: at time $t$, all $R_s^{\mathrm{EB}}$ with $s \le t$ have already been computed, and therefore \eqref{eq:eb-observable-variance}--\eqref{eq:eb-observable-distance} define nonnegative $\cF_t$-measurable quantities. Hence the resulting process is fully observable. Let
\[
    \mathcal E_\alpha^{\mathrm{EB}}
    \coloneqq
    \left\{
        \forall t \ge t_0:\
        Z_{t+1}
        \le
        U_{t+1}^{\mathrm{dist,EB}}(\alpha)
    \right\}.
\]
By Proposition~\ref{thm:adaptive-distance-eb}, $\Prob(\mathcal E_\alpha^{\mathrm{EB}}) \ge 1-\alpha$. We prove by induction that, on this event, for every $t \ge t_0$,
\begin{equation}
\label{eq:eb-observable-induction}
    V_t^{\mathrm{dist,EB}}
    \le
    \widehat V_t^{\mathrm{dist,EB}},
    \qquad
    B_t^{\mathrm{dist,EB}}
    \le
    \widehat B_t^{\mathrm{dist,EB}},
    \qquad
    U_{t+1}^{\mathrm{dist,EB}}(\alpha)
    \le
    \widehat U_{t+1}^{\mathrm{dist,EB}}(\alpha).
\end{equation}

For $t = t_0$, Cauchy--Schwarz and $Z_{t_0} \le R_0 = R_{t_0}^{\mathrm{EB}}$ give
\[
    V_{t_0}^{\mathrm{dist,EB}}
    \le
    4 A_{t_0}^2 \eta_{t_0}^2
    \norm{g_{t_0}}^2 R_{t_0}^{\mathrm{EB}}
    =
    \widehat V_{t_0}^{\mathrm{dist,EB}},
\]
and $B_{t_0}^{\mathrm{dist,EB}} \le \widehat B_{t_0}^{\mathrm{dist,EB}}$. Monotonicity of $\mathfrak B_\alpha$ then implies the final inequality in \eqref{eq:eb-observable-induction}.

Now fix $t \ge t_0+1$ and suppose the claim holds through time $t - 1$. On $\mathcal E_\alpha^{\mathrm{EB}}$, for every $s = t_0 + 1, \dots, t$,
\[
    Z_s
    \le
    U_s^{\mathrm{dist,EB}}(\alpha)
    \le
    \widehat U_s^{\mathrm{dist,EB}}(\alpha)
    =
    R_s^{\mathrm{EB}}.
\]
Together with $Z_{t_0} \le R_{t_0}^{\mathrm{EB}}$, this yields
\begin{align*}
    V_t^{\mathrm{dist,EB}}
    &\le
    4 \sum_{s = t_0}^t
    A_s^2 \eta_s^2 \norm{g_s}^2 R_s^{\mathrm{EB}}
    =
    \widehat V_t^{\mathrm{dist,EB}},
    \\
    B_t^{\mathrm{dist,EB}}
    &\le
    2 G \max_{t_0 \le s \le t}
    A_s \eta_s \sqrt{R_s^{\mathrm{EB}}}
    =
    \widehat B_t^{\mathrm{dist,EB}}.
\end{align*}
Monotonicity of $\mathfrak B_\alpha$ and $Z_{t_0} \le R_0$ then give
\[
    U_{t+1}^{\mathrm{dist,EB}}(\alpha)
    \le
    \widehat U_{t+1}^{\mathrm{dist,EB}}(\alpha).
\]
This closes the induction. Hence, on $\mathcal E_\alpha^{\mathrm{EB}}$,
\[
    Z_{t+1}
    \le
    \widehat U_{t+1}^{\mathrm{dist,EB}}(\alpha)
    \qquad
    \text{for every }t \ge t_0.
\]
The theorem follows because $\Prob(\mathcal E_\alpha^{\mathrm{EB}}) \ge 1 - \alpha$.


\subsubsection{Proof of Proposition~\ref{thm:observable-distance-eb-rate}}

Throughout the proof, $C < \infty$ denotes a universal constant whose value may change from line to line. For convenience, throughout the proof define
\[
    \widehat V_{t,\mathrm{eff}}^{\mathrm{dist,EB}}
    \coloneqq
    \max\left\{
        \widehat V_t^{\mathrm{dist,EB}},1
    \right\},
    \qquad
    \widehat B_{t,\mathrm{eff}}^{\mathrm{dist,EB}}
    \coloneqq
    \max\left\{
        \widehat B_t^{\mathrm{dist,EB}},1
    \right\}.
\]
For the specified stepsize,
\[
    A_t
    =
    \prod_{s = 3}^t \frac{s}{s-2}
    =
    \frac{t(t-1)}{2},
    \qquad
    t \ge 3.
\]
Consequently,
\begin{equation}
\label{eq:eb-polynomial-rescaling}
    A_t
    \ge
    \frac{t^2}{4},
    \quad\
    A_t \eta_t^2
    =
    \frac{t - 1}{2 \mu^2 t}
    \le
    \frac{1}{2\mu^2},
    \quad\
    A_t^2 \eta_t^2
    =
    \frac{(t-1)^2}{4 \mu^2}
    \le
    \frac{t^2}{4 \mu^2},
    \quad\
    A_t \eta_t
    =
    \frac{t-1}{2 \mu}
    \le
    \frac{t}{2 \mu}.
\end{equation}
In particular, \eqref{eq:eb-integrability-condition} is automatic.

Write
\[
    L_t \coloneqq L_t^{\mathrm{EB}}(\alpha),
    \qquad
    q_t \coloneqq \frac{L_t}{t},
    \qquad
    \Phi_t \coloneqq q_t + q_t^2.
\]
As in the proof of Proposition~\ref{thm:rate-distance-CS}, we use an induction argument. We claim that, for a sufficiently large universal constant $K$,
\begin{equation}
\label{eq:eb-rate-induction}
    \widehat U_s^{\mathrm{dist,EB}}(\alpha)
    \le
    K B_{\mathrm{EB}} \Phi_s,
    \qquad
    s \ge 4.
\end{equation}

We first verify the claim at $s = 4$. Since $A_3 \eta_3 = 1/\mu$, the definitions \eqref{eq:eb-observable-variance}--\eqref{eq:eb-observable-range} give
\[
    \widehat V_{3,\mathrm{eff}}^{\mathrm{dist,EB}}
    \le
    4 B_{\mathrm{EB}}^2,
    \qquad
    \widehat B_{3,\mathrm{eff}}^{\mathrm{dist,EB}}
    \le
    2 B_{\mathrm{EB}}.
\]
Consequently,
\[
    k\left(\widehat V_3^{\mathrm{dist,EB}}\right) + 2
    \le
    C \left(1 + \log B_{\mathrm{EB}}\right),
    \qquad
    \ell\left(\widehat B_3^{\mathrm{dist,EB}}\right) + 2
    \le
    C \left(1 + \log B_{\mathrm{EB}}\right),
\]
and hence
\[
    \Gamma_\alpha\left(
        \widehat V_3^{\mathrm{dist,EB}},
        \widehat B_3^{\mathrm{dist,EB}}
    \right)
    \le
    C L_4.
\]
Using these bounds in \eqref{eq:eb-observable-distance}, together with $R_0 \le B_{\mathrm{EB}}$, yields
\[
    \widehat U_4^{\mathrm{dist,EB}}(\alpha)
    \le
    C B_{\mathrm{EB}}
    \left(1 + \sqrt{L_4} + L_4\right)
    \le
    C B_{\mathrm{EB}}L_4.
\]
Since $L_4 \ge 1$ and $\Phi_4 \ge L_4/4$, this proves \eqref{eq:eb-rate-induction} at $s = 4$ for some constant $K$.

Now fix $t \ge 4$ and suppose \eqref{eq:eb-rate-induction} holds through time $t$. The deterministic part of \eqref{eq:eb-observable-distance} satisfies
\begin{align}
\label{eq:eb-rate-deterministic}
    \frac{1}{A_t}
    \left[
        R_0
        + \sum_{s = 3}^t A_s\eta_s^2 \norm{g_s}^2
    \right]
    &\le
    \frac{2}{t(t-1)}
    \left[
        \frac{G^2}{\mu^2}
        +\frac{t-2}{2} \frac{G^2}{\mu^2}
    \right]
    \notag\\
    &=
    \frac{G^2/\mu^2}{t-1}
    \le
    2 B_{\mathrm{EB}} q_{t+1}.
\end{align}

We next control the two arguments of the empirical Bernstein boundary. We claim that
\begin{align}
\label{eq:eb-rate-variance-proxy}
    \widehat V_{t,\mathrm{eff}}^{\mathrm{dist,EB}}
    &\le
    C K B_{\mathrm{EB}}^2 t^2 L_t(1 + q_t),
    \\
\label{eq:eb-rate-range-proxy}
    \widehat B_{t,\mathrm{eff}}^{\mathrm{dist,EB}}
    &\le
    C \sqrt K B_{\mathrm{EB}}
    t \sqrt{q_t(1+q_t)}.
\end{align}
For \eqref{eq:eb-rate-variance-proxy}, the contribution of $s = 3$ is at most $4 B_{\mathrm{EB}}^2$. For $s \ge 4$, the induction hypothesis, \eqref{eq:eb-polynomial-rescaling}, and $G^2/\mu^2 \le B_{\mathrm{EB}}$ give
\[
    4 A_s^2 \eta_s^2 \norm{g_s}^2 \widehat U_s^{\mathrm{dist,EB}}(\alpha)
    \le
    K B_{\mathrm{EB}}^2 s^2 \Phi_s.
\]
Since $L_s$ is nondecreasing,
\[
    \sum_{s = 4}^t s^2 \Phi_s
    =
    \sum_{s = 4}^t \left( s L_s + L_s^2 \right)
    \le
    C \left(t^2 L_t + t L_t^2\right)
    =
    C t^2 L_t(1 + q_t).
\]
The initial contribution and the unit floor are absorbed by the right-hand side of \eqref{eq:eb-rate-variance-proxy}.

For \eqref{eq:eb-rate-range-proxy}, the contribution of $s = 3$ is at most $2 B_{\mathrm{EB}}$. For $s \ge 4$, the induction hypothesis and \eqref{eq:eb-polynomial-rescaling} give
\begin{align*}
    2 G A_s \eta_s \sqrt{\widehat U_s^{\mathrm{dist,EB}}(\alpha)}
    &\le
    C \sqrt K B_{\mathrm{EB}}\,
    s \sqrt{\Phi_s}
    \le
    C \sqrt K B_{\mathrm{EB}}
    \left(\sqrt{s L_s} + L_s\right),
\end{align*}
where we used $(G/\mu) \sqrt{B_{\mathrm{EB}}} \le B_{\mathrm{EB}}$. By the monotonicity of $L_s$ and the inequality
\[
    \sqrt q + q
    \le
    2 \sqrt{q(1+q)},
    \qquad
    q \ge 0,
\]
we obtain
\[
    \max_{4 \le s \le t}
    \left(\sqrt{s L_s} + L_s\right)
    \le
    2 t \sqrt{q_t(1+q_t)}.
\]
The initial contribution and the unit floor are again absorbed, proving \eqref{eq:eb-rate-range-proxy}.

We now bound the logarithmic factor. By \eqref{eq:eb-rate-variance-proxy}--\eqref{eq:eb-rate-range-proxy} and the definitions in \eqref{eq:eb-scale-indices},
\begin{align*}
    k\left(\widehat V_t^{\mathrm{dist,EB}}\right) + 2
    &\le
    C \left[
        1 + \log K + \log B_{\mathrm{EB}} + \log t
        + \log L_t + \log(1 + q_t)
    \right],
    \\
    \ell\left(\widehat B_t^{\mathrm{dist,EB}}\right) + 2
    &\le
    C \left[
        1 + \log K + \log B_{\mathrm{EB}} + \log t
        + \log L_t + \log(1 + q_t)
    \right].
\end{align*}
Since $\log((j+1)(j+2)) \le 2\log(j+2)$ for $j \ge 0$, these bounds yield
\begin{align*}
    \Gamma_\alpha\left(
        \widehat V_t^{\mathrm{dist,EB}},
        \widehat B_t^{\mathrm{dist,EB}}
    \right)
    \le C \Bigg[&1 + \log \frac{1}{\alpha}
        + \log(1 + \log K)
        + \log \log(eB_{\mathrm{EB}})
        \\
        &+\log\log(et)
        + \log(1+\log L_t)
        + \log(1+\log(1+q_t))\Bigg].
\end{align*}
The definition of $L_t$ and the elementary bounds
\[
    \log(1+\log L_t) \le L_t,
    \qquad
    \log(1+\log(1+q_t)) \le L_t
\]
therefore imply
\begin{equation}
\label{eq:eb-rate-log-factor}
    \Gamma_\alpha\left(
        \widehat V_t^{\mathrm{dist,EB}},
        \widehat B_t^{\mathrm{dist,EB}}
    \right)
    \le
    C \rho_K L_t,
    \qquad
    \rho_K \coloneqq 1 + \log(1+\log K).
\end{equation}

Combining \eqref{eq:eb-polynomial-rescaling}, \eqref{eq:eb-rate-variance-proxy}, and \eqref{eq:eb-rate-log-factor}, the square-root contribution satisfies
\begin{align}
\label{eq:eb-rate-square-root}
    \frac{2}{A_t}
    \sqrt{
        \widehat V_{t,\mathrm{eff}}^{\mathrm{dist,EB}}
        \Gamma_\alpha\left(
            \widehat V_t^{\mathrm{dist,EB}},
            \widehat B_t^{\mathrm{dist,EB}}
        \right)
    }
    &\le
    C \sqrt{K \rho_K} B_{\mathrm{EB}}
    q_t \sqrt{1 + q_t}
    \notag\\
    &\le
    C \sqrt{K \rho_K} B_{\mathrm{EB}}
    \left(q_t + q_t^2\right).
\end{align}
Similarly, the linear range contribution satisfies
\begin{equation}
\label{eq:eb-rate-range}
    \mathcal R_t^{\mathrm{EB}}(\alpha)
    \le
    C \sqrt K \rho_K B_{\mathrm{EB}}
    q_t \sqrt{q_t(1+q_t)}.
\end{equation}
For every $q_t \ge 0$, the right-hand side of \eqref{eq:eb-rate-range} is bounded by a constant multiple of $B_{\mathrm{EB}}(q_t + q_t^2)$; whenever $q_t \le 1$, it is bounded by a constant multiple of $B_{\mathrm{EB}} q_t^{3/2}$.

Combining \eqref{eq:eb-rate-deterministic}, \eqref{eq:eb-rate-square-root}, and \eqref{eq:eb-rate-range}, and using
\[
    q_t
    \le
    \frac{t+1}{t} q_{t+1}
    \le
    \frac{4}{3} q_{t+1},
    \qquad
    t \ge 3,
\]
which implies $\Phi_t \le (16/9) \Phi_{t+1}$, gives
\[
    \widehat U_{t+1}^{\mathrm{dist,EB}}(\alpha)
    \le
    \left[
        C + C \sqrt{K \rho_K} + C \sqrt K \rho_K
    \right]
    B_{\mathrm{EB}} \Phi_{t+1}.
\]
Since $\rho_K = o(\sqrt K)$, a sufficiently large universal $K$ makes the bracketed coefficient at most $K$. This closes the induction, and proves \eqref{eq:observable-distance-eb-rate}. Moreover, whenever $L_{t+1}^{\mathrm{EB}}(\alpha) \le t+1$, we have $q_{t+1} \le 1$ and therefore $\Phi_{t+1} \le 2 q_{t+1}$.

Finally, whenever $t \ge 4$ and $q_t \le 1$, the sharper form of \eqref{eq:eb-rate-range} gives \eqref{eq:eb-range-lower-order} after fixing the universal induction constant $K$. For fixed $\alpha$ and fixed problem parameters,
\[
    L_t^{\mathrm{EB}}(\alpha)
    =
    O(1 + \log\log t),
\]
so $q_t \to 0$ and $q_t^{3/2} = o(q_t)$. This also confirms the lower-order interpretation stated after the proposition.


\subsubsection{Proof of Theorem~\ref{thm:observable-suboptimality-eb}}

Fix $\alpha \in (0,1)$ and set $\beta \coloneqq \alpha/2$. For the prescribed deterministic stepsize, condition~\eqref{eq:eb-integrability-condition} is automatic, so Theorem~\ref{thm:observable-distance-eb} applies at confidence level $\beta$. Define
\[
    \mathcal E_{\mathrm{sub}}^{\mathrm{EB}}
    \coloneqq
    \left\{
        \forall t \ge t_0:\
        f(\bar x_t) - f(x^\star)
        \le
        U_t^{\mathrm{sub,EB}}(\beta)
    \right\}
\]
and
\[
    \mathcal E_{\mathrm{dist}}^{\mathrm{EB}}
    \coloneqq
    \left\{
        \forall s \ge t_0 + 1:\
        Z_s
        \le
        \widehat U_s^{\mathrm{dist,EB}}(\beta)
    \right\}.
\]
By Proposition~\ref{thm:adaptive-suboptimality-eb} and Theorem~\ref{thm:observable-distance-eb}, each event has probability at least $1 - \beta$.

On $\mathcal E_{\mathrm{dist}}^{\mathrm{EB}}$, equation \eqref{eq:eb-sub-radius-proxy} and the initialization $Z_{t_0} \le R_0$ imply
\[
    Z_s
    \le
    R_s^{\mathrm{dist,EB}}(\beta)
    \qquad
    \text{for every }s \ge t_0.
\]
Consequently, Cauchy--Schwarz gives, for every $t \ge t_0$,
\begin{align*}
    V_t^{\mathrm{sub,EB}}
    &\le
    \sum_{s = t_0}^t
    s^2 \norm{g_s}^2 Z_s
    \le
    \widehat V_t^{\mathrm{sub,EB}}(\beta),
    \\
    B_t^{\mathrm{sub,EB}}
    &\le
    G \max_{t_0 \le s \le t}
    s \sqrt{R_s^{\mathrm{dist,EB}}(\beta)}
    =
    \widehat B_t^{\mathrm{sub,EB}}(\beta).
\end{align*}
By the monotonicity of $\mathfrak B_\beta$ in both arguments and $Z_{t_0} \le R_0$, it follows that
\[
    U_t^{\mathrm{sub,EB}}(\beta)
    \le
    \widehat U_t^{\mathrm{sub,EB}}(\alpha)
    \qquad
    \text{for every }t \ge t_0.
\]
Therefore, on $\mathcal E_{\mathrm{sub}}^{\mathrm{EB}} \cap \mathcal E_{\mathrm{dist}}^{\mathrm{EB}}$,
\[
    f(\bar x_t) - f(x^\star)
    \le
    \widehat U_t^{\mathrm{sub,EB}}(\alpha)
    \qquad
    \text{for every }t \ge t_0.
\]
The union bound gives
\[
    \Prob\left(
        \mathcal E_{\mathrm{sub}}^{\mathrm{EB}}
        \cap
        \mathcal E_{\mathrm{dist}}^{\mathrm{EB}}
    \right)
    \ge
    1 - 2\beta
    =
    1 - \alpha,
\]
which proves the theorem.


\subsubsection{Proof of Proposition~\ref{thm:observable-suboptimality-eb-rate}}

Throughout the proof, $C < \infty$ denotes a universal constant whose value may change from line to line. For convenience, throughout the proof define
\[
    \widehat V_{t,\mathrm{eff}}^{\mathrm{sub,EB}}(\beta)
    \coloneqq
    \max\left\{
        \widehat V_t^{\mathrm{sub,EB}}(\beta),1
    \right\},
    \qquad
    \widehat B_{t,\mathrm{eff}}^{\mathrm{sub,EB}}(\beta)
    \coloneqq
    \max\left\{
        \widehat B_t^{\mathrm{sub,EB}}(\beta),1
    \right\}.
\]
Set
\[
    \beta \coloneqq \frac{\alpha}{2},
    \qquad
    L_t \coloneqq L_t^{\mathrm{sub,EB}}(\alpha),
    \qquad
    q_t \coloneqq \frac{L_t}{t}.
\]
The sequence $L_t$ is nondecreasing and satisfies $L_t \ge 1$.

Following Remark~\ref{rem:eb-alternative-polynomial-stepsize}, we have:
\begin{equation}
\label{eq:eb-sub-distance-all-time-rate}
    \widehat U_s^{\mathrm{dist,EB}}(\beta)
    \le
    C B_{\mathrm{EB}}
    \left[
        \frac{L_s^{\mathrm{EB}}(\beta)}{s}
        +
        \left(
            \frac{L_s^{\mathrm{EB}}(\beta)}{s}
        \right)^2
    \right],
    \qquad
    s \ge 5.
\end{equation}
By \eqref{eq:eb-sub-rate-log}, $L_s^{\mathrm{EB}}(\beta) \le L_s$, and hence
\begin{equation}
\label{eq:eb-sub-distance-radius-rate}
    R_s^{\mathrm{dist,EB}}(\beta)
    \le
    C B_{\mathrm{EB}}
    \left(q_s + q_s^2\right),
    \qquad
    s \ge 5.
\end{equation}
At the initial index,
\[
    R_4^{\mathrm{dist,EB}}(\beta)
    =
    R_0
    =
    \frac{G^2}{\mu^2}.
\]

We next control the two arguments of the empirical Bernstein boundary. We claim that, for every $t \ge 4$,
\begin{align}
\label{eq:eb-sub-rate-variance-proxy}
    \widehat V_{t,\mathrm{eff}}^{\mathrm{sub,EB}}(\beta)
    &\le
    C \left( B_{\mathrm{sub}}^{\mathrm{EB}} \right)^2
    t^2 L_t(1 + q_t),
    \\
\label{eq:eb-sub-rate-range-proxy}
    \widehat B_{t,\mathrm{eff}}^{\mathrm{sub,EB}}(\beta)
    &\le
    C B_{\mathrm{sub}}^{\mathrm{EB}}
    t \sqrt{q_t(1+q_t)}.
\end{align}
For \eqref{eq:eb-sub-rate-variance-proxy}, the contribution of $s = 4$ is bounded by
\[
    16 G^2 R_0
    =
    \frac{16 G^4}{\mu^2}
    \le
    16 \left(B_{\mathrm{sub}}^{\mathrm{EB}}\right)^2.
\]
For $s \ge 5$, Assumption~\ref{assump:bounded-stochastic-gradients} and \eqref{eq:eb-sub-distance-radius-rate} give
\[
    s^2 \norm{g_s}^2 R_s^{\mathrm{dist,EB}}(\beta)
    \le
    C G^2 B_{\mathrm{EB}}
    \left(s L_s + L_s^2\right).
\]
Moreover,
\[
    G^2 B_{\mathrm{EB}}
    \le
    \mu^2 B_{\mathrm{EB}}^2
    \le
    \left(B_{\mathrm{sub}}^{\mathrm{EB}}\right)^2,
\]
because $G^2 \le \mu^2 B_{\mathrm{EB}}$. Since $L_s$ is nondecreasing,
\[
    \sum_{s = 5}^t \left(s L_s + L_s^2\right)
    \le
    C \left(t^2 L_t + t L_t^2\right)
    =
    C t^2 L_t(1 + q_t).
\]
The initial contribution and the unit floor are therefore absorbed by the right-hand side of \eqref{eq:eb-sub-rate-variance-proxy}.

For \eqref{eq:eb-sub-rate-range-proxy}, the contribution of $s = 4$ is at most
\[
    4 G \sqrt{R_0}
    =
    \frac{4 G^2}{\mu}
    \le
    4 B_{\mathrm{sub}}^{\mathrm{EB}}.
\]
For $s \ge 5$, equation \eqref{eq:eb-sub-distance-radius-rate} yields
\[
    G s \sqrt{R_s^{\mathrm{dist,EB}}(\beta)}
    \le
    C G \sqrt{B_{\mathrm{EB}}}\,
    s \sqrt{q_s(1+q_s)}.
\]
Since
\[
    G \sqrt{B_{\mathrm{EB}}}
    \le
    \mu B_{\mathrm{EB}}
    \le
    B_{\mathrm{sub}}^{\mathrm{EB}},
\]
and
\[
    s^2 q_s(1+q_s)
    =
    s L_s + L_s^2
    \le
    t L_t + L_t^2
    =
    t^2 q_t(1+q_t),
\]
we obtain \eqref{eq:eb-sub-rate-range-proxy}, after absorbing the initial contribution and the unit floor.

We now control the logarithmic factor. The definitions in \eqref{eq:eb-scale-indices}, together with \eqref{eq:eb-sub-rate-variance-proxy}--\eqref{eq:eb-sub-rate-range-proxy}, imply
\begin{align*}
    k\left(\widehat V_t^{\mathrm{sub,EB}}(\beta)\right) + 2
    &\le
    C \left[
        1 + \log B_{\mathrm{sub}}^{\mathrm{EB}}
        + \log t
        + \log L_t
        + \log(1 + q_t)
    \right],
    \\
    \ell\left(\widehat B_t^{\mathrm{sub,EB}}(\beta)\right) + 2
    &\le
    C\left[
        1 + \log B_{\mathrm{sub}}^{\mathrm{EB}}
        + \log t
        + \log L_t
        + \log(1 + q_t)
    \right].
\end{align*}
Using $\log((j+1)(j+2))\le2\log(j+2)$ for $j \ge 0$, we conclude that
\begin{align*}
    \Gamma_\beta\left(
        \widehat V_t^{\mathrm{sub,EB}}(\beta),
        \widehat B_t^{\mathrm{sub,EB}}(\beta)
    \right)
    &\le
    C \Bigg[
        1 + \log\frac{2}{\alpha}
        + \log\log \left(eB_{\mathrm{sub}}^{\mathrm{EB}}\right)
        + \log\log(et)
        \\
        &\hspace{20mm}
        + \log(1 + \log L_t)
        + \log(1 + \log(1 + q_t))
    \Bigg].
\end{align*}
The definition of $L_t$ controls the first four terms. Furthermore, since $L_t\ge1$,
\[
    \log(1 + \log L_t)
    \le
    L_t,
\]
and, because $\log(1+x) \le x$ for every $x \ge 0$,
\[
    \log(1 + \log(1+q_t))
    \le
    \log(1+q_t)
    \le
    q_t
    =
    \frac{L_t}{t}
    \le
    L_t.
\]
Hence
\begin{equation}
\label{eq:eb-sub-rate-log-factor}
    \Gamma_\beta\left(
        \widehat V_t^{\mathrm{sub,EB}}(\beta),
        \widehat B_t^{\mathrm{sub,EB}}(\beta)
    \right)
    \le
    C L_t.
\end{equation}

Since
\[
    S_t
    =
    \sum_{s=4}^t s
    =
    \frac{t(t+1)}{2} - 6
    \ge
    \frac{t^2}{4},
    \qquad
    t \ge 4,
\]
combining \eqref{eq:eb-sub-rate-variance-proxy} and \eqref{eq:eb-sub-rate-log-factor} gives
\begin{equation}
\label{eq:eb-sub-rate-square-root}
    \frac{2}{S_t}
    \sqrt{
        \widehat V_{t,\mathrm{eff}}^{\mathrm{sub,EB}}(\beta)
        \Gamma_\beta\left(
            \widehat V_t^{\mathrm{sub,EB}}(\beta),
            \widehat B_t^{\mathrm{sub,EB}}(\beta)
        \right)
    }
    \le
    C B_{\mathrm{sub}}^{\mathrm{EB}}
    q_t \sqrt{1 + q_t}
    \le
    C B_{\mathrm{sub}}^{\mathrm{EB}}
    \left(q_t + q_t^2\right).
\end{equation}
Similarly, \eqref{eq:eb-sub-rate-range-proxy} and \eqref{eq:eb-sub-rate-log-factor} yield
\begin{equation}
\label{eq:eb-sub-rate-range}
    \mathcal R_t^{\mathrm{sub,EB}}(\alpha)
    \le
    C B_{\mathrm{sub}}^{\mathrm{EB}}
    q_t \sqrt{q_t(1+q_t)}.
\end{equation}
For every $q_t \ge 0$, the right-hand side is bounded by a constant multiple of $B_{\mathrm{sub}}^{\mathrm{EB}}(q_t + q_t^2)$; whenever $q_t \le 1$, it is bounded by a constant multiple of $B_{\mathrm{sub}}^{\mathrm{EB}}q_t^{3/2}$.

It remains to control the deterministic terms in \eqref{eq:eb-sub-observable-cs}. Assumption~\ref{assump:bounded-stochastic-gradients}, $R_0 = G^2/\mu^2$, and $S_t \ge t^2/4$ give
\begin{align*}
    \frac{1}{S_t}
    \left[
        \frac{1}{\mu}\sum_{s = 4}^t \norm{g_s}^2
        + 3\mu R_0
    \right]
    \le
    \frac{C}{t}\frac{G^2}{\mu}
    \le
    C B_{\mathrm{sub}}^{\mathrm{EB}}q_t,
\end{align*}
where we used $L_t \ge 1$. Combining this bound with \eqref{eq:eb-sub-rate-square-root} and \eqref{eq:eb-sub-rate-range} gives the stronger estimate
\[
    \widehat U_t^{\mathrm{sub,EB}}(\alpha)
    \le
    C B_{\mathrm{sub}}^{\mathrm{EB}}
    \left(q_t + q_t^2\right),
    \qquad
    t \ge 4.
\]
This proves \eqref{eq:observable-suboptimality-eb-rate}. Moreover, whenever $L_t \le t$, we have $q_t \le 1$, and therefore $q_t + q_t^2 \le 2 q_t$. Under the same condition, the sharper form of \eqref{eq:eb-sub-rate-range} gives \eqref{eq:eb-sub-range-lower-order}.

Finally, for fixed $\alpha$ and fixed problem parameters,
\[
    L_t^{\mathrm{sub,EB}}(\alpha)
    =
    O(1 + \log\log t),
\]
so $q_t \to 0$ and $q_t^{3/2} = o(q_t)$. This also confirms the lower-order interpretation stated after the proposition.


\subsection{Proofs of Section~\ref{sec:minibatching}}

\subsubsection{Proof of Proposition~\ref{cor:minibatch-variable-range-eb}}

For brevity, write
\[
    H_t \coloneqq \frac{1}{b}\sum_{i=1}^b H_t^{(i)}.
\]
We follow the predictable-truncation and stitching proof of Theorem~\ref{thm:variable-range-eb}. Fix a dyadic range level $\ell \ge 0$, let $\overline b_\ell \coloneqq 2^\ell$, and define the predictable truncation
\[
    J_t^{(\ell)}
    \coloneqq
    \mathds{1}\left\{ C_t \le \overline b_\ell \right\}.
\]
For every $\lambda \in [0, 1/\overline b_\ell)$, Lemma~\ref{lem:eb-exponential} applied conditionally to $J_t^{(\ell)} H_t^{(i)}$ gives
\[
    \E\Bigg[
        \exp \Bigg(
            \lambda J_t^{(\ell)}
            \bigl( H_t^{(i)} - m_t \bigr)
            -
            \frac{\lambda^2 J_t^{(\ell)}}
            {2 \bigl(1 - \lambda\overline b_\ell\bigr)}
            \bigl(H_t^{(i)}\bigr)^2
        \Bigg)
        \mathrel{\Big|}
        \cF_{t-1}
    \Bigg]
    \le1.
\]
Because $J_t^{(\ell)}$ is $\cF_{t-1}$-measurable and $H_t^{(1)},\ldots,H_t^{(b)}$ are conditionally independent given $\cF_{t-1}$, the corresponding exponential factors are conditionally independent. Multiplying their conditional expectations therefore gives
\[
    \E\Bigg[
        \exp\Bigg(
            \lambda b J_t^{(\ell)}
            \bigl( H_t - m_t \bigr)
            -
            \frac{\lambda^2 J_t^{(\ell)}}
            {2 \bigl( 1 - \lambda \overline b_\ell \bigr)}
            \sum_{i=1}^b \bigl(H_t^{(i)}\bigr)^2
        \Bigg)
        \mathrel{\Big|}
        \cF_{t-1}
    \Bigg]
    \le 1.
\]
Thus, for every fixed intrinsic-time and range scale, the exponential process used in the proof of Theorem~\ref{thm:variable-range-eb} remains a nonnegative supermartingale after replacing its centered increment by $b J_t^{(\ell)} (H_t - m_t)$ and its observed quadratic increment by $J_t^{(\ell)}\sum_{i=1}^b (H_t^{(i)})^2$. Repeating the same dyadic stitching over the intrinsic-time and range scales gives, with probability at least $1 - \alpha$, simultaneously for every $t \ge t_0$,
\[
    b \sum_{s=t_0}^t ( H_s - m_s )
    \le
    \mathfrak B_\alpha\left(
        W_t^{\mathrm{MB}},
        C_t
    \right).
\]
Since $m_s = \E[H_s \mid \cF_{s-1}]$, division by $b$ proves \eqref{eq:minibatch-eb-bound}.


\section{Additional Results}
\label{sec:additional-results}

\begin{proposition}[Refined confidence sequence for $\norm{x_{t+1} - x^\star}^2$]
\label{thm:adaptive-distance-eb}
Suppose Assumptions~\ref{assump:strong-convexity}--\ref{assump:stepsize-bound} and \ref{assump:bounded-stochastic-gradients} hold, and suppose~\eqref{eq:eb-integrability-condition} is satisfied. Let $V_t^{\mathrm{dist,EB}}$ and $B_t^{\mathrm{dist,EB}}$ be defined as in~\eqref{eq:eb-latent-variance}--\eqref{eq:eb-latent-range}. Then, for every $\alpha \in (0,1)$, the process $U_{t+1}^{\mathrm{dist,EB}}(\alpha)$ defined in~\eqref{eq:adaptive-distance-eb} satisfies
\[
    \Prob\left(
        \forall t \ge t_0:\
        Z_{t+1}
        \le
        U_{t+1}^{\mathrm{dist,EB}}(\alpha)
    \right)
    \ge
    1-\alpha.
\]
\end{proposition}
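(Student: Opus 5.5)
The plan is to combine the deterministic decomposition of Lemma~\ref{lem:distance-decomposition} with Theorem~\ref{thm:variable-range-eb}, applied to $H_t = H_t^{\mathrm{dist}}$ from~\eqref{eq:eb-distance-H} and $c_t = 2 G A_t \eta_t \sqrt{Z_t}$. The argument mirrors the proof of Proposition~\ref{thm:adaptive-distance-CS}; the only change is that the stitched sub-Gaussian bound is replaced by the empirical Bernstein bound.

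\textbf{Step 1: the centered increments.} By Lemma~\ref{lem:distance-decomposition}, for every $t \ge t_0$,
\[
    Z_{t+1}
    \le
    \frac{1}{A_t}\Bigl[Z_{t_0} + \sum_{s=t_0}^t A_s\eta_s^2\norm{g_s}^2 + \bar X_t\Bigr].
\]
Since $A_t$, $\eta_t$ and $x_t$ are $\cF_{t-1}$-measurable, conditional unbiasedness (Assumption~\ref{assump:stochastic-gradients}(i)) gives
\[
    \E[H_t^{\mathrm{dist}}\mid\cF_{t-1}] = 2A_t\eta_t\inner{x^\star - x_t}{\nabla f(x_t)},
\]
so that $H_t^{\mathrm{dist}} - \E[H_t^{\mathrm{dist}}\mid\cF_{t-1}] = X_t$. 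Hence the centered process $\overline H_t$ of Theorem~\ref{thm:variable-range-eb} coincides with $\bar X_t$.

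\textbf{Step 2: the hypotheses of Theorem~\ref{thm:variable-range-eb}.}
\begin{itemize}
    \item \emph{Adaptedness.} $H_t^{\mathrm{dist}}$ is $\cF_t$-measurable.
    \item \emph{Range.} By Cauchy--Schwarz and Assumption~\ref{assump:bounded-stochastic-gradients}, $|H_t^{\mathrm{dist}}| \le 2 G A_t\eta_t\sqrt{Z_t} = c_t$. This $c_t$ is nonnegative, predictable, and almost surely finite, because $A_t$ is a finite product of positive factors.
    \item \emph{Integrability.} Bounded gradients imply $\norm{\nabla f(x_t)}\le G$, hence $Z_t \le G^2/\mu^2$ as in~\eqref{eq:R_0}. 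Therefore $|H_t^{\mathrm{dist}}|\le (2G^2/\mu)A_t\eta_t$, which is integrable by~\eqref{eq:eb-integrability-condition}. The same bound yields integrability of $\E[H_t^{\mathrm{dist}}\mid\cF_{t-1}]$.
\end{itemize}
With these choices, $W_t = V_t^{\mathrm{dist,EB}}$ from~\eqref{eq:eb-latent-variance} and $C_t = B_t^{\mathrm{dist,EB}}$ from~\eqref{eq:eb-latent-range}.

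\textbf{Step 3: conclusion.} Theorem~\ref{thm:variable-range-eb} gives, with probability at least $1-\alpha$, simultaneously for all $t \ge t_0$,
\[
    \bar X_t \le \mathfrak B_\alpha\bigl(V_t^{\mathrm{dist,EB}}, B_t^{\mathrm{dist,EB}}\bigr).
\]
Substituting this bound into the decomposition from Step~1 yields $Z_{t+1}\le U_{t+1}^{\mathrm{dist,EB}}(\alpha)$ on the same event, which proves the proposition.

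The main point requiring care is the verification of the hypotheses of Theorem~\ref{thm:variable-range-eb} in Step~2, namely predictability of the range and integrability. This is routine given~\eqref{eq:eb-integrability-condition}; no genuine obstacle is expected.
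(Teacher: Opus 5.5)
Your proposal is correct and follows essentially the same route as the paper's proof. Both apply Lemma~\ref{lem:distance-decomposition}, identify the centered increments of $H_t^{\mathrm{dist}}$ with $X_t$ via conditional unbiasedness, and verify the predictable range $c_t = 2GA_t\eta_t\sqrt{Z_t}$ and integrability through $Z_t \le G^2/\mu^2$ together with~\eqref{eq:eb-integrability-condition}; they then invoke Theorem~\ref{thm:variable-range-eb} with $W_t = V_t^{\mathrm{dist,EB}}$ and $C_t = B_t^{\mathrm{dist,EB}}$ and substitute the resulting bound back into the decomposition.
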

\begin{proof}
By Lemma~\ref{lem:distance-decomposition}, for every $t \ge t_0$,
\[
    A_t Z_{t+1}
    \le
    Z_{t_0}
    +
    \sum_{s = t_0}^t
    A_s \eta_s^2 \norm{g_s}^2
    +
    \sum_{s = t_0}^t X_s.
\]
By~\eqref{eq:eb-distance-H} and conditional unbiasedness,
\[
    H_s^{\mathrm{dist}}
    -
    \E\left[
        H_s^{\mathrm{dist}}
        \,\middle|\,
        \cF_{s-1}
    \right]
    =
    X_s.
\]
Moreover, Assumption~\ref{assump:bounded-stochastic-gradients} gives
\[
    \left|H_s^{\mathrm{dist}}\right|
    \le
    2 G A_s \eta_s \sqrt{Z_s}.
\]
By construction, $\{H_s^{\mathrm{dist}}\}_{s \ge t_0}$ is adapted. Since $A_s$, $\eta_s$, and $Z_s$ are $\cF_{s-1}$-measurable, the process
\[
    c_s^{\mathrm{dist}}
    \coloneqq
    2 G A_s \eta_s \sqrt{Z_s}
\]
is predictable and almost surely finite. Moreover, the quadratic process and running predictable range associated with $\{H_s^{\mathrm{dist}}\}_{s \ge t_0}$ in Theorem~\ref{thm:variable-range-eb} are precisely $V_t^{\mathrm{dist,EB}}$ and $B_t^{\mathrm{dist,EB}}$, respectively. Finally, since $Z_s \le G^2/\mu^2$ almost surely, condition~\eqref{eq:eb-integrability-condition} ensures that $H_s^{\mathrm{dist}}$ is integrable for every deterministic $s \ge t_0$. Applying Theorem~\ref{thm:variable-range-eb} therefore gives, with probability at least $1-\alpha$, simultaneously for every $t \ge t_0$,
\[
    \sum_{s = t_0}^t X_s
    \le
    \mathfrak B_\alpha\left(
        V_t^{\mathrm{dist,EB}},
        B_t^{\mathrm{dist,EB}}
    \right).
\]
Substituting this inequality into the distance decomposition and dividing by $A_t$ yields
\[
    Z_{t+1}
    \le
    U_{t+1}^{\mathrm{dist,EB}}(\alpha)
\]
simultaneously for every $t \ge t_0$, which proves the result.
\end{proof}


\begin{proposition}[Refined confidence sequence for $f(\bar x_t) - f(x^\star)$]
\label{thm:adaptive-suboptimality-eb}
Suppose Assumptions~\ref{assump:strong-convexity}--\ref{assump:stepsize-bound} and \ref{assump:bounded-stochastic-gradients} hold. Let $t_0 \ge 4$ and $\eta_t = 2/(\mu(t+1))$ for every $t \ge t_0$. Let $V_t^{\mathrm{sub,EB}}$ and $B_t^{\mathrm{sub,EB}}$ be defined as in~\eqref{eq:eb-sub-latent-variance}--\eqref{eq:eb-sub-latent-range}. Then, for every $\alpha \in (0,1)$, the process $U_t^{\mathrm{sub,EB}}(\alpha)$ defined in~\eqref{eq:adaptive-suboptimality-eb} satisfies
\[
    \Prob\left(
        \forall t \ge t_0:\
        f(\bar x_t) - f(x^\star)
        \le
        U_t^{\mathrm{sub,EB}}(\alpha)
    \right)
    \ge
    1-\alpha.
\]
\end{proposition}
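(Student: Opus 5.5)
The proof mirrors that of Proposition~\ref{thm:adaptive-distance-eb}, with the distance decomposition replaced by the weighted suboptimality decomposition of Lemma~\ref{lem:weighted-suboptimality-decomposition}. First, I will invoke that lemma, which is valid under $t_0\ge4$ and $\eta_t = 2/(\mu(t+1))$. It gives, for every $t \ge t_0$,
\[
    S_t\bigl(f(\bar x_t)-f(x^\star)\bigr)
    \le
    \frac{1}{\mu}\sum_{s=t_0}^t \norm{g_s}^2
    +
    \frac{\mu t_0(t_0-1)}{4} Z_{t_0}
    +
    \sum_{s=t_0}^t E_s .
\]
This holds deterministically along every trajectory. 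It therefore suffices to show that, with probability at least $1-\alpha$, simultaneously for all $t \ge t_0$,
\[
    \sum_{s=t_0}^t E_s \le \mathfrak B_\alpha\bigl(V_t^{\mathrm{sub,EB}}, B_t^{\mathrm{sub,EB}}\bigr).
\]
Dividing by $S_t>0$ then yields the claim.

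Second, I will verify the hypotheses of Theorem~\ref{thm:variable-range-eb} for $H_s \coloneqq H_s^{\mathrm{sub}} = s\inner{x^\star-x_s}{g_s}$.

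\emph{Adaptedness.} $H_s$ is $\cF_s$-measurable because $x_s$ is $\cF_{s-1}$-measurable and $g_s$ is $\cF_s$-measurable.

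\emph{Centering.} By Assumption~\ref{assump:stochastic-gradients}(i) and the predictability of $x_s$,
\[
    \E[H_s\mid\cF_{s-1}] = s\inner{x^\star-x_s}{\nabla f(x_s)},
\]
so the centered increment $H_s-\E[H_s\mid\cF_{s-1}]$ equals exactly $E_s$.

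\emph{Predictable range.} By Cauchy--Schwarz and Assumption~\ref{assump:bounded-stochastic-gradients}, $|H_s| \le c_s \coloneqq G s\sqrt{Z_s}$. This $c_s$ is $\cF_{s-1}$-measurable, nonnegative, and finite, since $Z_s \le G^2/\mu^2$ by the argument preceding~\eqref{eq:R_0}, applied at each $s\ge t_0$. The same bound $Z_s\le G^2/\mu^2$ gives $|H_s|\le G^2 s/\mu$, which is deterministic, so integrability holds.

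\emph{Identification of the processes.} The quadratic process of the theorem is $W_t=\sum_{s=t_0}^t (H_s)^2 = V_t^{\mathrm{sub,EB}}$. The running range is $C_t=\max_{t_0\le s\le t} G s\sqrt{Z_s} = B_t^{\mathrm{sub,EB}}$.

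Third, Theorem~\ref{thm:variable-range-eb} then gives exactly the required time-uniform bound on $\overline H_t = \sum_{s=t_0}^t E_s$, with probability at least $1-\alpha$. Substituting it into the decomposition gives $f(\bar x_t)-f(x^\star)\le U_t^{\mathrm{sub,EB}}(\alpha)$ simultaneously for all $t\ge t_0$ on that event.

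There is no real obstacle. The only point requiring care is checking that $Z_s\le G^2/\mu^2$ holds at every $s$, not only at $t_0$. This is where both the finiteness of $c_s$ and the integrability of $H_s$ come from. It follows because $\norm{\nabla f(x_s)}\le G$ by conditional Jensen, and strong convexity together with first-order optimality then give $\mu\norm{x_s-x^\star}\le\norm{\nabla f(x_s)}$.
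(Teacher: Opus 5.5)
Your proposal is correct and follows the paper's proof essentially step for step. Both invoke the weighted decomposition of Lemma~\ref{lem:weighted-suboptimality-decomposition}, identify $H_s^{\mathrm{sub}}-\E[H_s^{\mathrm{sub}}\mid\cF_{s-1}]=E_s$ with quadratic process $V_t^{\mathrm{sub,EB}}$ and running range $B_t^{\mathrm{sub,EB}}$, and apply Theorem~\ref{thm:variable-range-eb}; your explicit checks of predictability, finiteness of $c_s$, and integrability via $Z_s\le G^2/\mu^2$ at every $s\ge t_0$ are slightly more careful than the paper's write-up but the same argument.
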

\begin{proof}
Lemma~\ref{lem:weighted-suboptimality-decomposition} gives, for every $t \ge t_0$,
\begin{equation}
\label{eq:eb-sub-weighted-recursion}
    f(\bar x_t) - f(x^\star)
    \le
    \frac{1}{\mu S_t} \sum_{s = t_0}^t \norm{g_s}^2
    +
    \frac{\mu t_0 (t_0 - 1)}{4 S_t} Z_{t_0}
    +
    \frac{1}{S_t} \sum_{s = t_0}^t E_s.
\end{equation}
By~\eqref{eq:eb-sub-H} and conditional unbiasedness,
\[
    H_s^{\mathrm{sub}}
    -
    \E\left[
        H_s^{\mathrm{sub}}
        \,\middle|\,
        \cF_{s-1}
    \right]
    =
    E_s.
\]
Moreover, the predictable range associated with $H_s^{\mathrm{sub}}$ has running maximum $B_t^{\mathrm{sub,EB}}$, while the corresponding quadratic process is
\[
    \sum_{s = t_0}^t
    \left(H_s^{\mathrm{sub}}\right)^2
    =
    V_t^{\mathrm{sub,EB}}.
\]
Theorem~\ref{thm:variable-range-eb} therefore implies that, with probability at least $1-\alpha$, simultaneously for every $t \ge t_0$,
\[
    \sum_{s = t_0}^t E_s
    \le
    \mathfrak B_\alpha\left(
        V_t^{\mathrm{sub,EB}},
        B_t^{\mathrm{sub,EB}}
    \right).
\]
Substituting this inequality into~\eqref{eq:eb-sub-weighted-recursion} proves the proposition.
\end{proof}


\section{Supporting Lemmas}
\label{sec:additional-proofs}

\begin{lemma}[One-step contractive recursion]
\label{lem:contraction-distance}
Under Assumptions~\ref{assump:strong-convexity} and~\ref{assump:stepsize-bound}, the iterates satisfy
\begin{equation}
\label{eq:contraction-distance}
    Z_{t+1}
    \le
    (1 - 2 \mu \eta_t) Z_t
    +
    \eta_t^2 \norm{g_t}^2
    +
    2 \eta_t \inner{x^\star - x_t}{\xi_t},
    \qquad \forall t \ge t_0.
\end{equation}
\end{lemma}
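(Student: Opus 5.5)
The plan is the standard squared-distance Lyapunov argument for projected SGD, carried out with the noise term kept explicit rather than averaged out.

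\textbf{Step 1 (projection).} Since $x^\star \in \cX$ and the Euclidean projection onto a closed convex set is nonexpansive, $\proj_{\cX}(x^\star) = x^\star$ gives
\[
    Z_{t+1}
    = \norm{\proj_{\cX}(x_t - \eta_t g_t) - \proj_{\cX}(x^\star)}^2
    \le \norm{x_t - \eta_t g_t - x^\star}^2 .
\]

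\textbf{Step 2 (expand the square).} Expanding,
\[
    \norm{x_t - \eta_t g_t - x^\star}^2
    = Z_t + \eta_t^2 \norm{g_t}^2 + 2\eta_t \inner{x^\star - x_t}{g_t}.
\]
Substituting $g_t = \nabla f(x_t) + \xi_t$ splits the cross term into $2\eta_t\inner{x^\star - x_t}{\nabla f(x_t)}$ plus the noise term $2\eta_t \inner{x^\star - x_t}{\xi_t}$, which already appears in \eqref{eq:contraction-distance}.

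\textbf{Step 3 (strong convexity).} It remains to bound the deterministic inner product by $-\mu\eta_t Z_t$ times a constant, specifically to show
\[
    \inner{\nabla f(x_t)}{x^\star - x_t} \le -\mu Z_t .
\]
Apply Assumption~\ref{assump:strong-convexity} twice. With $(x,y) = (x_t, x^\star)$ it gives
\[
    f(x^\star) \ge f(x_t) + \inner{\nabla f(x_t)}{x^\star - x_t} + \tfrac{\mu}{2} Z_t .
\]
With $(x,y) = (x^\star, x_t)$ it gives
\[
    f(x_t) \ge f(x^\star) + \inner{\nabla f(x^\star)}{x_t - x^\star} + \tfrac{\mu}{2} Z_t .
\]
First-order optimality of $x^\star$ over $\cX$ yields $\inner{\nabla f(x^\star)}{x_t - x^\star} \ge 0$, so the second inequality becomes $f(x_t) - f(x^\star) \ge \tfrac{\mu}{2} Z_t$. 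Adding this to the first inequality gives exactly the desired bound.

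\textbf{Step 4 (conclude).} Multiplying by $2\eta_t > 0$ (positivity from Assumption~\ref{assump:stepsize-bound}) turns the deterministic cross term into at most $-2\mu\eta_t Z_t$. Combining Steps 1–3 then gives \eqref{eq:contraction-distance}.

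The only mildly delicate point is Step 3. Using strong convexity once yields only $\mu/2$, which gives a contraction factor $1-\mu\eta_t$ rather than $1-2\mu\eta_t$. Obtaining the full factor requires the symmetrized two-sided application together with the variational optimality condition, because the constrained problem need not have $\nabla f(x^\star)=0$.
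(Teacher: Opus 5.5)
Your proposal is correct and follows essentially the same route as the paper: nonexpansiveness of the projection, expansion of the square, and the bound $\inner{\nabla f(x_t)}{x_t - x^\star} \ge \mu Z_t$ from strong convexity combined with the variational optimality condition at $x^\star$. The only difference is cosmetic. The paper cites strong monotonicity of $\nabla f$ and then splits off $\inner{\nabla f(x^\star)}{x_t - x^\star} \ge 0$, whereas you obtain the same inequality by adding the two strong-convexity inequalities directly, which is exactly how strong monotonicity is proved.
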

\begin{proof}
The proof follows the standard one-step analysis of projected stochastic approximation; see, e.g., \citet[Section~2.1]{nemirovski2009robust}. We retain the directional gradient noise term explicitly for the anytime-valid analysis. Since $x^\star \in \cX$, the nonexpansiveness of the projection gives
\[
    Z_{t+1}
    =
    \norm{x_{t+1} - x^\star}^2
    =
    \norm{\proj_{\cX}(x_t - \eta_t g_t) - x^\star}^2
    \le
    \norm{x_t - \eta_t g_t - x^\star}^2.
\]
Expanding the square,
\[
    Z_{t+1}
    \le
    Z_t
     - 2 \eta_t \inner{x_t - x^\star}{g_t}
    + \eta_t^2 \norm{g_t}^2.
\]
Now, writing $g_t = \nabla f(x_t) + \xi_t$,
\[
    Z_{t+1}
    \le
    Z_t
    - 2 \eta_t \inner{x_t - x^\star}{\nabla f(x_t)}
    + 2 \eta_t \inner{x^\star - x_t}{\xi_t}
    + \eta_t^2 \norm{g_t}^2.
\]
It remains to lower-bound the deterministic gradient term. By the gradient strong monotonicity property,
\[
    \inner{\nabla f(x_t) - \nabla f(x^\star)}{x_t - x^\star}
    \ge
    \mu \norm{x_t - x^\star}^2
    = \mu Z_t.
\]
Hence
\[
    \inner{\nabla f(x_t)}{x_t - x^\star}
    =
    \inner{\nabla f(x_t) - \nabla f(x^\star)}{x_t - x^\star}
    +
    \inner{\nabla f(x^\star)}{x_t - x^\star}.
\]
By the first-order optimality condition at $x^\star$, we have $\inner{\nabla f(x^\star)}{x_t - x^\star}\ge 0$. Therefore,
\[
    \inner{\nabla f(x_t)}{x_t - x^\star}\ge \mu Z_t.
\]
Substituting back proves \eqref{eq:contraction-distance}.
\end{proof}

\begin{lemma}[Last-iterate distance to the optimizer decomposition]
\label{lem:distance-decomposition}
Suppose Assumptions~\ref{assump:strong-convexity} and~\ref{assump:stepsize-bound} hold, and let $\{A_t\}_{t \ge t_0-1}$ be defined as in~\eqref{eq:def-at-At}. For $t \ge t_0$, define
\[
    X_t
    \coloneqq
    2 A_t \eta_t
    \inner{x^\star - x_t}{\xi_t},
    \qquad
    \bar X_t
    \coloneqq
    \sum_{s = t_0}^t X_s.
\]
Then, for every $t \ge t_0$,
\begin{equation}
\label{eq:distance-decomposition-lemma}
    Z_{t+1}
    \le
    \frac{1}{A_t}
    \left[
        Z_{t_0}
        +
        \sum_{s = t_0}^t
        A_s \eta_s^2 \norm{g_s}^2
        +
        \bar X_t
    \right].
\end{equation}
\end{lemma}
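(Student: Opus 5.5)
The plan is to start from the one-step contraction in Lemma~\ref{lem:contraction-distance}, rescale it by the cumulative factor $A_t$, and telescope. Lemma~\ref{lem:contraction-distance} gives, for every $s \ge t_0$,
\[
    Z_{s+1}
    \le
    a_s Z_s
    +
    \eta_s^2 \norm{g_s}^2
    +
    2 \eta_s \inner{x^\star - x_s}{\xi_s},
    \qquad a_s = 1 - 2\mu\eta_s .
\]
By Assumption~\ref{assump:stepsize-bound}, $a_s \in (0,1)$ almost surely. Hence $A_s = \prod_{r=t_0}^s a_r^{-1}$ is well defined, positive, and satisfies the key identity $A_s a_s = A_{s-1}$, with the convention $A_{t_0-1} = 1$.

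Multiplying the one-step inequality by $A_s > 0$ preserves its direction and yields
\[
    A_s Z_{s+1}
    \le
    A_{s-1} Z_s
    +
    A_s \eta_s^2 \norm{g_s}^2
    +
    X_s,
    \qquad
    X_s = 2 A_s \eta_s \inner{x^\star - x_s}{\xi_s}.
\]
So the weighted sequence $A_{s-1} Z_s$ increases by at most the two additive terms at each step.

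Next, sum this inequality over $s = t_0, \dots, t$. The terms $A_{s-1}Z_s$ telescope, leaving
\[
    A_t Z_{t+1}
    \le
    A_{t_0-1} Z_{t_0}
    +
    \sum_{s=t_0}^t A_s \eta_s^2 \norm{g_s}^2
    +
    \bar X_t .
\]
Since $A_{t_0-1} = 1$, dividing by $A_t > 0$ gives \eqref{eq:distance-decomposition-lemma}.

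The only delicate point is the multiplication by $A_s$: it requires $A_s$ to be strictly positive, which holds because every $a_r$ lies in $(0,1)$ from $t_0$ onward. The argument is pathwise, so no measurability or integrability issues arise, and no step presents a real obstacle.
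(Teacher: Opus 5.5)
Your proposal is correct and follows the paper's proof essentially step for step. You take the one-step contraction of Lemma~\ref{lem:contraction-distance}, multiply by $A_s>0$ using $A_s a_s = A_{s-1}$, telescope over $s=t_0,\dots,t$ with $A_{t_0-1}=1$, and divide by $A_t>0$, which positivity Assumption~\ref{assump:stepsize-bound} guarantees.
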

\begin{proof}
A closely related unrolling of the contractive SGD recursion for the classical stepsize $\eta_t = 1/(\mu t)$ appears in \citet[Appendix~B.7, Lemma~6]{rakhlin2012making}; the decomposition below records the same telescoping mechanism for general predictable stepsizes. By Lemma~\ref{lem:contraction-distance}, for every $s \ge t_0$,
\[
    Z_{s+1}
    \le
    (1 - 2 \mu \eta_s) Z_s
    +
    \eta_s^2 \norm{g_s}^2
    +
    2 \eta_s \inner{x^\star - x_s}{\xi_s}.
\]
Multiplying by $A_s$ and using $A_s (1 - 2 \mu \eta_s) = A_{s-1}$, we obtain
\[
    A_s Z_{s+1}
    \le
    A_{s-1} Z_s
    +
    A_s \eta_s^2 \norm{g_s}^2
    +
    X_s.
\]
Summing over $s = t_0, \ldots, t$ gives
\[
    A_t Z_{t+1}
    -
    A_{t_0 - 1} Z_{t_0}
    \le
    \sum_{s = t_0}^t
    A_s \eta_s^2 \norm{g_s}^2
    +
    \bar X_t.
\]
Since $A_{t_0 - 1} = 1$, we obtain
\[
    A_t Z_{t+1}
    \le
    Z_{t_0}
    +
    \sum_{s = t_0}^t
    A_s \eta_s^2 \norm{g_s}^2
    +
    \bar X_t.
\]
Finally, Assumption~\ref{assump:stepsize-bound} implies $A_t>0$, so dividing by $A_t$ proves~\eqref{eq:distance-decomposition-lemma}.
\end{proof}

\begin{lemma}[Weighted-average suboptimality decomposition]
\label{lem:weighted-suboptimality-decomposition}
Suppose Assumption~\ref{assump:strong-convexity} holds. Let $t_0 \ge 4$ and $\eta_t = 2/(\mu(t+1))$ for every $t \ge t_0$. Define
\[
    E_t
    \coloneqq
    t\inner{x^\star - x_t}{\xi_t},
    \qquad
    \bar E_t
    \coloneqq
    \sum_{s = t_0}^t E_s.
\]
Then, for every $t\ge t_0$,
\[
    f(\bar x_t) - f(x^\star)
    \le
    \frac{1}{\mu S_t}
    \sum_{s = t_0}^t
    \norm{g_s}^2
    +
    \frac{\mu t_0(t_0 - 1)}{4S_t}Z_{t_0}
    +
    \frac{1}{S_t}\bar E_t.
\]
\end{lemma}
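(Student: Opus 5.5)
The plan is to follow the weighted telescoping argument of \citet{lacoste2012simpler}, keeping the directional noise term explicit instead of taking expectations. The starting point is the one-step projected-SGD inequality obtained from the nonexpansiveness of $\proj_{\cX}$, exactly as in the proof of Lemma~\ref{lem:contraction-distance}:
\[
    Z_{s+1}
    \le
    Z_s
    - 2\eta_s \inner{g_s}{x_s - x^\star}
    + \eta_s^2 \norm{g_s}^2 .
\]
Writing $g_s = \nabla f(x_s) + \xi_s$ and rearranging isolates
\[
    \inner{\nabla f(x_s)}{x_s - x^\star}
    \le
    \frac{Z_s - Z_{s+1}}{2\eta_s}
    + \frac{\eta_s}{2}\norm{g_s}^2
    + \inner{x^\star - x_s}{\xi_s}.
\]
Strong convexity (Assumption~\ref{assump:strong-convexity}) with $y = x^\star$, $x = x_s$ gives
\[
    f(x_s) - f(x^\star)
    \le
    \inner{\nabla f(x_s)}{x_s - x^\star}
    - \frac{\mu}{2} Z_s .
\]
Substituting $\eta_s = 2/(\mu(s+1))$ yields the per-step bound
\[
    f(x_s) - f(x^\star)
    \le
    \frac{\mu(s-1)}{4} Z_s
    - \frac{\mu(s+1)}{4} Z_{s+1}
    + \frac{1}{\mu(s+1)}\norm{g_s}^2
    + \inner{x^\star - x_s}{\xi_s}.
\]
This step uses only strong convexity and the projection, so Assumption~\ref{assump:stepsize-bound} is not needed here.

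Next, multiply the per-step bound by $s$. The distance terms become $\frac{\mu}{4}\bigl[s(s-1)Z_s - (s+1)s Z_{s+1}\bigr]$, which telescope when summed over $s = t_0, \ldots, t$. The sum equals $\frac{\mu}{4}\bigl[t_0(t_0-1)Z_{t_0} - t(t+1)Z_{t+1}\bigr]$, and dropping the nonpositive last term leaves $\frac{\mu t_0(t_0-1)}{4}Z_{t_0}$. The gradient terms become $\frac{s}{\mu(s+1)}\norm{g_s}^2 \le \frac{1}{\mu}\norm{g_s}^2$. The noise terms become exactly $E_s = s\inner{x^\star - x_s}{\xi_s}$, whose sum is $\bar E_t$.

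Finally, divide by $S_t$. The left-hand side is then $\bar F_t$, and convexity (Jensen's inequality for the weights $s/S_t$) gives $f(\bar x_t) - f(x^\star) \le \bar F_t$, which is the stated bound.

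The only delicate point is the coefficient bookkeeping. One must check that the choice $\eta_s = 2/(\mu(s+1))$ makes the factor $\frac{1}{2\eta_s} - \frac{\mu}{2} = \frac{\mu(s-1)}{4}$ multiplying $Z_s$ line up after weighting by $s$. Indeed, $s \cdot \frac{\mu(s-1)}{4}$ must equal $(s-1) \cdot \frac{\mu s}{4}$, the coefficient of $Z_s$ in the previous step, and it does. This alignment is what makes the telescoping exact. The sign of $\bar E_t$ is left untouched, since it is later controlled by Lemma~\ref{prop:dyadic-stitching} and Theorem~\ref{thm:variable-range-eb}.
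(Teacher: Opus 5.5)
Your proposal is correct and follows essentially the same route as the paper's proof. Both arguments use the nonexpansive projection step, the strong-convexity lower bound on $\inner{\nabla f(x_s)}{x_s - x^\star}$, and the substitution $\eta_s = 2/(\mu(s+1))$. Both then multiply by $s$, bound $s/(s+1) \le 1$, telescope the weighted distance terms while dropping $-\tfrac{\mu}{4}t(t+1)Z_{t+1}$, and finish with Jensen's inequality for the weights $s/S_t$.
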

\begin{proof}
The proof builds on the weighted telescoping argument of~\citet[Section~3.2]{lacoste2012simpler}, adapted here to retain the directional gradient noise term pathwise rather than taking expectations. Let
\[
    E_{t_0-1} \coloneqq 0,
    \qquad
    E_s \coloneqq s\,\inner{x^\star - x_s}{\xi_s}
    \qquad
    \bar E_t \coloneqq \sum_{s=t_0}^t E_s,
    \qquad t \ge t_0.
\]
Since $x^\star \in \cX$, the nonexpansiveness of the projection gives
\[
    Z_{t+1}
    =
    \norm{x_{t+1} - x^\star}^2
    =
    \norm{\proj_{\cX}(x_t - \eta_t g_t) - x^\star}^2
    \le
    \norm{x_t - \eta_t g_t - x^\star}^2.
\]
Expanding the square,
\[
    Z_{t+1}
    \le
    Z_t
    -2 \eta_t \inner{x_t - x^\star}{g_t}
    + \eta_t^2 \norm{g_t}^2.
\]
Writing $g_t = \nabla f(x_t) + \xi_t$ and using strong convexity,
\[
    \inner{\nabla f(x_t)}{x_t - x^\star}
    \ge
    f(x_t) - f(x^\star)+\frac{\mu}{2} Z_t,
\]
we obtain
\[
    2 \eta_t \bigl(f(x_t) - f(x^\star)\bigr)
    \le
    \eta_t^2 \norm{g_t}^2
    + (1 - \mu\eta_t) Z_t
    -Z_{t+1}
    +2 \eta_t \inner{x^\star - x_t}{\xi_t}.
\]
Now substitute $\eta_t = 2/(\mu(t+1))$. Since
\[
    \frac{\eta_t}{2} = \frac{1}{\mu(t+1)},
    \qquad
    \frac{1 - \mu \eta_t}{2 \eta_t} = \frac{\mu(t-1)}{4},
    \qquad
    \frac{1}{2 \eta_t}=\frac{\mu(t+1)}{4},
\]
it follows that
\[
    f(x_t) - f(x^\star)
    \le
    \frac{\norm{g_t}^2}{\mu(t+1)}
    +
    \frac{\mu}{4}\bigl((t-1)Z_t - (t+1)Z_{t+1}\bigr)
    +
    \inner{x^\star - x_t}{\xi_t}.
\]
Multiplying by $t$ and using $t/(t+1)\le 1$, we obtain
\[
    t \bigl(f(x_t) - f(x^\star)\bigr)
    \le
    \frac{1}{\mu} \norm{g_t}^2
    +
    \frac{\mu}{4} \bigl(t(t-1) Z_t - t(t+1) Z_{t+1}\bigr)
    +
    t\, \inner{x^\star - x_t}{\xi_t}.
\]
Summing from $s = t_0$ to $t$, and using
\[
    \sum_{s=t_0}^t \bigl(s(s-1) Z_s - s(s+1) Z_{s+1}\bigr)
    =
    t_0 (t_0 - 1) Z_{t_0} - t(t+1) Z_{t+1},
\]
we obtain
\[
    \sum_{s = t_0}^t s \bigl(f(x_s) - f(x^\star)\bigr)
    \le
    \frac{1}{\mu} \sum_{s = t_0}^t \norm{g_s}^2
    +
    \frac{\mu}{4} \Bigl(t_0(t_0-1) Z_{t_0} - t(t+1) Z_{t+1}\Bigr)
    +
    \bar E_t.
\]
Dropping the negative terminal term yields
\[
    \sum_{s=t_0}^t s \bigl(f(x_s) - f(x^\star)\bigr)
    \le
    \frac{1}{\mu} \sum_{s=t_0}^t \norm{g_s}^2
    +
    \frac{\mu t_0 (t_0-1)}{4} Z_{t_0}
    +
    \bar E_t.
\]
Dividing by $S_t$ and using the convexity of $f$ we obtain
\begin{equation}
\label{eq:subopt-S_t}
    f(\bar x_t) - f(x^\star)
    \le
    \frac{1}{\mu S_t}\sum_{s = t_0}^t \norm{g_s}^2
    +
    \frac{\mu t_0 (t_0-1)}{4S_t} Z_{t_0}
    +
    \frac{1}{S_t} \bar E_t.
\end{equation}
\end{proof}

\end{document}